\documentclass[11pt]{article}
\usepackage[main,preprint]{neurips_2026}
\usepackage{amsmath,amsfonts,amssymb,amsthm,commath,dsfont,nicefrac,xfrac,mathtools,mathrsfs}
\usepackage[T1]{fontenc}
\usepackage{float}
\usepackage{bm,bbm}
\usepackage[inline]{enumitem}
\usepackage{framed}
\usepackage{xspace}
\usepackage{microtype}
\usepackage[dvipsnames]{xcolor}
\usepackage{colortbl}
\usepackage[toc,page]{appendix}
\usepackage{svg}
\usepackage{booktabs}       
\usepackage{icomma}
\usepackage{multirow}
\usepackage{diagbox}
\usepackage{hyperref}
\hypersetup{
  colorlinks=true,
  linkcolor=black,
  citecolor=blue,
  filecolor=magenta,
  urlcolor=blue
}
\usepackage{cleveref}
\usepackage{adjustbox}
\usepackage{caption}
\usepackage{subcaption}
\crefname{assumption}{assumption}{assumptions}
\Crefname{assumption}{Assumption}{Assumptions}
\Crefname{equation}{Eq.\!}{Eqs.\!}
\Crefname{subfigure}{Figure}{Figures}
\Crefname{subtable}{Table}{Tables}

\usepackage[linesnumbered,lined,vlined,ruled,commentsnumbered,resetcount,algosection]{algorithm2e}
\SetKwComment{Comment}{// }{}
\SetArgSty{textsl}

\crefname{algocfline}{algorithm}{algorithms}
\Crefname{algocfline}{Algorithm}{Algorithms}
\counterwithin{algocfline}{section}

\makeatletter
\def\itemautorefname{\@gobble}
\makeatother

\usepackage{thmtools}
\theoremstyle{plain}
\newtheorem{theorem}{Theorem}[section]
\newtheorem*{theorem*}{Theorem}
\newtheorem{proposition}[theorem]{Proposition}
\newtheorem{proposition*}{Proposition}
\newtheorem{lemma}[theorem]{Lemma}

\theoremstyle{plain}

\newtheorem{assumption}{Assumption}
\newtheorem*{assumption*}{Assumption}
\theoremstyle{plain}
\newtheorem{remark}[theorem]{Remark}

\usepackage{graphicx}
\usepackage{wrapfig}

\newcommand{\memcost}[1]{\textcolor{RoyalBlue}{#1}}

\renewcommand{\parallel}{\mathrel{/\mkern-5mu/}}
\makeatletter
\newcommand{\notparallel}{%
  \mathrel{\mathpalette\not@parallel\relax}%
}
\newcommand{\not@parallel}[2]{%
  \ooalign{\reflectbox{\(\m@th#1\smallsetminus\)}\cr\hfil\(\m@th#1\parallel\)\cr}%
}
\makeatother

\usepackage{tikz}
\usetikzlibrary{arrows.meta,positioning,calc,decorations.pathreplacing,decorations.pathmorphing}
\usepackage{pgfplots}
\pgfplotsset{compat=1.18}
\usepgfplotslibrary{groupplots}

\usepackage{tikz-cd}

\tikzset{curve/.style={settings={#1},to path={(\tikztostart)
          .. controls ($(\tikztostart)!\pv{pos}!(\tikztotarget)!\pv{height}!270:(\tikztotarget)$)
          and ($(\tikztostart)!1-\pv{pos}!(\tikztotarget)!\pv{height}!270:(\tikztotarget)$)
          .. (\tikztotarget)\tikztonodes}},
  settings/.code={\tikzset{quiver/.cd,#1}
      \def\pv##1{\pgfkeysvalueof{/tikz/quiver/##1}}},
  quiver/.cd,pos/.initial=0.35,height/.initial=0}

\tikzset{tail reversed/.code={\pgfsetarrowsstart{tikzcd to}}}
\tikzset{2tail/.code={\pgfsetarrowsstart{Implies[reversed]}}}
\tikzset{2tail reversed/.code={\pgfsetarrowsstart{Implies}}}
\tikzset{no body/.style={/tikz/dash pattern=on 0 off 1mm}}

\newcommand\quotient[2]{
  \mathchoice
  {
    \text{\raise1ex\hbox{$#1$}\Big/\lower1ex\hbox{$#2$}}%
  }
  {
    #1\,/\,#2
  }
  {
    #1\,/\,#2
  }
  {
    #1\,/\,#2
  }
}

\DeclareMathOperator*{\argmax}{arg\,max}
\DeclareMathOperator*{\argmin}{arg\,min}

\DeclareMathOperator{\Cov}{Cov}
\DeclareMathOperator{\diag}{diag}
\DeclareMathOperator{\MSE}{MSE}
\DeclareMathOperator{\tr}{tr}

\DeclareMathOperator{\Proj}{Proj}

\usepackage{import}
\usepackage{xifthen}
\usepackage{pdfpages}
\usepackage{transparent}

\usepackage{tcolorbox}

\definecolor{lightgreen}{RGB}{232, 245, 233}
\definecolor{lightred}{RGB}{255, 235, 205}
\definecolor{dreamcolor}{RGB}{217, 128, 110}
\definecolor{controlcolor}{RGB}{217, 128, 110}

\usepackage{tcolorbox}
\newenvironment{takeaway}[1][]
{
  \begin{tcolorbox}
    [%
      title=Takeaway:,
      attach title to upper={\ },
      fonttitle=\bfseries,
      coltitle=black,
      boxrule=0.5pt,
      arc=4pt,
      left=1pt,
      right=1pt,
      bottom=2pt,
      top=2pt,
      grow to left by=-0.01cm,
      grow to right by=-0.01cm,
      rounded corners
    ]{}
    }
    {
  \end{tcolorbox}
}

\title{\textbf{Dr.\ Post-Training} \raisebox{-0.2em}{\includegraphics[height=1em]{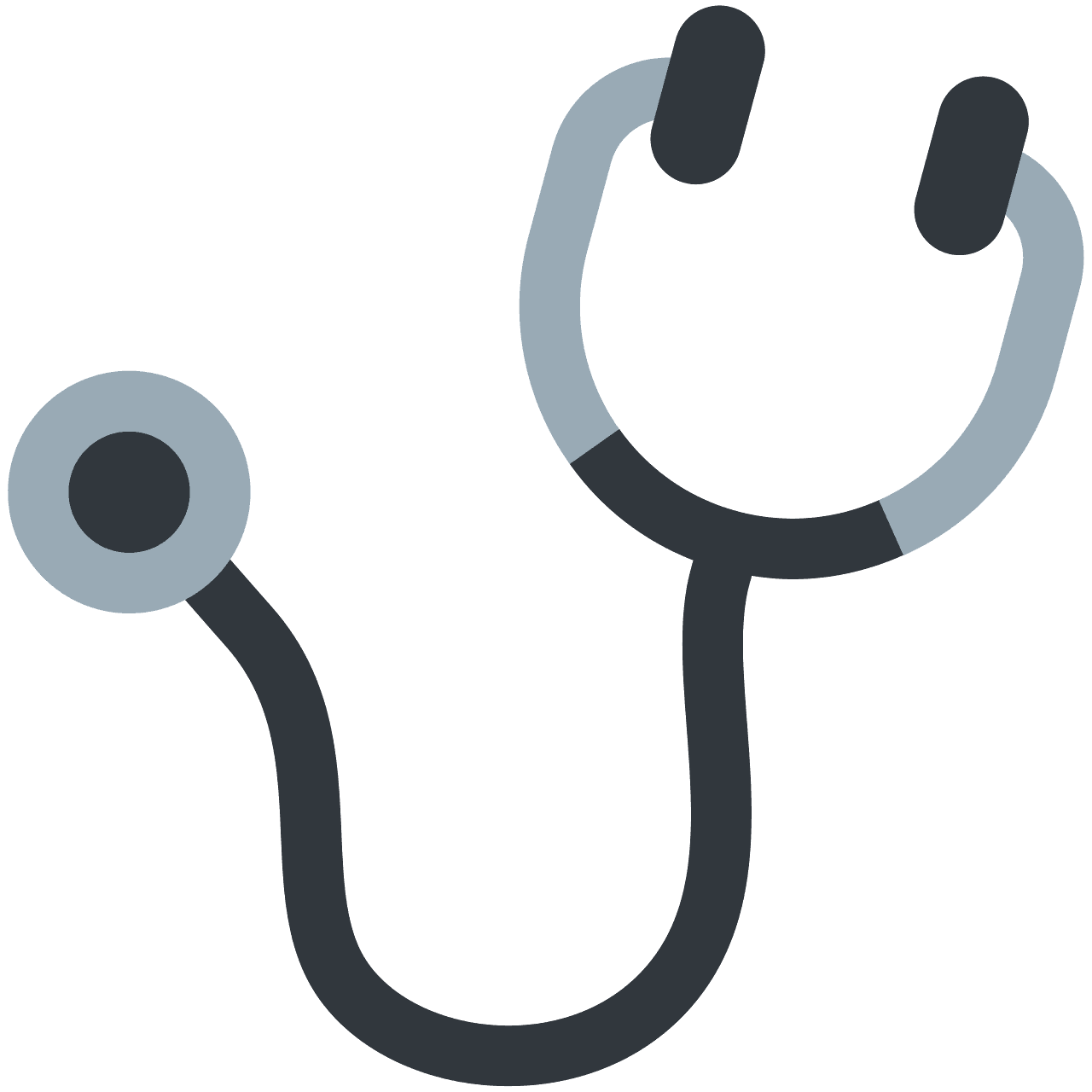}}: A \textbf{D}ata \textbf{R}egularization Perspective on LLM Post-Training}

\author{%
    Pingbang Hu\textsuperscript{\textnormal{1}}\thanks{Corresponding to: Pingbang Hu and Jiaqi W.\ Ma}\quad
    Xueshen Liu\textsuperscript{\textnormal{2}}\quad
    Z.\ Morley Mao\textsuperscript{\textnormal{2}}\quad
    Jiaqi W.\ Ma\textsuperscript{\textnormal{1}}\footnotemark[1]\\
    \textsuperscript{1}University of Illinois Urbana--Champaign \quad
    \textsuperscript{2}University of Michigan \\
    \texttt{\{pbb,jiaqima\}@illinois.edu}\quad
    \texttt{\{liuxs,zmao\}@umich.edu}
}

\begin{document}

\maketitle

\begin{abstract}
    Data selection methods address a critical challenge in LLM post-training: effectively leveraging scarce, high-fidelity target data alongside abundant but imperfectly aligned general training data. In this work, we move beyond the data-selection framing and introduce \textbf{Dr.\ Post-Training} (\textbf{D}ata-\textbf{R}egularized Post-Training), a novel framework that reconceptualizes general training data as a data-induced regularizer that prevents overfitting to the scarce target objective, rather than serving as a pool for selection. Specifically, our framework proposes that at each training step, construct a feasible set of model update directions using the general training data, and project the model update direction specified by the scarce target data onto that feasible set. Standard training and existing data selection methods arise as special cases with different choices of the data-induced regularizer, and these methods correspond to different points on a bias--variance spectrum with different regularization strength. Building on this view, we propose a family of methods offering a richer design space and more flexible bias--variance tradeoffs. For practical LLM-scale use, we introduce careful system optimizations that realize these methods with minimal overhead. Extensive experiments across SFT, RLHF, and RLVR show that our methods consistently outperform state-of-the-art data selection baselines, and system benchmarks confirm their efficiency.
\end{abstract}
\section{Introduction}\label{sec:introduction}
\emph{Post-training} is a critical stage where pretrained large language models (LLMs) are adapted to desired downstream behaviors, including instruction following, dialogue, safety alignment, and domain-specific reasoning. It is typically carried out through supervised fine-tuning (SFT)~\citep{mishra2022cross,muennighoff2024generative} or reinforcement learning (RL) variants such as reinforcement learning from human feedback (RLHF)~\citep{christiano2017deep,ouyang2022training} and reinforcement learning with verifiable rewards (RLVR)~\citep{guo2025deepseek}. In many practical post-training settings, however, the data that most directly reflects the downstream objective is scarce: high-quality demonstrations~\citep{ouyang2022training}, domain-specific examples~\citep{ling2025domain}, preference labels~\citep{casper2023open}, or task-specific validation data~\citep{perez2021true} are often expensive to obtain. On the other hand, practitioners often have access to a large pool of general training data, such as broad instruction-tuning mixtures~\citep{wang2023far,lambert2024tulu3}, which are abundant and cheaper but only partially aligned with the target objective~\citep{gururangan2020don,xia2024less,he2024what}. This creates a natural tradeoff in post-training: training on the scarce target data is faithful to the downstream objective but statistically noisy, while training on the abundant general data is more stable but may be biased by the misalignment with the target.

This tradeoff has led to a growing literature on data-centric methods for post-training, including heuristic filtering~\citep{bai2022training,ethayarajh2022understanding}, data mixture optimization~\citep{iyer2022opt,wang2023far}, offline data selection~\citep{he2024what,xia2024less}, and online data selection~\citep{wang2024greats,hu2025a}. Despite their algorithmic differences, most existing approaches are operationalized through prioritizing ``high-utility'' samples from the general training data according to some notion of utility, which makes it natural to frame the problem as \emph{data selection}~\citep{albalak2024a}.

However, the data selection perspective obscures a more fundamental question: how to effectively leverage abundant but imperfectly aligned general training data to improve a target objective defined by much scarcer data. Asking how this general dataset should influence the model's updates, rather than just which data to select, opens up a much broader design space.

In this work, we introduce \textbf{Dr.} (\textbf{D}ata-\textbf{R}egularized) Post-Training, a framework that formalizes this idea. At each training step, the target signal specifies the ideal direction for the model's update, while the general training data constrains the update by projecting it onto a feasible set supported by the available training data. From this viewpoint, general training data act as a data-induced \emph{regularizer}: they do not define the target objective itself, but restrict the admissible ways the model can move toward it, preventing overfitting to the scarce target data. 

Interestingly, standard training and existing data selection methods arise as special cases of the proposed framework with different choices of the data-induced regularizer, and they can be viewed as different points on a bias--variance spectrum, corresponding to different regularization strengths. Building on this framework, we further propose new methods with \emph{Group-Wise Subset Update} that explicitly tune the strength of this data regularization, enabling more flexible tradeoffs between approximation bias and statistical variance.

Making this broader design space useful in practice requires addressing a second challenge: efficient implementation at LLM scale. Data-regularized updates require comparing training samples against the target signal and then assembling the final update from the selected samples. These operations introduce per-sample dependencies that are absent from standard training, where backpropagation usually computes only the aggregate batch gradient. A naive implementation would either run additional forward--backward passes or retain much more intermediate information than standard training, both of which are undesirable under the memory constraints of modern post-training pipelines.

To overcome this challenge, we conduct careful system-level optimizations to develop an efficient realization of Dr.\ Post-Training. We first develop a customized tensor lifetime scheduling strategy that selectively retains and releases intermediate quantities in the computation graph to obtain the data-regularized update within one forward--backward pass with minimal memory overhead. We further develop efficient approximate algorithms that significantly mitigate the main runtime bottleneck. We also demonstrate the compatibility between the proposed methods and modern memory-saving techniques such as LoRA finetuning or activation checkpointing.

Empirically, we validate Dr.\ Post-Training across the three post-training paradigms: SFT, RLHF, and RLVR. Across all settings, our proposed methods consistently outperform both standard training and prior state-of-the-art data-centric baselines, while incurring minimal system overhead.

In summary, our contributions are:
\begin{itemize}[leftmargin=*]
    \item \textbf{Framework.} We introduce \textbf{Dr.\ Post-Training}, a novel post-training framework that employs abundant yet imperfectly aligned general training data as data regularizers, with existing data selection methods as special cases.
    \item \textbf{Method.} Based on the Dr.\ Post-Training framework, we propose \emph{Group-Wise Subset Update}, which provides a broader design space of data regularizers with more flexible tradeoffs between approximation bias and statistical variance.
    \item \textbf{System.} We show that the proposed methods admit efficient implementations under the memory constraints of modern LLM training.
    \item \textbf{Experiments.} We demonstrate consistent improvements of downstream performance over strong baselines across SFT, RLHF, and RLVR, and conduct extensive system benchmarking to validate the efficiency of our implementation.
\end{itemize}
\section{Related Work}\label{sec:related-work}
\subsection{Data Optimization for LLM Post-Training}\label{subsec:data-selection-for-LLM-post-training}
Data optimization for LLM post-training aims to improve downstream performance by prioritizing high-utility training examples from large, heterogeneous corpora. We summarize several common families and refer readers to \citet{albalak2024a,deng2025survey} for a comprehensive overview.

Most existing methods approach this as a \emph{data selection} problem, leveraging downstream performance-related signals to determine which training examples the model sees. These span heuristic filtering and quality scoring~\citep{bai2022training,ethayarajh2022understanding,chen2024alpagasus,ivison2023data,lu2024instag,kung2023active}, dataset-level mixture design~\citep{cao2023instruction,wei2022finetuned,iyer2022opt,wang2023far}, and offline example-level selection via influence-style estimators, validation-based scoring, or importance weighting~\citep{xia2024less,he2024what,wang2024helpful,koh2017understanding,pruthi2020estimating,ghorbani2019data}. Online methods adapt selection during training using per-step signals (e.g., loss- or gradient-based) to dynamically choose or weight examples~\citep{wang2024greats,jiao2025feasibility,hu2025a}, better tracking non-stationary utility at the cost of tighter compute constraints. In contrast, our proposed framework takes the \emph{dual} perspective, where the training examples are used to regularize the admissible downstream performance-driven updates, and unlocks a novel and broader design space.

\subsection{Memory Saving Techniques in LLM Post-Training}\label{subsec:memory-saving-techniques-in-LLM-post-training}
Modern LLM training is typically under stringent computational constraints, especially memory bottlenecks. Extensive research effort has therefore been devoted to parameter-efficient methods and memory-saving scheduling techniques that allow post-training to fit within available hardware budgets. 

A common strategy restricts learning to a low-dimensional subspace. Low-Rank Adaptation (LoRA)~\citep{hu2022lora} injects trainable low-rank adapters into selected layers while keeping the pretrained parameters frozen, significantly reducing memory and compute requirements~\citep{han2024parameterefficient,renduchintala2024tied,sheng2023slora,zhang2023lora,xia2024chain,wang2023multilora,dettmers2023qlora}. Complementary to LoRA, Memory-efficient Subspace Optimization (MeSO) methods~\citep{zhao2024galore,muhamed2024grass,he2025subspace} instead compress optimizer states and gradients by projecting them into a low-dimensional representation, applying updates in that space, and mapping them back to the original parameter space for actual parameter updates. We provide further details in \Cref{adxsec:related-work}.

Complementary to low-dimensional subspace methods, memory-saving techniques restructure how computation is scheduled to further lower peak memory. Activation checkpointing~\citep{chen2016training} discards intermediate activations during the forward pass and recomputes them on-the-fly during the backward pass, trading additional computation for reduced memory footprint. Another technique, called gradient accumulation, is widely adopted in popular large-scale training frameworks~\citep{rasley2020deepspeed,vonwerra2022trl,sheng2024hybridflow}, partitions a large batch into micro-batches processed sequentially, accumulating gradients across micro-batches to simulate a larger effective batch size at a fraction of the peak memory cost. Both techniques preserve the optimization trajectory exactly and are widely composed with parameter-efficient methods in practice, making the resulting memory management landscape particularly intricate for methods that introduce additional per-sample computational dependencies beyond standard training.

In this work, we demonstrate that our proposed method integrates seamlessly with these techniques, showing its compatibility with common post-training pipelines (\Cref{subsec:compatibility}).
\section{Dr.\ Post-Training \texorpdfstring{\raisebox{-0.2em}{\includegraphics[height=1em]{Figures/icon.png}}}{}: A Data Regularization Framework}\label{sec:method}
We now introduce the \textbf{Dr.\ Post-Training} framework, which views general training data \emph{not} as a pool of examples to select from, but as a \emph{data regularizer} for optimizing a target objective. We consider a post-training setup (\Cref{subsec:problem-setting}) with access to two data sources: abundant general training data and limited target data. The framework is formulated through an iterative optimization procedure (\Cref{subsec:data-regularization-post-training-framework}) where each step draws a training batch and a target batch from these two sources. The target batch specifies the objective, while the training batch regularizes the optimization by defining a \emph{feasible set} of parameter update directions. Under this framework, standard training and existing data selection methods arise as special cases through different definitions of the feasible set induced by the training batch (\Cref{subsubsec:existing-methods}). Moreover, the framework suggests a broader design space that explicitly tunes the strength of data regularization, leading to a new family of methods (\Cref{subsubsec:group-wise-decomposition}). Next, we show that the methods within this framework lie on a spectrum with a bias--variance tradeoff (\Cref{subsec:bias-variance-tradeoffs}). Finally, we conclude this section with a remark on how the proposed framework may generalize to broader training paradigms (\Cref{subsec:beyond-two-distribution}).

\subsection{Problem Setup}\label{subsec:problem-setting}
Consider a post-training setup where a practitioner has access to two data distributions: a \emph{general training distribution}, from which abundant training data can be drawn, and a \emph{target distribution}, for which only limited data are available. The target distribution represents the downstream objective of interest, while the general training distribution provides a broader but potentially misaligned source of supervision. Our goal is to understand how to best leverage the abundant general training data to improve performance on the target objective.

This setup arises naturally in several common post-training scenarios:
\begin{enumerate*}[label=(\roman*)]
    \item general-purpose SFT, where a curated mixture of diverse tasks defines the training distribution and a specific downstream benchmark serves as the target~\citep{lambert2024tulu3,grattafiori2024llama,qwen3technicalreport},
    \item domain-adaptive fine-tuning, where a broad general corpus supplements scarce domain-specific data~\citep{gururangan2020don}, and
    \item targeted instruction tuning, where representative examples from the target task guide selection from a large instruction corpus~\citep{xia2024less,he2024what,wang2024greats}.
\end{enumerate*}

\paragraph{Notation.}
We now introduce the necessary notation to formalize this setup. Let \(\ell(\theta; z)\) be a differentiable loss function over model parameters \(\theta \in \mathbb{R}^d\) and a data sample \(z\). Let the target distribution be \(\mathbb{P}_{\star}\), the corresponding population loss and the gradient are then defined as \(\mathcal{L}_\star(\theta) \coloneqq \mathbb{E}_{z \sim \mathbb{P}_\star}[\ell(\theta; z)]\) and \(g_\star(\theta) \coloneqq \nabla_\theta \mathcal{L}_\star(\theta) \in \mathbb{R}^d\). Similarly, for the general training distribution \(\mathbb{P}_{\mathrm{tr}}\), we define \(g_{\mathrm{tr}}(\theta) \coloneqq \nabla_{\theta} \mathcal{L}_{\mathrm{tr}}(\theta)\) such that \(\mathcal{L}_{\mathrm{tr}}(\theta) \coloneqq \mathbb{E}_{z \sim \mathbb{P}_{\mathrm{tr}}}[\ell(\theta ; z)]\).

\subsection{The Data-Regularization Framework for Post-Training}\label{subsec:data-regularization-post-training-framework}
We now formalize the central asymmetry in our setting: the target data determine \emph{what} objective should be optimized, while the general training data determine \emph{how} the update is allowed to move. Our framework adopts an iterative optimization view in which, at each step, the update direction is chosen to improve the target objective, but only from a set of directions induced by the training batch. In this way, the training data act as a \emph{data regularizer}: they do not specify the target objective itself, but instead constrain the admissible target-driven updates.

\subsubsection{Framework}
\paragraph{Target-driven one-step update.}
Modern post-training is carried out by iterative optimization~\citep{wei2022finetuned,schulman2017proximal,ouyang2022training,shao2024deepseekmath}, where at step \(t\), the model parameters \(\theta _t\) is updated via
\[
    \theta_{t+1} = \theta_t - \eta_t u_t,
\]
where \(\eta_t > 0\) is the step size and \(u_t \in \mathbb{R}^d\) is the update direction. Since the target objective is the ultimate quantity of interest, we adopt a one-step target-driven view and choose the update direction to minimize the next-step target loss within a prescribed \emph{feasible set} \(U_t\):
\begin{equation}\label{eq:opt}
    u_t
    = \argmin_{u \in U_t} \mathcal{L}_\star(\theta_t - \eta_t u).
\end{equation}
The role of the feasible set is central: it determines which update directions are allowed at step \(t\).

\paragraph{Training-batch-induced feasible set.}
At each step, we draw two batches:
\[
    B_t = \{z_i\}_{i=1}^n \stackrel{\mathrm{i.i.d.}}{\sim} \mathbb{P}_{\mathrm{tr}},
    \qquad
    B_t^\star = \{z_j^\star\}_{j=1}^m \stackrel{\mathrm{i.i.d.}}{\sim} \mathbb{P}_{\star}.
\]
The training batch \(B_t\) is used to construct the feasible set \(U_t\), while the target batch \(B_t^\star\) is used to evaluate which direction in \(U_t\) best decreases the target loss \(\mathcal{L}_\star\).

More concretely, for any set of data \(B\), let \(g_B(\theta_t)\) denotes the batch gradient \(\frac{1}{\lvert B \rvert } \sum_{z \in B} \nabla _{\theta } \ell (\theta _t; z)\), and for convenience, we further write
\[
    g_i(\theta_t) \coloneqq g_{\{z_i\}}(\theta_t) = \nabla_\theta \ell(\theta_t; z_i), \qquad
    g_j^\star(\theta_t) \coloneqq g_{\{z_j^{\star}\}}(\theta_t) =\nabla_\theta \ell(\theta_t; z_j^\star)
\]
denote the per-sample gradients from the training and target batches, respectively, and define
\[
    \hat{g}_{\mathrm{tr}}(\theta_t) \coloneqq g_{B_t}(\theta_t) = \frac{1}{n}\sum_{i=1}^n g_i(\theta_t), \qquad
    \hat{g}_{\star}(\theta_t) \coloneqq g_{B_t^{\star}}(\theta_t) = \frac{1}{m}\sum_{j=1}^m g_j^\star(\theta_t).
\]
We fix a step \(t\) and omit writing \(\theta_t\) when it is clear from the context hereafter, e.g., \(g_i\), \(g_j^{\star}\), \(\hat{g}_{\mathrm{tr}}\), \(\hat{g}_{\star}\), and also the population gradients \(g_\star \coloneqq g_\star(\theta_t)\) and \(g_{\mathrm{tr}} \coloneqq g_{\mathrm{tr}}(\theta_t)\).

The key modeling choice in our framework is that \(U_t\) depends only on the training batch \(B_t\), typically through the collection of training gradients \(\{g_i\}_{i=1}^n\), whereas the target batch \(B_t^\star\) enters only through the target objective via information such as \(\hat{g}_{\star}\), as we will soon see. This separation makes explicit the distinct roles of the two data sources: the target batch identifies the desired direction of improvement, while the training batch constrains which directions are admissible.

\subsubsection{The Dual Perspective of Data Selection and Data Regularization}
The formulation above admits a dual perspective of \emph{data selection} and \emph{data regularization}, as illustrated in \Cref{fig:dr-post-training}.

\begin{figure}[htpb]
    \centering
  \def\svgwidth{\linewidth}
  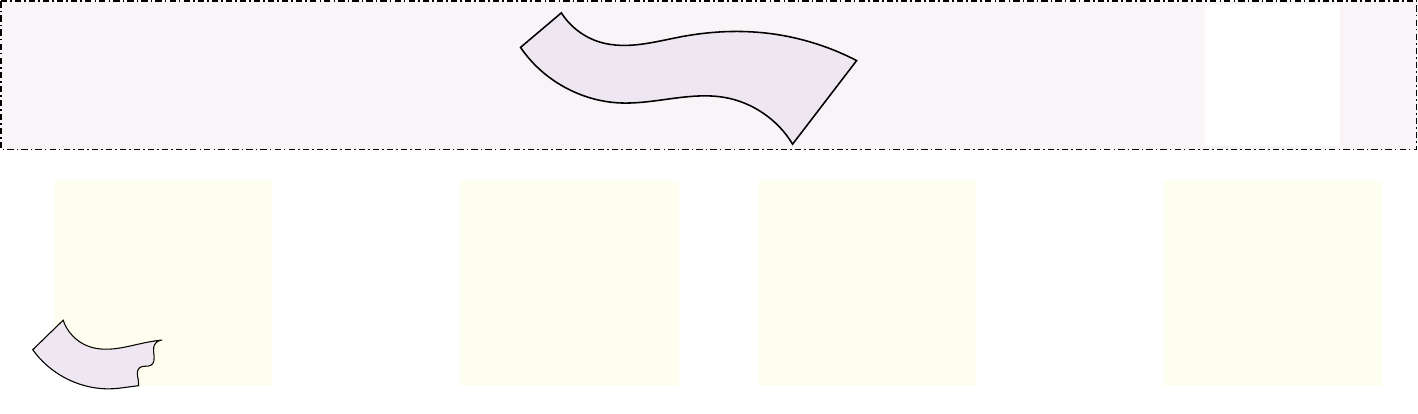

    \caption{The dual view of data selection and data regularization. }
    \label{fig:dr-post-training}
\end{figure}

\paragraph{The conventional data selection view.}
The target objective determines which training examples are most useful, and the update is formed from those selected examples. This is the perspective taken by most existing data selection methods~\citep{xia2024less,he2024what,wang2024greats}.

\paragraph{The novel data regularization view.}
The target objective specifies the ideal direction of improvement, while the training batch regularizes this update by restricting it to lie in the feasible set \(U_t\). Without such a regularization, one would attempt to optimize the target objective directly, but this can be statistically unstable when target data are scarce. Constraining the update to directions supported by the training batch can reduce this instability, at the price of introducing approximation bias when the training and target distributions are misaligned.

This viewpoint suggests that the complexity of the feasible set controls a \textbf{bias--variance tradeoff}. A smaller feasible set imposes stronger regularization: it stabilizes optimization but may bias the update away from the ideal target direction. A larger feasible set weakens this regularization: it allows more faithful target-driven updates, but may incur higher statistical variance. We formalize this tradeoff in \Cref{subsec:bias-variance-tradeoffs} after introducing some concrete instantiations of the feasible sets below.

\subsection{Concrete Instantiations of the Feasible Sets}\label{subsec:designs-of-feasible-set}
We next turn the framework into concrete methods. We begin by deriving a tractable approximation of the optimization problem in \Cref{eq:opt} (\Cref{subsubsec:tractable-approximation}). With this approximation, several existing methods arise as special cases by specifying how the training batch induces the feasible set \(U_t\) (\Cref{subsubsec:existing-methods}). We then use this perspective to motivate a broader design space of data regularizer via group-wise decomposition (\Cref{subsubsec:group-wise-decomposition}).

\subsubsection{A Tractable Approximation of \Cref{eq:opt}}\label{subsubsec:tractable-approximation}
We first derive a tractable approximation of the optimization problem in \Cref{eq:opt}. For arbitrary feasible set, the constrained optimization for \(\mathcal{L}_\star (\theta_t - \eta_t u)\) over \(u \in U_t\) is generally difficult, especially when \(U_t\) is discrete. We therefore apply the majorization--minimization principle~\citep{lange2000optimization,mairal2013optimization,lange2016mm} with a standard smoothness assumption~\citep{bottou2018optimization,nesterov2013introductory}.

\begin{assumption}[Target smoothness]\label{ass:smooth}
    The target population loss \(\mathcal{L}_\star\) is \(\beta\)-smooth for some \(\beta > 0\).
\end{assumption}
Under \Cref{ass:smooth}, the target loss admits the following quadratic majorization upper bound:
\begin{equation}\label{eq:major}
    \mathcal{L}_{\star}(\theta_t - \eta_t u)
    \leq \mathcal{L}_{\star}(\theta_t) - \eta_t \langle g_{\star}, u \rangle + \frac{\beta \eta_t^2}{2} \lVert u \rVert^2,
\end{equation}
where \(\mathcal{L}_{\star}(\theta_t)\) is independent of \(u\) and the population gradient \(g_{\star}\) is unknown. Hence, minimizing this over \(u \in U_t\) and substituting the target batch gradient estimate \(\hat{g}_\star\) for the unknown population gradient \(g_\star\) yields a Euclidean projection \(u_t = \Proj_{U_t}(\hat{g}_{\star})\), or equivalently (see derivation in \Cref{adxsubsec:mm-derivation}):
\begin{equation}\label{eq:proj}
    u_t
    = \argmin_{u \in U_t} \lVert u - \hat{g}_\star \rVert^2,
\end{equation}
We are now ready to show how different design choices of the feasible set correspond to different methods.

\subsubsection{Existing Methods as Special Cases}\label{subsubsec:existing-methods}
We now show how several existing training paradigms can be viewed as special cases of our framework. For ease of exposition, we refer to these instantiations as the \emph{Target-Only Update}, the \emph{Full-Training Update}, and the \emph{Global Subset Update}.

\paragraph{Target-Only Update.}
The least regularized instantiation ignores the training batch when constructing the feasible set and allows the update to move in any direction in parameter space. This corresponds to choosing \(U_t^\star \coloneqq \mathbb{R}^d\).
Because there is no directional constraint, \Cref{eq:proj} gives \(u_t = \hat{g}_\star\).
Thus, the Target-Only Update recovers gradient descent on the target batch \(B_t^\star\), completely ignoring the training batch \(B_t\). In practice, this corresponds to fine-tuning exclusively on high-quality, task-specific data~\citep{zhou2023lima,gunasekar2023textbooks}. The resulting update is unbiased for \(g_\star\), but it can have high variance when the target batch size \(m\) is small.

\paragraph{Full-Training Update.}
At the opposite extreme, the strongest regularization is obtained by allowing only the aggregate training-batch gradient as the update direction. This corresponds to the singleton feasible set \(U_t^{\mathrm{tr}} \coloneqq \{\hat{g}_{\mathrm{tr}}\}\).
In this case, \Cref{eq:proj} trivially gives \(u_t = \hat{g}_{\mathrm{tr}}\).
The Full-Training Update recovers standard gradient descent on the general training distribution, as in general-purpose SFT, where the model is trained on a curated mixture without per-step target feedback~\citep{lambert2024tulu3,grattafiori2024llama}. Here, the training batch imposes the strongest possible constraint on the update direction, eliminating any per-step dependence on the target batch.

\paragraph{Global Subset Update.}
Between these two extremes, existing data selection methods can be expressed by allowing the update to be the average gradient of a selected subset of training samples. For a subset size \(k\), define \(U_t^{\mathrm{glob}}(k) \coloneqq \{ g_{S} = \frac{1}{k}\sum_{i \in S} g_i \colon S \subseteq [n], \lvert S\rvert = k \}\). This feasible set contains all \(k\)-sample averages of training gradients from \(B_t\). Under \Cref{eq:proj}, the resulting update is obtained by choosing \(S_t \in \argmin_{S \subseteq [n], \lvert S\rvert = k} \lVert g_S - \hat{g}_\star \rVert^2\), and then setting \(u_t = g_{S_t}\). We refer to this instantiation as the \emph{Global} Subset Update, since a single subset \(S\) is shared across \emph{all} parameters, in contrast to the \emph{Group-Wise Subset Update} to be introduced next in \Cref{subsubsec:group-wise-decomposition}.

The Global Subset Update interpolates between the target-only and Full-Training Updates. Its feasible set \(U_t^{\mathrm{glob}}(k)\) is richer than the singleton feasible set \(U_t^{\mathrm{tr}} = \{\hat{g}_{\mathrm{tr}}\}\), but remains more restrictive than the unconstrained feasible set \(U_t^\star = \mathbb{R}^d\). However, solving for \(S\) is a combinatorial optimization problem over training subsets. Expanding the objective and dropping the term independent of \(S\), the subset optimization is equivalent to minimizing \(\lVert g_S \rVert ^2 - 2 \langle g_S, \hat{g}_\star \rangle \). Thus, the objective balances a first-order \emph{alignment} term, \(2 \langle g_S, \hat{g}_\star \rangle = \frac{2}{k}\sum_{i \in S} \langle g_i, \hat{g}_\star \rangle\), which favors samples aligned with the target gradient, against a second-order \emph{redundancy} penalty, \(\lVert g_S \rVert ^2 = \frac{1}{k^2} \sum_{i,i^{\prime} \in S} \langle g_i, g_{i^{\prime}} \rangle\), which discourages selecting highly correlated training gradients. The alignment term decomposes into independent per-sample scores
\[
    s_i \coloneqq \langle g_i, \hat{g}_\star \rangle,
\]
whereas the redundancy penalty couples all samples in \(S\), making exact optimization intractable. Existing data selection methods can therefore be viewed as tractable approximations to this subset optimization problem:
\begin{itemize}[leftmargin=*]
    \item \textbf{Top-\(k\).} Choose the \(k\) samples with the largest scores \(s_i\), ignoring the redundancy penalty. This is the cheapest strategy and is effective when samples are not highly correlated~\citep{han2021influence,hu2024most,he2024what}.
    \item \textbf{Thresholding.} Keep all samples whose score exceeds a fixed threshold, yielding a variable-size subset~\citep{hu2025a}. This can be viewed as a variable-cardinality relaxation of the fixed-\(k\) feasible set, and it also ignores the redundancy penalty.
    \item \textbf{Greedy construction.} Iteratively construct \(S_t\) by adding the sample that most improves the full objective, thereby accounting for redundancy at each step~\citep{wang2024greats}. This better approximates the projection problem, but incurs an additional \(O(nk)\) selection cost.
\end{itemize}

\subsubsection{New Data-Regularization Designs via Group-Wise Decomposition}\label{subsubsec:group-wise-decomposition}
The three instantiations above are only a few points in the broader space of feasible-set designs. In principle, \Cref{eq:proj} allows many possible choices of \(U_t\subseteq \mathbb{R}^d\) as long as it is independent of target data; our focus here is the ones that actually depend on training data. We introduce a new family of \emph{data regularizers}: feasible sets constructed from the training batch that are more flexible than Global Subset Update, while still restricting target-driven updates to directions supported by training samples.

The motivation is to relax the global coupling imposed by \(U_t^{\mathrm{glob}}(k)\), where a single subset \(S\) of training samples must be shared across all parameter coordinates. This can be overly restrictive when different parameter groups align with the target signal in different ways. For example, a training sample that provides a useful update for one layer or module may be less informative for another. Thus, rather than selecting one global subset for the entire model, we partition the parameter coordinates into groups and allow each group to choose its own subset of training samples. This gives a controlled relaxation of the feasible set: it weakens the data regularization imposed by the Global Subset Update, while avoiding the fully unconstrained Target-Only Update.

\paragraph{Group-Wise Subset Update.}
Let \(\mathcal{G} = \{G_p\}_{p=1}^P\) be a partition of the parameter indices \([d]\) into \(P\) disjoint groups. For any vector \(v \in \mathbb{R}^d\), let \(v^{(p)}\) denote the subvector indexed by \(G_p\). In particular, \(g_i^{(p)}\) and \(\hat{g}_\star^{(p)}\) denote the corresponding subvectors of \(g_i\) and \(\hat{g}_\star\).

For a fixed subset size \(k\), define the group-wise feasible set as
\[
    U_t^{\mathrm{grp}}(k; \mathcal{G})
    \coloneqq \left\{ u = (u^{(1)}, \dots, u^{(P)}) \colon u^{(p)} \in U_t^{(p)}(k) \text{ for all } p \in [P] \right\},
\]
where
\[
    U_t^{(p)}(k)
    \coloneqq \left\{ g_{S}^{(p)} \coloneqq \frac{1}{k}\sum_{i \in S} g_i^{(p)} \colon S \subseteq [n], \lvert S \rvert = k \right\}.
\]
In other words, each parameter group \(G_p\) is allowed to choose its own subset \(S_{t,p}\) of training samples. We refer to this instantiation as the \emph{Group-Wise Subset Update}. A practically important special case is the \emph{Layer-Wise Subset Update}, where each group corresponds to the parameters of a single layer.

The Global Subset Update is recovered as the special case in which all groups share the same subset,
\[
    S_{t,1}
    = \cdots
    = S_{t,P}.
\]
Therefore,
\[
    U_t^{\mathrm{glob}}(k)
    \subseteq U_t^{\mathrm{grp}}(k; \mathcal{G}),
\]
with the inclusion typically being strict for nontrivial partitions. Thus, the Group-Wise Subset Update enlarges the feasible set relative to the Global Subset Update. From the data-regularization perspective, this reduces the strength of the regularization imposed by requiring a single global subset, and can reduce approximation bias when that global constraint is too restrictive.

This construction also preserves a useful separability structure. Because the group-wise feasible set factorizes across parameter groups, \(U_t^{\mathrm{grp}}(k; \mathcal{G}) = \prod_{p=1}^P U_t^{(p)}(k)\), the projection problem in \Cref{eq:proj} decomposes into independent per-group problems:
\[
    \min_{u \in U_t^{\mathrm{grp}}(k; \mathcal{G})} \lVert u-\hat{g}_\star \rVert ^2
    = \sum_{p=1}^{P} \min_{u^{(p)} \in U_t^{(p)}(k)} \lVert u^{(p)}-\hat{g}_\star^{(p)} \rVert ^2.
\]
Equivalently, for each group \(p\), we choose
\[
    S_{t,p}
    \in \argmin_{S \subseteq [n], \lvert S\rvert = k} \left\lVert g_{S}^{(p)} - \hat{g}_\star^{(p)} \right\rVert^2,
\]
and set
\[
    u_t^{(p)}
    = g_{S_{t,p}}^{(p)}
    = \frac{1}{k}\sum_{i \in S_{t,p}} g_i^{(p)}.
\]
Thus, the same approximation strategies used for the Global Subset Update can be applied independently within each parameter group. For example, the group-wise top-\(k\) rule scores each training sample by
\[
    s_i^{(p)}
    \coloneqq \langle g_i^{(p)}, \hat{g}_\star^{(p)} \rangle
\]
and selects the \(k\) samples with the largest scores separately for each group \(G_p\). Thresholding and greedy construction can be adapted analogously.

The partition \(\mathcal{G}\), therefore, becomes a new design knob. Coarser partitions impose stronger data regularization by forcing larger portions of the model to share the same subset, while finer partitions allow more flexible target-driven updates. This produces a spectrum of feasible sets between the Global Subset Update and the Target-Only Update, which we analyze through the bias--variance tradeoff in \Cref{subsec:bias-variance-tradeoffs}.

\subsection{Bias--Variance Tradeoffs}\label{subsec:bias-variance-tradeoffs}
We now formalize the bias--variance tradeoff induced by different choices of the feasible set. From the data-regularization perspective, the feasible set \(U_t\) controls the strength of regularization imposed by the training batch. This is analogous to the classical bias--variance tradeoff in statistical estimation~\citep{hastie2009elements}, but here the relevant complexity is the complexity of the data-induced feasible set rather than the model class.

\subsubsection{Data-Regularization Spectrum}
The feasible sets introduced above can be organized by their expressiveness. For a fixed subset size \(k\) and partition \(\mathcal{G}\), we have
\[
    U_t^{\mathrm{glob}}(k)
    \subseteq U_t^{\mathrm{grp}}(k; \mathcal{G})
    \subseteq U_t^\star
    = \mathbb{R}^d.
\]
The first inclusion follows because the Global Subset Update is the special case of the Group-Wise Subset Update in which all parameter groups share the same subset. The second inclusion is immediate because all feasible-set designs are subsets of the unconstrained target-only feasible set. In addition, the Full-Training Update is recovered from the Global Subset Update when \(k=n\), since
\[
    U_t^{\mathrm{glob}}(n)
    = \left\{ \frac{1}{n}\sum_{i=1}^n g_i \right\}
    = U_t^{\mathrm{tr}}.
\]
Thus, by varying the subset size \(k\) and the partition \(\mathcal{G}\), the proposed framework induces a data-regularization spectrum (\Cref{fig:spectrum-schematic}): moving toward larger feasible sets weakens the regularization imposed by the training batch, reducing approximation bias but increasing statistical variance.

\begin{figure}[htpb]
    \centering
    \begin{tikzpicture}[>=Stealth, font=\footnotesize]
    \tikzset{
        box/.style={draw, rounded corners, align=center, inner sep=4pt},
        tick/.style={thick},
        nest/.style={font=\scriptsize}
    }

    \def\Yaxis{0.0}
    \def\Ytick{0.15}
    \def\Ybox{0.65}
    \def\axgap{0.1}
    \def\boxgap{0.4}  

    \node[box] (b1) at (0,\Ybox) {Target-Only $U_t^\star$};
    \node[box, right=\boxgap of b1] (b2) {Group-Wise Subset $U_t^{\mathrm{grp}}$};
    \node[box, right=\boxgap of b2] (b3) {Global Subset $U_t^{\mathrm{glob}}$};
    \node[box, right=\boxgap of b3] (b4) {Full-Training $U_t^{\mathrm{tr}}$};

    \pgfmathsetmacro{\Astart}{-0.8}
    \pgfmathsetmacro{\Aend}{13.5}
    \path let \p1=(b1.south), \p2=(b4.south) in
    coordinate (aL) at (\x1-0.8cm, \Yaxis)
    coordinate (aR) at (\x2+0.8cm, \Yaxis);

    \node[anchor=east, align=right] at ([xshift=-\axgap cm]aL)
    {bias $\downarrow$\\variance $\uparrow$};

    \draw[->, thick] (aL) -- (aR);

    \node[anchor=west, align=left] at ([xshift=\axgap cm]aR)
    {$\uparrow$ bias\\$\downarrow$ variance};

    \node[below=2pt] at ($(aL)!0.5!(aR)$)
    {stronger data regularization $\longrightarrow$};

    \foreach \b in {b1,b2,b3,b4} {
            \draw[tick] (\b.south |- 0,\Yaxis) -- ++(0,\Ytick);
            \draw[thick] (\b.south |- 0,\Ytick) -- (\b.south);
        }

\end{tikzpicture}
    \caption{Data regularization spectrum. Different methods correspond to feasible sets of increasing expressiveness (left to right), trading approximation bias for statistical variance.}
    \label{fig:spectrum-schematic}
\end{figure}

In particular, Group-Wise Subset Update provides a richer family of data regularizers than the Global Subset Update, allowing a broader design space with different bias--variance tradeoffs.

\subsubsection{Formal Characterization of the Tradeoff}
We now formalize the bias--variance tradeoff in terms of one-step progress on the target loss. Recall from \Cref{eq:major} that the decrease in target loss is controlled by how well the update direction approximates the target population gradient \(g_\star\). We measure this error by the conditional mean squared error (MSE)
\[
    \MSE(u)
    \coloneqq \mathbb{E}\left[\lVert u - g_\star \rVert^2 \middle| \theta_t \right],
\]
where the expectation is over the randomness in the training batch \(B_t\) and the target batch \(B_t^\star\) sampled at step \(t\). Under this convention, \(g_\star=g_\star(\theta_t)\) is fixed, while \(U_t\), \(\hat{g}_{\mathrm{tr}}\), \(\hat{g}_\star\), and the resulting update \(u_t\) are random variables. Formally, we have the following:

\begin{lemma}\label{lma:majorization}
    Assume \Cref{ass:smooth}. Fix \(\theta_t\) and let \(0 < \eta_t \leq 1/\beta\). For any \(u\) with \(\mathbb{E}[\lVert u \rVert ^2 \mid \theta_t] < \infty\),
    \[
        \mathbb{E}\left[\mathcal{L}_\star(\theta_{t+1}) \mid \theta_t\right]
        \leq \mathcal{L}_\star(\theta_t) - \frac{\eta_t}{2} \lVert g_\star \rVert ^2 + \frac{\eta_t}{2} \MSE(u).
    \]
\end{lemma}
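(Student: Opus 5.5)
We start from the quadratic majorization \Cref{eq:major}, which holds pointwise for every realization of the update direction under \Cref{ass:smooth}. Applying it with \(u = u_t\) and \(\theta_{t+1} = \theta_t - \eta_t u\), we have
\[
    \mathcal{L}_\star(\theta_{t+1}) \leq \mathcal{L}_\star(\theta_t) - \eta_t \langle g_\star, u\rangle + \frac{\beta\eta_t^2}{2}\lVert u\rVert^2 .
\]
Since \(0<\eta_t\le 1/\beta\), we get \(\beta\eta_t^2 \le \eta_t\). The last term is therefore at most \(\frac{\eta_t}{2}\lVert u\rVert^2\), which gives
\[
    \mathcal{L}_\star(\theta_{t+1}) \leq \mathcal{L}_\star(\theta_t) - \eta_t \langle g_\star, u\rangle + \frac{\eta_t}{2}\lVert u\rVert^2 .
\]

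The key algebraic step is to complete the square using the polarization identity
\[
    \lVert u - g_\star\rVert^2 = \lVert u\rVert^2 - 2\langle g_\star,u\rangle + \lVert g_\star\rVert^2 .
\]
It lets us rewrite \(-\eta_t\langle g_\star,u\rangle + \frac{\eta_t}{2}\lVert u\rVert^2\) as \(\frac{\eta_t}{2}\lVert u-g_\star\rVert^2 - \frac{\eta_t}{2}\lVert g_\star\rVert^2\). We thus obtain the pointwise bound
\[
    \mathcal{L}_\star(\theta_{t+1}) \le \mathcal{L}_\star(\theta_t) - \frac{\eta_t}{2}\lVert g_\star\rVert^2 + \frac{\eta_t}{2}\lVert u-g_\star\rVert^2 .
\]

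Finally, we take conditional expectation given \(\theta_t\) over the batches \(B_t, B_t^\star\). Both \(\mathcal{L}_\star(\theta_t)\) and \(g_\star = g_\star(\theta_t)\) are deterministic given \(\theta_t\), so they pass through the expectation unchanged. The remaining term becomes \(\frac{\eta_t}{2}\MSE(u)\) by definition, which yields the claim.

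The only point needing care is integrability. The assumption \(\mathbb{E}[\lVert u\rVert^2\mid\theta_t]<\infty\) together with \(\lVert u-g_\star\rVert^2 \le 2\lVert u\rVert^2 + 2\lVert g_\star\rVert^2\) ensures \(\MSE(u)<\infty\). The right-hand side of the pointwise bound is therefore integrable and bounds \(\mathcal{L}_\star(\theta_{t+1})\) from above, so the conditional expectation of the left-hand side is well defined, possibly as \(-\infty\), and the inequality is preserved by monotonicity. I expect no real obstacle beyond this measure-theoretic remark. The main substantive step is the completing-the-square identity combined with the step-size condition \(\beta\eta_t\le 1\).
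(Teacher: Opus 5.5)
Your proof is correct and uses the same ingredients as the paper: the smoothness majorization, completing the square via the expansion of $\lVert u - g_\star\rVert^2$, and $\beta\eta_t \le 1$ to control the quadratic term. The only difference is the order of steps. The paper takes conditional expectations first and then drops the nonpositive remainder $\bigl(\frac{\beta\eta_t^2}{2}-\frac{\eta_t}{2}\bigr)\mathbb{E}[\lVert u\rVert^2\mid\theta_t]$, whereas you bound pointwise and take expectations at the end; this is equivalent and handles integrability a bit more cleanly.
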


We note that \Cref{lma:majorization} holds for any \(u\), and the proof can be found in \Cref{adxsubsec:bias-variance-proof}. Now, given a data-induced feasible set \(U_t\), define its approximation bias term as
\[
    \mathcal{B}(U_t)
    \coloneqq\mathbb{E} \left[\inf_{u \in U_t} \lVert u - g_\star \rVert^2 \middle| \theta_t\right].
\]
This quantity measures the best target-gradient approximation achievable within the feasible set, averaged over the randomness in the training batch that constructs \(U_t\). For the projected update \(u_t\) obtained from \Cref{eq:proj}, we define the remaining variance component as
\[
    \mathcal{V}(u_t)
    \coloneqq\MSE(u_t) - \mathcal{B}(U_t),
\]
so that
\begin{equation}\label{eq:bias-variance-decomposition}
    \MSE(u_t)
    = \mathcal{B}(U_t) + \mathcal{V}(u_t).
\end{equation}
The first term captures the ``bias'' of restricting updates to \(U_t\), while the second term captures the ``variance'' of choosing an update from \(U_t\) using finite-sample target information.

\begin{remark}[Comparison to Classical Bias and Variance Definitions]
    The decomposition in \Cref{eq:bias-variance-decomposition} is adapted to the feasible-set view and is therefore slightly different from the classical bias--variance decomposition of an estimator. Here, \(\mathcal{B}(U_t)\) measures the approximation error induced by the feasible set itself, including any randomness in the training-batch-induced feasible set. The variance component \(\mathcal{V}(u_t)\) measures the additional error incurred when the update is chosen from \(U_t\) using the noisy target-batch estimate \(\hat{g}_\star\).
\end{remark}

We are now ready to state our results about the bias and variance of each method. First, we characterize the exact bias--variance decomposition of the Full-Training and Target-Only Updates. For notational convenience, define \(\Sigma_{\mathrm{tr}} \coloneqq \Cov(g_i \mid \theta_t)\) and \(\Sigma_\star \coloneqq \Cov(g_j^\star \mid \theta_t)\).

\begin{proposition}[Bias--variance tradeoffs]\label{prop:bias-variance}
    Fix \(\theta_t\) and let \(\eta_t = 1/\beta\). Then:
    \begin{enumerate}[label=(\roman*),leftmargin=*]
        \item\label{prop:bias-variance-i}
              \textbf{Full-Training Update.}
              For \(U_t^{\mathrm{tr}}=\{\hat{g}_{\mathrm{tr}}\}\),
              \[
                  \mathcal{B}(U_t^{\mathrm{tr}})
                  = \lVert g_{\mathrm{tr}} - g_\star \rVert^2 + \frac{\tr(\Sigma_{\mathrm{tr}})}{n},\qquad
                  \mathcal{V}(u_t^{\mathrm{tr}})
                  = 0.
              \]
        \item\label{prop:bias-variance-ii}
              \textbf{Target-Only Update.}
              For \(U_t^\star=\mathbb{R}^d\),
              \[
                  \mathcal{B}(U_t^{\star})
                  = 0,\qquad
                  \mathcal{V}(u_t^{\star})
                  = \frac{\tr(\Sigma_\star)}{m}.
              \]
    \end{enumerate}
\end{proposition}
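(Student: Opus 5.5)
The plan is to compute \(\mathcal{B}\) and \(\MSE\) directly from their definitions in each case, and then read off \(\mathcal{V}\) from \Cref{eq:bias-variance-decomposition} as \(\mathcal{V}(u_t)=\MSE(u_t)-\mathcal{B}(U_t)\). The step size \(\eta_t=1/\beta\) enters neither \(\mathcal{B}\) nor \(\MSE\), so it plays no role beyond fixing the setting of \Cref{lma:majorization}. Throughout I condition on \(\theta_t\), so that \(g_\star\) and \(g_{\mathrm{tr}}\) are deterministic. The per-sample gradients \(g_i\) are i.i.d.\ with mean \(g_{\mathrm{tr}}\) and covariance \(\Sigma_{\mathrm{tr}}\). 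The \(g_j^\star\) are i.i.d.\ with mean \(g_\star\) and covariance \(\Sigma_\star\). I assume these second moments are finite, which is implicit in the definitions of \(\Sigma_{\mathrm{tr}}\) and \(\Sigma_\star\).

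\textbf{Full-Training Update.} The feasible set \(U_t^{\mathrm{tr}}=\{\hat{g}_{\mathrm{tr}}\}\) is a singleton. Hence the infimum in \(\mathcal{B}\) is attained at \(\hat{g}_{\mathrm{tr}}\), and the projection \Cref{eq:proj} returns \(u_t^{\mathrm{tr}}=\hat{g}_{\mathrm{tr}}\) regardless of \(\hat{g}_\star\). Both \(\mathcal{B}(U_t^{\mathrm{tr}})\) and \(\MSE(u_t^{\mathrm{tr}})\) therefore equal \(\mathbb{E}[\lVert \hat{g}_{\mathrm{tr}}-g_\star\rVert^2\mid\theta_t]\), which gives \(\mathcal{V}=0\) immediately. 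To evaluate this common quantity, write
\[
\hat{g}_{\mathrm{tr}}-g_\star=(\hat{g}_{\mathrm{tr}}-g_{\mathrm{tr}})+(g_{\mathrm{tr}}-g_\star)
\]
and expand the square. The cross term vanishes because \(\mathbb{E}[\hat{g}_{\mathrm{tr}}\mid\theta_t]=g_{\mathrm{tr}}\). The remaining stochastic term is \(\mathbb{E}\lVert \hat{g}_{\mathrm{tr}}-g_{\mathrm{tr}}\rVert^2=\tr(\Cov(\hat{g}_{\mathrm{tr}}))\). By independence of the \(n\) samples, \(\Cov(\hat{g}_{\mathrm{tr}})=\Sigma_{\mathrm{tr}}/n\), so this term is \(\tr(\Sigma_{\mathrm{tr}})/n\). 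Adding the deterministic term \(\lVert g_{\mathrm{tr}}-g_\star\rVert^2\) gives the claimed value of \(\mathcal{B}(U_t^{\mathrm{tr}})\).

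\textbf{Target-Only Update.} Here \(U_t^\star=\mathbb{R}^d\) contains \(g_\star\), so \(\inf_{u\in\mathbb{R}^d}\lVert u-g_\star\rVert^2=0\) pointwise and \(\mathcal{B}(U_t^\star)=0\). The projection \Cref{eq:proj} gives \(u_t^\star=\hat{g}_\star\). Then \(\mathcal{V}(u_t^\star)=\MSE(\hat{g}_\star)=\mathbb{E}\lVert\hat{g}_\star-g_\star\rVert^2\). By the same unbiasedness and independence argument over the \(m\) target samples, this equals \(\tr(\Sigma_\star)/m\).

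There is no real obstacle here; the only points needing care are bookkeeping. First, one must note that the infimum in \(\mathcal{B}\) is taken inside the expectation. Second, in the singleton case, the infimum and the projected update coincide, which is exactly what forces \(\mathcal{V}=0\).
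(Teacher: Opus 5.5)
Your proof is correct and follows the paper's argument essentially verbatim. In the singleton case you identify \(\mathcal{B}(U_t^{\mathrm{tr}})\) with \(\MSE(u_t^{\mathrm{tr}})\) and evaluate it from the mean and covariance of \(\hat{g}_{\mathrm{tr}}\); in the unconstrained case you note \(\mathcal{B}(U_t^\star)=0\) and compute \(\MSE(\hat{g}_\star)=\tr(\Sigma_\star)/m\). Your extra remarks (the explicit cross-term cancellation, finite second moments, and that \(\eta_t\) plays no role) are harmless refinements of the same route.
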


The proof is deferred to \Cref{adxsubsec:bias-variance-proof}. \Cref{prop:bias-variance} states that the bias of Full-Training Update shrinks as \(n\) grows but has an irreducible floor \(\lVert g_{\mathrm{tr}} - g_{\star} \rVert ^2\) due to the intrinsic distribution-mismatch error between the general training distribution \(\mathbb{P}_{\mathrm{tr}}\) and the target distribution \(\mathbb{P}_{\star}\). On the other hand, Target-Only Update achieves zero bias as expected. In contrast, Full-Training Update has zero variance since it does not depend on the target batch, while Target-Only Update has variance \(\tr(\Sigma_\star)/m\), which can be large when the target batch size \(m\) is small.

Next, we provide explicit variance bounds and bias characterization between Global Subset and Group-Wise Subset Updates. For this, we additionally assume bounded training gradients to control the complexity of the finite feasible sets~\citep{shalev2014understanding,hazan2016introduction}, and also a standard sub-Gaussian noise assumption~\citep{lan2020first,liu2023high}:

\begin{assumption}[Bounded gradients]\label{ass:bounded}
    There exists \(C > 0\) such that \(\lVert g_i \rVert \leq C\) for all \(i \in [n]\).
\end{assumption}

\begin{assumption}[Sub-Gaussian noise]\label{ass:sub-Gaussian-noise}
    Noise of target gradient \(\xi \coloneqq \hat{g}_\star - g_\star\) is sub-Gaussian with parameter \(\sigma/\sqrt m\) for some \(\sigma > 0\).
\end{assumption}

\begin{theorem}[Bias--variance tradeoffs]\label{thm:bias-variance}
    Assume \Cref{ass:bounded,ass:sub-Gaussian-noise} and fix \(\theta_t\) and let \(\eta_t = 1/\beta\). Then for any \(k \in [n]\):
    \begin{enumerate}[label=(\roman*),leftmargin=*]
        \item\label{thm:bias-variance-i}
              \textbf{Global Subset Update.}
              For \(U_t^{\mathrm{glob}}(k)\),
              \[
                  \mathcal{V}(u_t^{\mathrm{glob}}(k))
                  \leq \frac{4C\sigma}{\sqrt m} \sqrt{2\log\left(2\tbinom{n}{k}\right)}.
              \]
        \item\label{thm:bias-variance-ii}
              \textbf{Group-Wise Subset Update.}
              For \(U_t^{\mathrm{grp}}(k; \mathcal{G})\) where \(\mathcal{G} = \{G_p\}_{p=1}^{P}\) contains \(P\) groups,
              \[
                  \mathcal{V}(u_t^{\mathrm{grp}}(k; \mathcal{G}))
                  \leq \frac{4CP\sigma}{\sqrt m} \sqrt{2\log\left(2\tbinom{n}{k}\right)}.
              \]
    \end{enumerate}
    Moreover, Global Subset Update has a higher bias compared to Group-Wise Subset Update:
    \[
        \mathcal{B}(U_t^{\mathrm{grp}}(k; \mathcal{G}))
        \leq \mathcal{B}(U_t^{\mathrm{glob}}(k)).
    \]
\end{theorem}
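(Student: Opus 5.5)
The plan is to use a single ``oracle-versus-empirical projection'' argument for both variance bounds, and then treat the bias inequality as an immediate consequence of the feasible-set inclusion.

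\textbf{Setup.} Fix \(\theta_t\) and condition on the training batch \(B_t\). Then \(U=U_t\) is a deterministic finite set with \(\lvert U_t^{\mathrm{glob}}(k)\rvert\le \binom{n}{k}\), and by \Cref{ass:bounded} and convexity of the norm every \(u\in U\) satisfies \(\lVert u\rVert\le C\).

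Let \(u^\circ\in\argmin_{u\in U}\lVert u-g_\star\rVert^2\) be the oracle projection, and \(\hat u=u_t\) the empirical projection onto \(\hat g_\star=g_\star+\xi\). Expanding squares gives the identity
\[
\lVert u-\hat g_\star\rVert^2=\lVert u-g_\star\rVert^2-2\langle u,\xi\rangle+\text{const}(u).
\]
Since \(\hat u\) minimizes the left-hand side,
\[
\lVert \hat u-g_\star\rVert^2-\lVert u^\circ-g_\star\rVert^2\le 2\langle \hat u-u^\circ,\xi\rangle\le 4\max_{u\in U}\lvert\langle u,\xi\rangle\rvert .
\]
Taking conditional expectation over \(\xi\) and then over \(B_t\) gives \(\mathcal V(u_t)\le 4\,\mathbb E\max_{u\in U}\lvert\langle u,\xi\rangle\rvert\).

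\textbf{Maximal inequality.} By \Cref{ass:sub-Gaussian-noise}, each \(\langle u,\xi\rangle\) is sub-Gaussian with parameter \(\lVert u\rVert\sigma/\sqrt m\le C\sigma/\sqrt m\). I interpret the sub-Gaussianity of a vector in the standard sense, via its one-dimensional projections. The \(2\lvert U\rvert\) variables \(\pm\langle u,\xi\rangle\) then give, by the standard finite maximal inequality,
\[
\mathbb E\max\le \frac{C\sigma}{\sqrt m}\sqrt{2\log(2\lvert U\rvert)}.
\]
This is where the factor \(2\binom nk\) comes from. Combining with the previous step yields \ref{thm:bias-variance-i}.

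\textbf{Group-wise bound.} For \ref{thm:bias-variance-ii}, I will use the decomposition of the projection across groups. Both \(\MSE\) and \(\mathcal B\) split as sums over \(p\) of the corresponding quantities for the subvectors. Within group \(p\), the feasible set \(U^{(p)}_t(k)\) has at most \(\binom nk\) elements, each of norm \(\le C\) since \(\lVert g_i^{(p)}\rVert\le\lVert g_i\rVert\). The subvector noise \(\xi^{(p)}\) remains sub-Gaussian with the same parameter. Applying the argument above to each group and summing over the \(P\) groups gives the bound with the factor \(P\).

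I do not try to use the product structure jointly; treating \(U^{\mathrm{grp}}\) as one set of size \(\binom nk^P\) would give \(\sqrt P\) instead, but the stated \(P\) bound suffices. The main care point is stating sub-Gaussianity of \(\xi\) so that it transfers to projections onto arbitrary vectors and to subvectors.

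\textbf{Bias.} The bias claim is immediate. Pointwise in \(B_t\), \(U_t^{\mathrm{glob}}(k)\subseteq U_t^{\mathrm{grp}}(k;\mathcal G)\), because all groups may take the same subset \(S\). Hence
\[
\inf_{U^{\mathrm{grp}}}\lVert u-g_\star\rVert^2\le\inf_{U^{\mathrm{glob}}}\lVert u-g_\star\rVert^2,
\]
and taking expectations gives \(\mathcal{B}(U_t^{\mathrm{grp}}(k; \mathcal{G}))\le\mathcal{B}(U_t^{\mathrm{glob}}(k))\).

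The step size \(\eta_t=1/\beta\) plays no role in these bounds; it only links \(\MSE\) to progress via \Cref{lma:majorization}.
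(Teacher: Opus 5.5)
Your proof is correct. Part (i) and the bias claim follow the paper's argument essentially step for step. This covers the basic inequality $\lVert u_t-g_\star\rVert^2\le\lVert u^\circ-g_\star\rVert^2+4\sup_{u\in U}\lvert\langle\xi,u\rangle\rvert$, the finite sub-Gaussian maximal inequality over at most $\binom nk$ points of norm at most $C$, and the pointwise inclusion $U_t^{\mathrm{glob}}(k)\subseteq U_t^{\mathrm{grp}}(k;\mathcal G)$.

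Part (ii) is where you differ. The paper treats $U_t^{\mathrm{grp}}(k;\mathcal G)$ as one finite class, with $\lvert U\rvert\le\binom nk^P$ and $\lVert u\rVert\le\sqrt P\,C$. Its factor $P$ is $\sqrt P$ from the norm times $\sqrt P$ from $\log\bigl(2\binom nk^P\bigr)\le P\log\bigl(2\binom nk\bigr)$. You instead use the product structure to write $\MSE$, $\mathcal B$ and hence $\mathcal V$ as sums over groups, and apply part (i) once per group. Your $P$ comes from adding $P$ identical bounds. Both routes give the same constant. Yours avoids the cardinality bookkeeping and immediately allows sharpening $PC$ to $\sum_p C_p$, where $C_p\coloneqq\max_i\lVert g_i^{(p)}\rVert$. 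The paper's route is a single application of the maximal inequality to the whole feasible set.

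One correction to your aside: handling $U_t^{\mathrm{grp}}$ jointly does not give $\sqrt P$. That would require $\lVert u\rVert\le C$ for the concatenated vectors, which \Cref{ass:bounded} does not imply. Take $n=P$, $k=1$, and let each $g_i$ be supported on $G_i$ with $\lVert g_i\rVert=C$. Choosing $S_{t,p}=\{p\}$ for every $p$ gives a feasible $u$ with $\lVert u\rVert=\sqrt P\,C$. With the correct bound $\lVert u\rVert\le\sqrt P\,C$, the joint route yields exactly the factor $P$, which is what the paper does. The remark is not used in your argument, so just delete it.
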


\begin{wrapfigure}[15]{r}{0.3\linewidth}
    \centering
    \vspace{-1\intextsep}
  \def\svgwidth{\linewidth}
  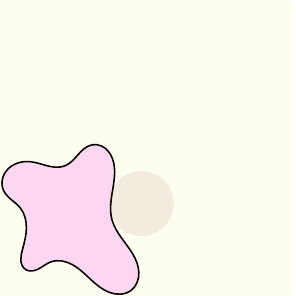

    \caption{Bias--variance decomposition for a chosen \(U_t\)}
    \label{fig:bias-variance}
\end{wrapfigure}

\Cref{fig:bias-variance} illustrates the decomposition of \Cref{thm:bias-variance}. The proof is again deferred to \Cref{adxsubsec:bias-variance-proof}, where we show in the proof that the variance is controlled by \(\mathbb{E}[\sup_{u \in U_t^{\mathrm{grp}}(k; \mathcal{G})} \lvert \langle \xi, u \rangle \rvert \mid \theta_t ]\), which is in turn upper-bounded by a complexity term that depends on the logarithmic size of the feasible set.

In terms of bias, we note that \(\mathcal{B}(U_t^{\mathrm{grp}}(k; \mathcal{G})) \leq \mathcal{B}(U_t^{\mathrm{glob}}(k))\) is simply due to \(U_t^{\mathrm{glob}}(k) \subseteq U_t^{\mathrm{grp}}(k; \mathcal{G})\). More generally, for two groups \(\mathcal{G}, \mathcal{G}^{\prime}\) where \(\mathcal{G}^{\prime}\) is a finer partition of \(\mathcal{G}\), we have \(U_t^{\mathrm{grp}}(k; \mathcal{G}) \subseteq U_t^{\mathrm{grp}}(k; \mathcal{G}^{\prime})\), and hence \(\mathcal{B}(U_t^{\mathrm{grp}}(k; \mathcal{G}^{\prime})) \leq \mathcal{B}(U_t^{\mathrm{grp}}(k; \mathcal{G}))\). On the other hand, for any two \(k \neq k^{\prime}\), it is almost always the case that no inclusion relation exists between \(U_t^{\mathrm{glob}}(k)\) and \(U_t^{\mathrm{glob}}(k^{\prime})\) (or \(U_t^{\mathrm{grp}}(k; \mathcal{G})\) and \(U_t^{\mathrm{grp}}(k^{\prime}; \mathcal{G})\)), and hence it is generally difficult to compare the bias for different \(k\). However, the same intuition still holds: the larger the feasible is, the lower the bias might be. For instance, when \(k = n\), we have \(U_t^{\mathrm{tr}} = U_t^{\mathrm{glob}}(n) = U_t^{\mathrm{grp}}(n; \mathcal{G})\), degenerating to the singleton set; when \(k = n / 2\), the feasible set \(U_t^{\mathrm{glob}}(k)\) has \(\binom{n}{n/2}\) candidates for approximating \(g_{\star}\), boosting the chance of approximating \(g_{\star}\) better and hence achieving a lower bias.

\paragraph{Summary.}
Among all the methods, Full-Training Update imposes the strongest data regularization: it is stable, but may incur substantial bias when the training and target distributions are misaligned. Target-Only Update imposes no data regularization: it is unbiased for the target gradient, but can be unstable when \(m\) is small. Global Subset Update interpolates these two, offering a new bias--variance tradeoff, and group-wise decomposition further reduces approximation bias by relaxing the global subset constraint, offering a spectrum over bias--variance tradeoffs with different choices of group partition.

\begin{takeaway}
    \emph{Larger feasible sets reduce bias but increase variance}, since the update has more freedom to align both with \(g_\star\) and \textbf{also with noise} \(\xi\). Full-Training is stable but biased; Target-Only is unbiased but unstable when \(m\) is small; Global and Group-Wise Subset Updates interpolate between them, with the partition \(\mathcal{G}\) and subset size \(k\) providing additional design knobs.
\end{takeaway}

\subsection{Generalization Beyond Two-Distribution Setting}\label{subsec:beyond-two-distribution}
Although we developed the framework in the two-distribution setting above, the underlying principle is more general. At its core, the data regularization principle rests on the following structure:
\begin{enumerate}[label=(\roman*)]
    \item \emph{Training signal} that can be optimized reliably but might introduce bias, and
    \item \emph{Target signal} that is too noisy or costly to optimize directly.
\end{enumerate}
In the two-distribution setting, the target batch provides the noisy signal for optimizing the target objective, while the training batch induces the feasible set that stabilizes the update.

Other common post-training paradigms also admit this structure: for instance, in reinforcement learning, the target signal of interest is the expected reward, but directly optimizing it via policy gradient is high-variance. Training signals such as PPO's clipped loss~\citep{schulman2017proximal} or GRPO's group-relative estimator~\citep{shao2024deepseekmath}, on the other hand, stabilize training at the cost of optimization bias. We demonstrate and apply these principles in the experiments (\Cref{subsec:RLHF,subsec:RLVR}).
\section{Efficient Realization of Dr.\ Post-Training Under Memory Constraints}\label{sec:system}
Modern LLM training infrastructures are heavily constrained by memory bottlenecks. Efficiently realizing the proposed method under such tight memory limits, therefore, requires nontrivial system-level optimizations. In this section, we discuss the optimizations that make Dr.\ Post-Training practical and compatible with modern infrastructures.

Specifically, we will start by revisiting the memory footprint of standard training (\Cref{subsec:standard-training}) that grounds the later discussions in three aspects. First, in \Cref{subsec:one-pass-tensor-lifetime-scheduling}, we address the computational overhead of data-regularized updates with a carefully designed tensor lifetime schedule that can efficiently reuse the intermediate variables in standard forward and backward passes with negligible additional memory cost. Second, in \Cref{subsec:efficient-scoring}, we investigate the computational costs of the key per-sample scoring step incurred by the data-regularized updates, develop several implementations with different compute--memory trade-offs, and identify the most efficient choice for different regimes. Third, in \Cref{subsec:compatibility}, we show that the resulting design remains compatible with modern memory-saving training techniques such as LoRA, MeSO, activation checkpointing, and gradient accumulation.

Overall, this section shows that Dr.\ Post-Training is not only algorithmically appealing, but can also be implemented efficiently under the memory constraints that dominate modern post-training.

\subsection{Preliminary: Memory Footprint in Standard Training}\label{subsec:standard-training}
Here, we provide a quick overview of the memory footprint in a standard training step that consists of a forward and a backward pass, where we simplify the transformer model into a stack of \(L\) MLP linear layers for discussion convenience.\footnote{Throughout this section, ``layer'' refers to an individual linear module (e.g., \texttt{nn.Linear} in PyTorch), not a full transformer block. A single transformer block contains multiple linear modules (query, key, value, output projections, and MLP layers). Data regularization targets these linear modules, which account for the vast majority of transformer parameters; the treatment of embedding layers and other non-linear modules is discussed in \Cref{adxsubsec:beyond-linear}.} Let the model with \(d\) total parameters \(\theta_t = (\theta_t^{(1)}, \dots, \theta_t^{(L)}) \in \mathbb{R} ^d\), with each layer \(\theta_t ^{(l)}\) having \(d/L\) parameters and activations of dimension \(\sqrt{d/L}\) for simplicity. We use both the vector form \(\theta_t^{(l)} \in \mathbb{R}^{d/L}\) and the matrix form \(W_t^{(l)} \in \mathbb{R}^{\sqrt{d/L} \times \sqrt{d/L}}\) interchangeably, related by \(\theta_t^{(l)} = \operatorname{vec}(W_t^{(l)})\); the same applies to weight gradients in matrix form \(G^{(l)} \in \mathbb{R}^{\sqrt{d/L} \times \sqrt{d/L}}\) with \(g^{(l)} = \operatorname{vec}(G^{(l)})\).

\subsubsection{Forward and Backward Pass}
Given a training batch \(B_t = \{z_i\}_{i=1}^n\) of sequences of length \(T\), a standard training step first performs a forward pass to evaluate the batch loss \(\ell \coloneqq \sum_{i=1}^{n} \ell_i\) with \(\ell_i \coloneqq \ell(\theta_t ; z_i)\), caching intermediate activations, then a backward pass to compute the batch gradient \(\hat{g}_{\mathrm{tr}} = (\hat{g}_{\mathrm{tr}}^{(1)}, \dots, \hat{g}_{\mathrm{tr}}^{(L)})\). \Cref{fig:standrad} illustrates both passes at a single layer \(l\) along with the resulting memory footprint. For simplicity, we omit the dimension of sequence length in the illustration.

\begin{figure}[htpb]
    \centering
  \def\svgwidth{\columnwidth}
  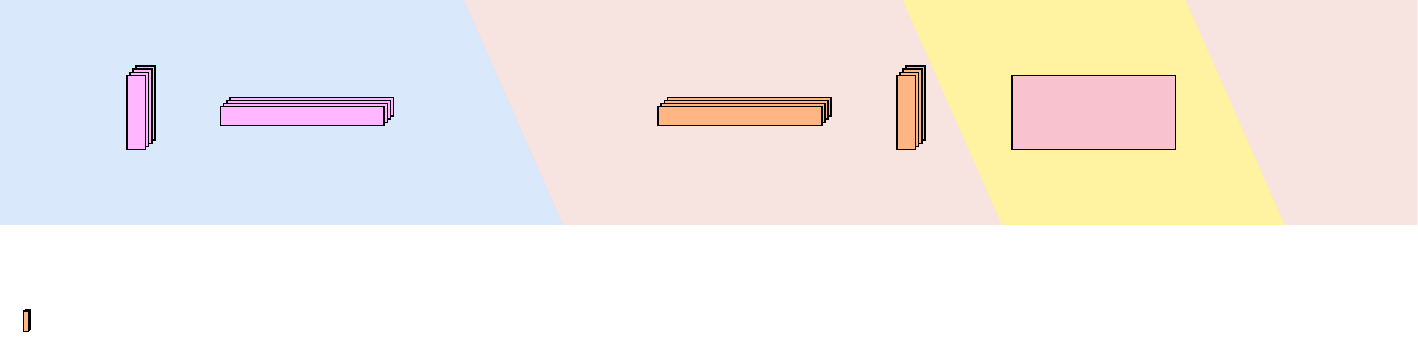

    \caption[Forward/backward pass and memory footprint.]{\textbf{Top.} Forward pass and backward pass (activation and weight gradient) computation graph: both \(\longrightarrow\) and \(\dashrightarrow\) denote \emph{computational dependency}; \(\dashrightarrow\) indicates that the dependency runs across the boundary between forward and backward. \textbf{Bottom.} Memory footprint: a tensor can be released once no remaining operation depends on it.}
    \label{fig:standrad}
\end{figure}

\paragraph{Forward pass.}
Each layer \(l\) computes the output (pre-activation) of the layer \(e_{i,\tau}^{(l)} = W_t^{(l)} a_{i,\tau}^{(l)}\) from the input activation \(a_{i,\tau}^{(l)} \in \mathbb{R}^{\sqrt{d/L}}\) for each sample \(i \in [n]\) and token \(\tau \in [T]\), then applies an activation function \(\sigma \): \(a_{i,\tau}^{(l+1)} = \sigma(e_{i,\tau}^{(l)})\). In practice, all \(nT\) vectors are batched as columns of an activation matrix, so each layer reduces to a single matrix multiplication. All activations and pre-activations \(\{a_{i,\tau}^{(l)}, e^{(l)}_{i, \tau}\}_{i,\tau}\) are \emph{cached} for later use.

\paragraph{Backward pass.}
The activation gradient propagates backward layer by layer. At layer \(l\), the pre-activation gradient \(\partial \ell / \partial e_{i,\tau}^{(l)}\) arrives for each sample \(i\) and token \(\tau\). The activation gradient propagates to layer \(l-1\) via \(\partial \ell / \partial e_{i,\tau}^{(l-1)} = \sigma^{\prime}(e_{i,\tau}^{(l-1)}) \odot \partial \ell / \partial a_{i,\tau}^{(l)}\) where \(\partial \ell / \partial a_{i,\tau}^{(l)} = W_t^{(l)\top} \partial \ell / \partial e_{i,\tau}^{(l)}\) and \(\odot\) denotes the element-wise product. The batch weight gradient is then computed as \(\hat{g}_{\mathrm{tr}}^{(l)} = \frac{1}{n}\sum_{i=1}^{n}\sum_{\tau=1}^{T} (\partial \ell / \partial e_{i,\tau}^{(l)}) \otimes a_{i,\tau}^{(l)} \in \mathbb{R}^{d/L}\), where \(\otimes\) denotes the Kronecker product. For later reference, we write the per-sample weight gradient as \(g_i^{(l)} \coloneqq \sum_{\tau=1}^{T} (\partial \ell / \partial e_{i,\tau}^{(l)}) \otimes a_{i,\tau}^{(l)}\), so that \(\hat{g}_{\mathrm{tr}}^{(l)} = \frac{1}{n}\sum_{i=1}^{n} g_i^{(l)}\). In practice, however, standard backpropagation does not explicitly form each \(g_i^{(l)}\). Instead, it stacks all token-level activations and pre-activation gradients across the batch and evaluates the above double sum with a single batched matrix multiplication, which directly returns the aggregated batch gradient \(\hat{g}_{\mathrm{tr}}^{(l)}\). Thus, standard training computes the \emph{batch gradient} without materializing separate \emph{per-sample weight gradients} in memory.

\subsubsection{Memory Footprint of Key Tensors during Training}
During each step of the standard training update, a tensor can be released as soon as no remaining computation depends on it. As the goal is to compute \(\hat{g}_{\mathrm{tr}}\), a key observation is that during backward pass at layer \(l-1\), both \(a^{(l)} \) and \(\partial \ell / \partial e^{(l)}\) become free once the batch gradient \(\hat{g}_{\mathrm{tr}}^{(l)}\) is assembled (\Cref{fig:standrad}). Moreover, \(\partial \ell / \partial e^{(l+1)}\) is released before \(\partial \ell / \partial e^{(l)}\) is allocated, so its buffer is reused and the memory footprint of pre-activation gradients remains constant across layers.

\Cref{algo:standard-training} presents the pseudocode of one standard training step with memory management, which recovers the Full-Training Update. For simplicity, we hide the computation of \(a_i^{(l)}\), \(e_i^{(l)}\), their gradient, and also their dependency on each other in the backward function call. The creation and release of key tensors of interest are annotated with \memcost{memory cost} (\(+\) allocation, \(-\) release) to be compared with later algorithms. We note that when the training data is \(B_t^{\star}\) instead of \(B_t\), we recover Target-Only Update.

\begin{algorithm}[htpb]\label{algo:standard-training}
    \DontPrintSemicolon{}
    \caption{Standard Training Update}
    \KwData{Model \(\theta_t\), training data \(\{z_i\}_{i=1}^n\), learning rate \(\eta_t\)}
    \KwResult{Updated model parameters \(\theta_{t+1}\)}
    \SetKwFunction{Forward}{Forward}
    \SetKwFunction{Backward}{Backward}
    \SetKwFunction{Release}{Release}

    \BlankLine

    (\(\ell\), \(\{a_{i}^{(l)}, e_{i}^{(l)}\}_{l,i}\))\(\gets\)\Forward{\(\theta_t\), \(\{z_i\}_{i=1}^{n}\)}\Comment*[r]{\memcost{\(+ O(2nT \sqrt{dL})\)}}
    \(u_t \gets 0 \in \mathbb{R}^d\)\Comment*[r]{\memcost{\(+ O(d)\)}}
    \For{\(l = L, \ldots, 1\)}{
        \tcc{Allocate \(\{\partial \ell / \partial e_{i}^{(l)}\}_{i=1}^{n}\), release \(\{e_{i}^{(l)}\}_{i=1}^{n}\); memory remains constant}
        \(\{\partial \ell / \partial e_{i}^{(l)}\}_{i=1}^{n} \gets\)\Backward{\(\ell\), \(\theta_t^{(l)}\)}\Comment*[r]{\memcost{\(+O(nT\sqrt{d/L}) - O(nT\sqrt{d/L})\)}}
        \(u_t^{(l)} \gets \frac{1}{n} \sum_{i=1}^{n}\sum_{\tau=1}^{T} (\partial \ell / \partial e_{i,\tau}^{(l)}) \otimes a_{i,\tau}^{(l)}\)\;
        \Release{\(a_i^{(l)}\), \(\partial \ell / \partial e_i^{(l)}\)} for all \(i\in [n]\)\Comment*[r]{\memcost{\(- O(2nT \sqrt{d/L})\)}}
    }
    \(\theta_{t+1} \gets \theta_t - \eta_t u_t\)\;
    \Release{\(u_t\)}\Comment*[r]{\memcost{\(- O(d)\)}}
    \Return{\(\theta_{t+1}\)}\;
\end{algorithm}

\subsection{Customized Tensor Lifetime Scheduling for Efficient Data-Regularized Update}\label{subsec:one-pass-tensor-lifetime-scheduling}
We now turn to a main system challenge introduced by the additional computational dependencies in data-regularized updates. Recall that, unlike standard training, our method must first determine the final update direction by solving a subset-selection problem based on per-sample scores, and only then assemble the corresponding gradient update. This creates additional computational dependencies on intermediate per-sample quantities that are not preserved in standard backpropagation.

\paragraph{Naive implementations.}
A naive implementation is a \emph{two-pass} approach: use one forward-backward pass to compute the per-sample scores and determine the selected subset, and then run a second forward-backward pass to compute the final gradient update using only the selected subset\footnote{For example, an existing online data selection method~\citep{wang2024greats} adopts a two-pass implementation. See their implementation in \url{https://github.com/Jiachen-T-Wang/GREATS/} and an in-depth discussion in \Cref{adxsec:two-pass-implementation}.}. This approach is simple and memory-efficient because each pass follows the standard training schedule, but it substantially increases runtime by effectively duplicating the backward computation.

Another naive implementation is a \emph{retain-graph} approach: retain the computation graph (e.g., by calling \texttt{retain\_graph=True} in \texttt{PyTorch}) during backpropagation and keep all intermediate tensors alive so that the same computation graph can be reused for both scoring and gradient assembly. While this avoids the second pass, it is impractical at LLM scale, since retaining the full graph together with the per-sample quantities needed by our method leads to prohibitive memory overhead.

\subsubsection{Customized Tensor Lifetime Scheduling}
We develop a customized tensor lifetime scheduling to obtain the data-regularized gradient update within \emph{one pass} while avoiding significant memory overhead.

For convenience, we consider \emph{layer-aligned} partition in the rest of the discussion, where each group contains one or more layers. We direct interested readers to \Cref{adxsubsec:general-partition} for a discussion on general partitions, where each group might contain partial parameters of a layer.

\paragraph{A starting trick: merging training and target batch.}
We start by addressing a problem that the training gradients and target gradients are typically calculated in separate passes, which prevents effective tensor lifetime scheduling in one pass. We address this problem by realizing that these gradients can be obtained from a \emph{merged batch} with both training and target samples. Specifically, suppose a training step uses a training batch \(B_t = \{z_i\}_{i=1}^n\) and a target batch \(B_t^{\star} = \{z_j^{\star}\}_{j=1}^m\). Rather than processing them in separate passes, we concatenate them into one batch of size \(N = n+m\) and define the merged loss
\[
    \ell
    \coloneqq \sum_{i=1}^n \ell(\theta_t; z_i) + \sum_{j=1}^m \ell(\theta_t; z_j^\star).
\]
Because different samples do not share activations, a single forward-backward pass on this merged loss produces all \emph{per-sample} backward signals for both the training samples and the target samples. From these, we can extract the quantities needed for constructing the data-regularized update direction.

\begin{remark}
    Doing forward and backward pass on the merged-batch loss \(\ell \coloneqq \sum_{i=1}^{n} \ell_i + \sum_{j=1}^{m} \ell_j^{\star}\) is an existing trick in the literature~\citep{pandya2025sidda,wang2024greats}. We note that this extends to non-loss-gradient target signals for an arbitrary \(\theta_t\)-differentiable \(f(\{z_j^\star\}_{j=1}^{m}; \theta_t)\) (\Cref{subsec:data-regularization-post-training-framework}): backward pass on the merged batch with \(\ell \coloneqq \sum_{i=1}^{n} \ell_i + f(\{z_j^\star\}_{j=1}^{m}; \theta_t)\) yields both the per-sample training gradients and the target gradient \(\nabla_{\theta} f\) in a single pass; however, when \(f\) depends on the training samples \(z_i\), the gradient information is mixed and separate passes are required.
\end{remark}

\paragraph{The computational dependencies in data-regularized update.}
Next, given the merged batch where we can obtain the per-sample quantities from one pass, we carefully examine the computational dependencies for obtaining the data-regularized updates. For both Global Subset Update and Group-Wise Subset Update, the algorithm can be decomposed into three parts:
\begin{enumerate}[leftmargin=*]
    \item \textbf{Scoring:} compute the score of each training sample against the target signal (at group level);
    \item \textbf{Subset selection:} determine the selected subset \(S_t\) (or \(S_{t,p}\) for each group \(G_p\));
    \item \textbf{Update assembly:} assemble the final update direction by averaging the gradients over the selected subset.
\end{enumerate}
The key issue is that both \emph{scoring} and \emph{update assembly} depend on per-sample quantities, while standard training only materializes the aggregated batch gradient \(\hat g_{\mathrm{tr}}\). In particular, the per-sample gradient \(g_i^{(l)} = \sum_{\tau=1}^{T} (\partial \ell/\partial e_{i,\tau}^{(l)})\otimes a_{i,\tau}^{(l)}\) is never explicitly stored in standard backpropagation. Yet for our method, it enters twice: first in the score computation, and later in the assembly of the selected subset. This is exactly the source of the extra dependencies that need to be handled.

A crucial observation is that both scoring and update assembly ultimately depend on the same local information at each layer, namely the pair \((a^{(l)}, \partial \ell / \partial e^{(l)})\). Once this pair is available, one can either materialize the corresponding per-sample gradients \(g_i^{(l)}\) explicitly, or compute the required scores and selected averages directly from it. This suggests that the fundamental scheduling question is not whether to preserve the entire computation graph, but rather how to retain this minimal local information long enough for all downstream computations to be completed.

\paragraph{Tensor lifetime scheduling.}
Given the computational dependencies, we now introduce the proposed tensor lifetime scheduling.

The key idea is to \emph{swap} the forward-cached tensor \(e^{(l)}\) with the backward-generated tensor \(\partial \ell / \partial e^{(l)}\) at each layer. Recall from \Cref{subsec:standard-training} that in standard training the forward pass caches \((a^{(l)}, e^{(l)})\), and during backward, \(e^{(l)}\) is consumed to produce \(\partial \ell / \partial e^{(l)}\). After this point, \(e^{(l)}\) is no longer needed. Therefore, instead of releasing everything immediately after the layer gradient is computed, we retain the pair \((a^{(l)}, \partial \ell/\partial e^{(l)})\) for later scoring and update assembly, while discarding \(e^{(l)}\). Since \(\partial \ell / \partial e^{(l)}\) has the same shape as \(e^{(l)}\), this replacement preserves the per-layer memory footprint up to lower-order bookkeeping costs.

Applying this swap iteratively across layers yields a one-pass schedule with the following structure:
\begin{enumerate}[leftmargin=*]
    \item run one forward pass on the merged batch and cache \((a^{(l)}, e^{(l)})\) as in standard training;
    \item during backward, replace each cached \(e^{(l)}\) with \(\partial \ell / \partial e^{(l)}\), so that after the backward pass the retained tensors are \((a^{(l)}, \partial \ell / \partial e^{(l)})\) for all layers;
    \item use these retained pairs to compute scores, determine the selected subset(s), and assemble the final update direction;
    \item release the retained tensors once all groups depending on them have been resolved.
\end{enumerate}

This schedule differs fundamentally from \texttt{retain\_graph=True}: we do \emph{not} keep the full autograd graph alive, but only the minimal local tensors needed for later computation. At the same time, unlike the two-pass implementation, we do \emph{not} rerun the backward pass. The result is a one-pass implementation that stays close to the peak memory cost of standard training while avoiding a near-doubling of runtime.

\paragraph{When can a group be resolved?}
The remaining question is when a group can be finalized and its tensors released. This depends on the partition granularity. For a general partition \(\mathcal{G} = \{G_p\}_{p=1}^{P}\), the subset \(S_{t,p}\) for group \(G_p\) cannot be determined until all score contributions from the layers intersecting \(G_p\) have been accumulated. Only after that can the corresponding update \(u_t^{(p)}\) be assembled. Therefore, a group can be released only after all of its constituent layers have finished both scoring and update assembly.

This reveals a direct connection between the statistical design and the systems design: finer partitions not only give more flexible update rules, but also allow earlier release of tensors and hence a memory footprint closer to standard training\footnote{It is worth noting that the peak memory cost remains similar regardless of the partition granularity, since the peak memory is achieved right after the forward pass. However, finer granularity becomes helpful when considering other memory-saving techniques, such as the activation checkpointing discussed in \Cref{subsec:compatibility}.}.

\subsubsection{Case Study: Group Granularity}
We illustrate the above scheduling with two extreme cases.

\paragraph{Global Subset Update.}
For Global Subset Update, all parameters share a single global subset \(S_t\). Consequently, the score of each training sample must aggregate contributions from \emph{all} layers before \(S_t\) can be determined. This means the retained pairs \((a^{(l)}, \partial \ell / \partial e^{(l)})\) must remain alive across the entire model until scoring is complete. After the global subset \(S_t\) is determined, the final update is assembled by revisiting each retained layer and averaging the per-sample gradients of the selected samples (See \Cref{algo:global-subset-update-one-pass}). Compared with standard training, the peak memory remains close, but the high-memory period now spans almost the entire backward-and-selection phase (Middle row in \Cref{fig:compare}).

\begin{algorithm}[htpb]\label{algo:global-subset-update-one-pass}
    \DontPrintSemicolon{}
    \caption{Global Subset Update}
    \KwData{Model \(\theta_t\), training data \(\{z_i\}_{i=1}^n\), target data \(\{z^\star_j\}_{j=1}^m\), learning rate \(\eta_t\)}
    \KwResult{Updated model parameters \(\theta_{t+1}\)}
    \SetKwFunction{Forward}{Forward}
    \SetKwFunction{Backward}{Backward}
    \SetKwFunction{Select}{Select-S}
    \SetKwFunction{Release}{Release}

    \BlankLine

    (\(\ell\), \(\{a_{i}^{(l)}, e_{i}^{(l)}\}_{l,i} \cup \{a_{j}^{\star(l)}, e_{j}^{\star(l)}\}_{l,j}\))\(\gets\)\Forward{\(\theta_t\), \(\{z_i\}_{i=1}^{n} \cup \{z^\star_j\}_{j=1}^{m}\)}\Comment*[r]{\memcost{\(+ O(2NT \sqrt{dL})\)}}
    \For(\Comment*[f]{Retain all pairs via swapping}){\(l = L, \ldots, 1\)}{
        \(\{\partial \ell / \partial e_{i}^{(l)}\}_{i=1}^{n} \cup \{\partial \ell / \partial e_{j}^{\star(l)}\}_{j=1}^{m} \gets\)\Backward{\(\ell\), \(\theta_t^{(l)}\)}\;
    }
    \tcc{Post-hoc: scoring from retained activations}
    \For(\Comment*[f]{\memcost{Peak: \(2NT \sqrt{dL}\)}}){\(l = L, \ldots, 1\)}{
        \(g_i^{(l)} \gets \sum_{\tau=1}^{T} (\partial \ell / \partial e_{i,\tau}^{(l)}) \otimes a_{i,\tau}^{(l)}\) for each \(i \in [n]\)\Comment*[r]{\memcost{\(+ O(n\sqrt{d/L})\)}}
        \(\hat{g}_\star^{(l)} \gets \frac{1}{m} \sum_{j=1}^{m}\sum_{\tau=1}^{T} (\partial \ell / \partial e_{j,\tau}^{\star(l)}) \otimes a_{j,\tau}^{\star(l)}\)\Comment*[r]{\memcost{\(+ O(\sqrt{d/L})\)}}
        \Release{\(a_j^{\star(l)}\), \(\partial \ell / \partial e_j^{\star(l)}\)} for all \(j \in [m]\)\Comment*[r]{\memcost{\(- O(2mT \sqrt{d/L})\)}}
        \(s_i^{(l)} \gets \langle \hat{g}_\star^{(l)}, g_i^{(l)} \rangle\) for each \(i \in [n]\)\;
        \Release{\(g_i^{(l)}\), \(\hat{g} _{\star}^{(l)}\)} for all \(i \in [n]\)\Comment*[r]{\memcost{\(- O((n+1) \sqrt{d / L} )\)}}
    }
    \(s_i \gets \sum_{l=1}^L s_i^{(l)}\) for all \(i \in [n]\)\Comment*[r]{Global scores}
    \(S_t \gets\)\Select{\(\{s_i\}_{i=1}^n\), \(k\)}\;
    \BlankLine
    \tcc{Post-hoc: assemble gradients from retained activations}
    \(u_t \gets 0 \in \mathbb{R}^d\)\Comment*[r]{\memcost{\(+ O(d)\)}}
    \For(){\(l = L, \ldots, 1\)}{
        \(u_t^{(l)} \gets \frac{1}{k} \sum_{i \in S_t}\sum_{\tau=1}^{T} (\partial \ell / \partial e_{i,\tau}^{(l)}) \otimes a_{i,\tau}^{(l)}\)\;
        \Release{\(a_i^{(l)}\), \(\partial \ell / \partial e_i^{(l)}\)} for all \(i \in [n]\)\Comment*[r]{\memcost{\(- O(2nT \sqrt{d/L})\)}}
    }
    \(\theta_t \gets \theta_t - \eta_t u_t\)\;
    \Release{\(u_t\)}\Comment*[r]{\memcost{\(- O(d)\)}}
    \Return{\(\theta_{t+1}\)}\;
\end{algorithm}

\paragraph{Layer-Wise Subset Update.}
At the other extreme, for Layer-Wise Subset Update, each layer forms its own group. In this case, the score computation, subset selection, and update assembly for layer \(l\) depend only on the local tensors of that same layer. Therefore, once the backward pass reaches layer \(l\), we can immediately compute the layer-wise scores, determine \(S_{t,l}\), assemble the update \(u_t^{(l)}\), and release all retained tensors for that layer before moving to layer \(l-1\) (See \Cref{algo:layer-wise-subset-update}). As a result, the tensor lifetime schedule almost exactly matches that of standard training, and the memory footprint is closest to the standard baseline (Bottom row in \Cref{fig:compare}).

\begin{algorithm}[htpb]\label{algo:layer-wise-subset-update}
    \DontPrintSemicolon{}
    \caption{Layer-Wise Subset Update}
    \KwData{Model \(\theta_t\), training data \(\{z_i\}_{i=1}^n\), target data \(\{z^\star_j\}_{j=1}^m\), learning rate \(\eta_t\)}
    \KwResult{Updated model parameters \(\theta_{t+1}\)}
    \SetKwFunction{Forward}{Forward}
    \SetKwFunction{Backward}{Backward}
    \SetKwFunction{Release}{Release}
    \SetKwFunction{Select}{Select-S}

    \BlankLine

    (\(\ell\), \(\{a_{i}^{(l)}, e_{i}^{(l)}\}_{l,i} \cup \{a_{j}^{\star(l)}, e_{j}^{\star(l)}\}_{l,j}\))\(\gets\)\Forward{\(\theta_t\), \(\{z_i\}_{i=1}^{n} \cup \{z^\star_j\}_{j=1}^{m}\)}\Comment*[r]{\memcost{\(+ O(2NT \sqrt{dL})\)}}
    \(u_t \gets 0 \in \mathbb{R}^d\)\Comment*[r]{\memcost{\(+ O(d)\)}}
    \For{\(l = L, \ldots, 1\)}{
        \(\{\partial \ell / \partial e_{i}^{(l)}\}_{i=1}^{n} \cup \{\partial \ell / \partial e_{j}^{\star(l)}\}_{j=1}^{m} \gets\)\Backward{\(\ell \), \(\theta_t ^{(l)}\)}\;
        \(g_i^{(l)} \gets \sum_{\tau=1}^{T} (\partial \ell / \partial e_{i,\tau}^{(l)}) \otimes a_{i,\tau}^{(l)}\) for each \(i \in [n]\)\Comment*[r]{\memcost{\(+ O(n\sqrt{d/L})\)}}
        \(\hat{g}_\star^{(l)} \gets \frac{1}{m} \sum_{j=1}^{m} \sum_{\tau=1}^{T} (\partial \ell / \partial e_{j,\tau}^{\star(l)}) \otimes a_{j,\tau}^{\star(l)}\)\Comment*[r]{\memcost{\(+ O(\sqrt{d/L})\)}}
        \(s_i^{(l)} \gets \langle \hat{g}_\star^{(l)}, g_i^{(l)} \rangle\) for each \(i \in [n]\)\;
        \(S_{t,l} \gets \)\Select{\(\{s_i^{(l)}\}_{i=1}^n\), \(k\)}\;
        \(u_t^{(l)} \gets \frac{1}{k} \sum_{i \in S_{t,l}} g_i^{(l)}\)\;
        \Release{\(g_i^{(l)}\), \(\hat{g} _{\star}^{(l)}\)} for all \(i \in [n]\)\Comment*[r]{\memcost{\(- O((n+1) \sqrt{d / L} )\)}}
        \Release{\(a_i^{(l)}\), \(\partial \ell / \partial e_i^{(l)}\), \(a_j^{\star(l)}\), \(\partial \ell / \partial e_j^{\star(l)}\)} for all \(i \in [n], j \in [m]\)\Comment*[r]{\memcost{\(- O(2NT \sqrt{d/L})\)}}
    }
    \(\theta_t \gets \theta_t - \eta_t u_t\)\;
    \Release{\(u_t\)}\Comment*[r]{\memcost{\(- O(d)\)}}
    \Return{\(\theta_{t+1}\)}\;
\end{algorithm}

\begin{figure}[htpb]
    \centering
  \def\svgwidth{\columnwidth}
  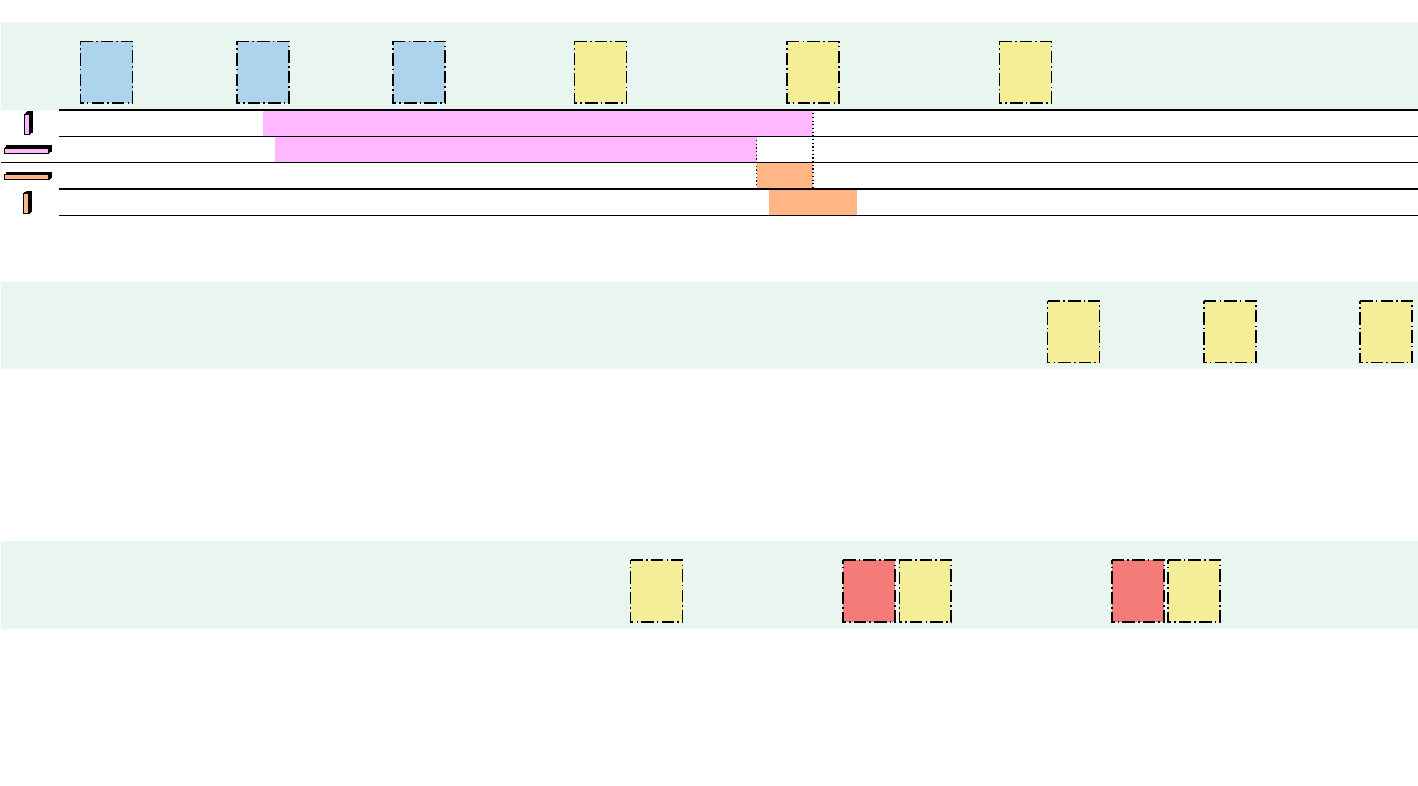

    \caption{Memory footprint comparison. \textbf{Top:} Standard Training Update (\Cref{algo:standard-training}). \textbf{Middle:} Global Subset Update (\Cref{algo:global-subset-update-one-pass}). \textbf{Bottom:} Layer-Wise Subset Update (\Cref{algo:layer-wise-subset-update}). All three methods achieve comparable peak memory per layer, but Global Subset Update must retain both \(a^{(l)}\) and \(\partial \ell / \partial e^{(l)}\) for all \(l\) throughout the backward pass until the global subset is determined, while standard training and Layer-Wise Subset Update release them on-the-fly.}
    \label{fig:compare}
\end{figure}

\paragraph{Intermediate granularities.}
More generally, if each group contains a small number of consecutive layers, then a group's tensors only need to be retained until the backward pass finishes those layers. This gives a smooth interpolation between the two extremes above. In particular, smaller groups allow earlier release and lower sustained memory usage, whereas larger groups delay release and make the schedule resemble Global Subset Update.

\begin{remark}
    One benefit of more aggressive memory release (as in Layer-Wise Subset Update) is that, in practice, when layer sizes are not uniform, later in the backward pass, encountering a larger layer provides enough free memory to process it.
\end{remark}

In summary, the proposed customized tensor lifetime schedule resolves a core system challenge of data-regularized updates: it avoids the runtime overhead of a two-pass implementation and the prohibitive memory cost of retaining the full graph, while enabling a practical one-pass realization under the memory constraints of modern LLM training.

\subsection{Efficient Implementations of Per-Sample Scoring}\label{subsec:efficient-scoring}
We next study efficient implementations of per-sample scoring, which constitutes a major computational overhead introduced by data-regularized updates. While the tensor lifetime schedule in \Cref{subsec:one-pass-tensor-lifetime-scheduling} resolves the memory-management challenge of preserving the necessary intermediate quantities within one pass, the overall efficiency of the method still depends critically on how the per-sample scores are computed.

In this subsection, we revisit two existing methods for score computation and introduce two new variants. We then provide a systematic complexity analysis in terms of the key training hyperparameters. This analysis reveals that the relative advantages of different implementations depend strongly on the hyperparameter regime, and that this regime-dependent comparison has not been fully characterized in the existing literature. Among these methods, one of our proposed variants---an approximate scoring method---achieves the best computational efficiency across all regimes.

Throughout this section, we focus on the score restricted to one particular layer, \(s_i^{(l)}\), since when a parameter group spans multiple layers, the total score is simply the sum of the scores from those layers.

\begin{table}[htpb]
    \centering
    \caption{Computational overhead of scoring per layer for \(n\) training samples and \(m\) target samples, where \(N = n + m\), \(T\) denotes the sequence length, and \(\kappa\) is the compressed gradient dimension (\(\kappa \ll d/L\)). All costs listed below represent additional overhead beyond standard backpropagation. We assume fully batched execution in calculating the complexity; micro-batching over samples or tokens can trade parallelism for lower memory, which further favors the proposed PIP method.}
    \label{tab:scoring-comparison}
    \begin{adjustbox}{max width=\linewidth}
        \begin{tabular}{lcc}
            \toprule
            \textbf{Scoring method} & \textbf{FLOPs per layer}        & \textbf{Memory per layer (entries)} \\
            \midrule
            Direct                  & \(2NT(d/L) + n(d/L)\)           & \((n+1)(d/L)\)                      \\
            GIP                     & \(4nmT^2 \sqrt{d/L}\)           & \(2nmT^2\)                          \\
            PIP                     & \(2NT(d/L) + nT\sqrt{d/L}\)     & \(d/L + nT\sqrt{d/L}\)              \\
            Compressed              & \(O(NT \kappa) + (2n+m)\kappa\) & \((n+1) \kappa\)                    \\
            \bottomrule
        \end{tabular}
    \end{adjustbox}
\end{table}

\subsubsection{Existing Methods: Direct and Ghost Inner Product (GIP)}\label{subsubsec:direct-gip}
We first revisit two existing methods, \emph{Direct} and \emph{Ghost Inner Product} (GIP).

\paragraph{Direct.}
Direct is a naive approach for the score computation. It directly materializes each per-sample gradient \(g_i^{(l)} = \sum_\tau (\partial \ell / \partial e_{i,\tau}^{(l)}) \otimes a_{i,\tau}^{(l)}\) and the target gradient \(\hat{g}_\star^{(l)}\) first, then computes the score as \(s_i^{(l)} = \langle \hat{g}_\star^{(l)}, g_i^{(l)} \rangle\). Existing literature criticizes this naive approach for materializing \(O(n)\) per-sample gradient vectors and incurring high computational costs, which led to the GIP introduced below~\citep{wang2024greats}. However, we demonstrate that the comparison between Direct and GIP is more nuanced than the literature suggested when considering the sequence length \(T\) and the number of target samples \(m\).

\paragraph{Ghost Inner Product (GIP)~\citep{wang2024greats}.}
GIP avoids the full gradient materialization (thus the name ``Ghost'') by evaluating the same inner product in an alternative contraction order from \(a^{(l)}\) and \(\partial \ell / \partial e^{(l)}\): contracting over the model dimension instead of the token dimension yields \(T \times T\) cross-correlations. Treating \(\partial \ell / \partial e^{(l)}, a^{(l)} \in \mathbb{R}^{\sqrt{d/L} \times T}\) as matrices whose columns are the per-token vectors, the score is
\[
    s_i^{(l)}
    = \frac{1}{m}\sum_{j=1}^{m} \left\langle \Bigg(\frac{\partial \ell}{\partial e_i^{(l)}}\Bigg)^{\top} \frac{\partial \ell}{\partial e_j^{\star(l)}}, (a_i^{(l)})^{\top} a_j^{\star(l)} \right\rangle,
\]
where each factor is a \(T \times T\) matrix product computed over all \(nm\) training-target pairs. While this avoids gradient materialization, it introduces quadratic dependence on \(T\) and linear dependence on \(m\).

\paragraph{Comparison between Direct and GIP.}
As shown in \Cref{tab:scoring-comparison}, GIP is competitive only when \(m\) and \(T\) are relatively small. By comparing the leading terms in the computational complexity, GIP is preferable to Direct only when \(mT \lesssim \sqrt{d/L}/2\). A similar trend holds for the memory cost comparison as well.

\subsubsection{Per-Token Inner Product (PIP)}\label{subsubsec:per-token-inner-product}
The comparison shows that GIP suffers from long context (large \(T\)) while Direct suffers from large parameter size per layer (large \(d/L\)). We now introduce a third method, \emph{Per-Token Inner Product} (PIP), that interpolates between these two methods and provides a more favorable computation cost trade-off in an important regime.

Observing that the GIP differs from Direct by swapping the order of some linear operators (sum and inner product) in calculating the score \(s_i^{(l)}\), our proposed PIP applies a different order. Let \(\hat{G}_\star^{(l)} \in \mathbb{R}^{\sqrt{d/L}\times \sqrt{d/L}}\) denote the matrix form of the \emph{aggregated} target gradient \(\hat{g}_\star^{(l)}\). Then the score can be written as
\[
    s_i^{(l)}
    = \sum_{\tau=1}^{T} \Bigg(\frac{\partial \ell}{\partial e_{i,\tau}^{(l)}}\Bigg)^\top \hat{G}_\star^{(l)} a_{i,\tau}^{(l)}.
\]
This expression factors the computation into a matrix-vector product \(\hat{G}_\star^{(l)} a_{i,\tau}^{(l)} \in \mathbb{R}^{\sqrt{d/L}}\), followed by a dot product with \(\partial \ell / \partial e_{i,\tau}^{(l)}\). Importantly, this avoids materializing the full per-sample training gradient \(g_i^{(l)}\): the score is accumulated directly from token-level quantities that are already available under the tensor lifetime schedule.

\paragraph{Comparison with existing methods.}
As shown in \Cref{tab:scoring-comparison}, the complexities of PIP remain linear in the sequence length \(T\), and therefore avoid the quadratic dependence on \(T\) that makes GIP unattractive for long-context training. Compared with Direct, PIP has the same dominant FLOPs term \(2NT(d/L)\), but replaces the lower-order term \(n(d/L)\) with \(nT\sqrt{d/L}\). Therefore, when \(T \lesssim \sqrt{d/L}\), PIP is more efficient than Direct in both FLOPs and memory. Intuitively, in this regime, it is cheaper to work directly with token-level quantities than to first materialize \(n\) full per-sample gradients. When \(T\) becomes large, however, this advantage disappears: the additional \(nT\sqrt{d/L}\) term dominates, and Direct becomes preferable. Thus, PIP occupies an intermediate regime between GIP and Direct: it improves substantially over GIP for long sequences, and improves over Direct when the context length is moderate relative to the per-layer hidden dimension.

\subsubsection{Compressed Scoring}\label{subsubsec:compressed-scoring}
All three methods above compute the score exactly, and therefore their cost necessarily depends on the gradient dimension \(d/L\). We now introduce a fourth method, \emph{Compressed Scoring}, which reduces this cost by approximately computing the score in a low-dimensional compressed space.

Let \(\Pi^{(l)} : \mathbb{R}^{d/L} \to \mathbb{R}^{\kappa}\) be a compression map with \(\kappa \ll d/L\). Instead of directly computing the full per-sample training gradient \(g_i^{(l)}\) and target gradient \(\hat g_\star^{(l)}\), we first compute their compressed versions
\[
    \tilde g_i^{(l)} \coloneqq \Pi^{(l)}(g_i^{(l)}) \in \mathbb{R}^{\kappa},
    \qquad
    \tilde g_\star^{(l)} \coloneqq \Pi^{(l)}(\hat g_\star^{(l)}) \in \mathbb{R}^{\kappa},
\]
and then approximate the score by
\[
    s_i^{(l)} \approx \langle \tilde g_\star^{(l)}, \tilde g_i^{(l)} \rangle.
\]

The key point is that \(\tilde g_i^{(l)}\) and \(\tilde g_\star^{(l)}\) can be computed directly from the retained token-level quantities \((a^{(l)}, \partial \ell / \partial e^{(l)})\), without ever materializing the full \(d/L\)-dimensional gradients. In particular, since \(g_i^{(l)} = \sum_{\tau=1}^{T} (\partial \ell / \partial e_{i,\tau}^{(l)} )\otimes a_{i,\tau}^{(l)}\), state-of-the-art gradient compression methods in data attribution literature~\citep{hu2025grass,choe2025your} can exploit this outer-product structure and apply the projection \(\Pi^{(l)}\) in a factorized manner (see \Cref{adxsubsec:efficient-projection} for more details). As a result, the compressed per-sample gradient can be obtained from the same retained local tensors used throughout \Cref{subsec:one-pass-tensor-lifetime-scheduling}, but with cost scaling only in the compressed dimension \(\kappa\).

\paragraph{Comparison with exact methods.}
As shown in \Cref{tab:scoring-comparison}, Compressed Scoring has per-layer cost \(O(NT\kappa) + (2n+m)\kappa\) and memory cost \((n+1)\kappa\), which removes the dependence on the original gradient dimension \(d/L\) from both FLOPs and memory. Since \(\kappa \ll d/L\), this makes Compressed Scoring strictly more efficient than all three exact methods across all regimes in terms of the leading dimension dependence.

While the gain in efficiency of Compressed Scoring comes at the price of approximation, empirically, we find the approximate scores remain a reliable surrogate for ranking or filtering training samples~\citep{hu2025grass}. In our implementation, we therefore use Compressed Scoring as the default choice.

\subsection{Compatibility with Memory-Saving Techniques in Modern Infrastructure}\label{subsec:compatibility}
Modern LLM training pipelines employ a range of memory-saving techniques to mitigate the tight memory constraints. Since our methods introduce additional per-sample computational dependencies beyond standard training, a key system question is whether the proposed methods remain compatible with these techniques, including parameter-efficient training, activation checkpointing, and gradient accumulation.

In this subsection, we show that the proposed methods, especially Layer-Wise Subset Update, can be largely integrated with common memory-saving strategies given our system optimizations.

\subsubsection{Compatibility with Parameter-Efficient Training}\label{subsubsec:low-dimensional-parameter-training}
We first consider two widely-used parameter-efficient training methods, LoRA~\citep{hu2022lora} and MeSO~\citep{zhao2024galore}. Both reduce memory consumption by restricting or compressing the trainable update space, and both are naturally compatible with our framework.

\paragraph{LoRA.}
Recall that LoRA reparametrizes each linear layer as \(W_t^{(l)} + B_t^{(l)} A_t^{(l)}\), where \(B_t^{(l)} \in \mathbb{R}^{\sqrt{d/L} \times r}\) and \(A_t^{(l)} \in \mathbb{R}^{r \times \sqrt{d/L}}\) with rank \(r \ll \sqrt{d/L}\), and only the adapter parameters are updated. Since \(A_t^{(l)}\) and \(B_t^{(l)}\) are themselves linear maps, per-sample gradients retain the same outer-product factorization as in the full-parameter case, with the adapter dimension \(2r\sqrt{d/L}\) replacing the full dimension \(d/L\). All scoring and gradient computation carry over directly at proportionally lower cost (details in \Cref{adxsubsec:lora-integration}).

\paragraph{MeSO.}
MeSO methods reduce memory in a different way: instead of restricting the trainable parameters, they maintain optimizer states and updates in a compressed subspace. This is also compatible with our framework, since data regularization only changes how the update direction is \emph{chosen}, not how the resulting gradient is \emph{applied}. In particular, any scoring method from \Cref{subsec:efficient-scoring} can be used to determine the selected subset, after which the resulting gradient update is passed to the MeSO optimizer in the usual compressed form.

There is also a tighter integration between MeSO and the Compressed Scoring method in \Cref{subsubsec:compressed-scoring}. Since both rely on low-dimensional gradient representations, the same compressed per-sample gradients can be used both for score computation and for the optimizer update, avoiding redundant computation. This makes MeSO especially well aligned with our framework: it reduces the memory cost of optimization itself, while also supporting efficient approximate scoring in the same compressed space. We detail the full space of design choices in \Cref{adxsubsec:meso-integration}.

\subsubsection{Compatibility with Memory-Saving Scheduling Techniques}\label{subsubsec:memory-saving-techniques}
We now turn to two techniques that reduce memory by changing the temporal structure of training: activation checkpointing and gradient accumulation. Unlike LoRA and MeSO, these methods affect \emph{when} intermediate tensors are available, and therefore interact more directly with our tensor lifetime schedule.

\paragraph{Activation checkpointing.}
Activation checkpointing reduces memory by discarding some forward activations and recomputing them on demand during backward. Our one-pass tensor lifetime schedule is compatible with checkpointing whenever the parameter groups align with the checkpointing schedule, so that all computations associated with a group can be resolved within a checkpoint segment. Layer-Wise Subset Update satisfies this condition naturally, since each layer forms its own group and can therefore be resolved within its own backward step.

When a group spans multiple checkpoint segments, however, the one-pass schedule can no longer preserve all required local tensors across segment boundaries without defeating the purpose of checkpointing. In this case, one must revert to the two-pass implementation discussed in \Cref{subsec:one-pass-tensor-lifetime-scheduling}: the first pass computes scores and determines the subsets, and the second pass assembles the update (see \Cref{adxsubsubsec:compatability-with-memory-saving-techniques}). Thus, the compatibility with checkpointing depends on whether the partition granularity is aligned with the memory-saving schedule.

\paragraph{Gradient accumulation.}
Gradient accumulation reduces peak memory by splitting a large batch into micro-batches that are processed sequentially, and the activations and pre-activation gradients associated with an earlier micro-batch are typically released before the next one is processed for saving memory. This conflicts with our one-pass implementation, which relies on retaining the pairs \((a_i^{(l)}, \partial \ell / \partial e_i^{(l)})\) for all samples in the full batch in order to compute scores and assemble the final update. When the rule for determining \(S_{t,p}\) decomposes across micro-batches, however, this issue resolves: each micro-batch can be scored, solved, and assembled independently during its own forward and backward pass, and the resulting updates can then be accumulated across micro-batches. Thresholding is one such example. In contrast, rules such as top-\(k\) depend on statistics computed over the entire batch, so the micro-batches cannot be resolved independently. In this case, we again resort to the two-pass implementation described in \Cref{adxsubsubsec:compatability-with-memory-saving-techniques}.
\section{Experiments}\label{sec:experiment}
We evaluate our proposed methods across three post-training paradigms: supervised fine-tuning (SFT; \Cref{subsec:SFT}), reinforcement learning from human feedback (RLHF; \Cref{subsec:RLHF}), and reinforcement learning with verifiable rewards (RLVR; \Cref{subsec:RLVR}). Across these settings, we compare against state-of-the-art baselines and show consistent improvements in downstream-task performance and convergence speed. We then benchmark the efficiency of the end-to-end training pipeline and different scoring mechanisms in \Cref{subsec:system-efficiency}, demonstrating that our implementation incurs minimal overhead. Our code can be found at \url{https://github.com/TRAIS-Lab/Dr.Post-Training}.

Unless otherwise specified, all reported results are averaged over \(5\) random seeds with error bars given as standard errors of the mean, and evaluation is performed on held-out test sets disjoint from the data used for regularization or training. Additional details are deferred to \Cref{adxsec:experiment}.

\subsection{Supervised Fine-Tuning}\label{subsec:SFT}
We first evaluate our methods in the SFT setting, where a model is fine-tuned on an instruction-style training dataset and evaluated on a distinct downstream benchmark. This setting tests whether data-regularized updates can better transfer general supervision toward a target objective. Our results show that Layer-Wise Subset Update substantially outperforms the prior state-of-the-art online data selection baseline. We further provide a case study to analyze the factors driving this performance gap.

\subsubsection{Setup}
We fine-tune \textsc{Llama-3.2-1B}~\citep{grattafiori2024llama} on four training/target pairs that span different task types and training-pool compositions.
\begin{enumerate*}[label=\arabic*.)]
    \item \texttt{alpaca}~\citep{alpaca}/\texttt{samsum}~\citep{gliwa2019samsum} (dialogue summarization),
    \item \texttt{less-mix}~\citep{xia2024less}/\texttt{tydiqa}~\citep{clark2020tydi} (multilingual extractive QA),
    \item \texttt{triviaqa}~\citep{joshi2017triviaqa}/\texttt{nq\_open}~\citep{kwiatkowski2019natural} (closed-book factoid QA), and
    \item \texttt{less-mix}~\citep{xia2024less}/\texttt{squad}~\citep{rajpurkar2016squad} (reading-comprehension QA without context).
\end{enumerate*}

Across all four settings we report results under LoRA, our default fine-tuning variant. To additionally probe how the choice of fine-tuning method interacts with data regularization and demonstrate flexibility of our method with different fine-tuning methods, we also report results for full-parameter fine-tuning and MeSO on \texttt{alpaca}/\texttt{samsum}. All runs share the same fixed hyperparameters; see \Cref{adxsubsec:details-of-sft} for the full configuration.

Data regularization is done with a small held-out target set (\(16\) samples in total), following the setting of the prior state-of-the-art online baseline GREATS~\citep{wang2024greats}. Specifically, we consider \(n=8\) and \(m=1\) for training, i.e., each optimization step has access to \(8\) samples from the general training set and \(1\) sample from the held-out target set. In our framework, GREATS corresponds to \emph{Global Subset Update}, while our method uses \emph{Layer-Wise Subset Update}, where we approximately solve \Cref{eq:proj} by the top-\(k\) among the scores with \(k = n/2\).\footnote{We omit the Target-Only Update in our experiments (i.e., training on only \(16\) samples) due to training instability.}

\subsubsection{Results}
\Cref{fig:SFT-alpaca-samsum-dynamics} shows the training dynamics of \texttt{alpaca}/\texttt{samsum} with three fine-tuning methods, and \Cref{tab:SFT-downstream} reports downstream F1 on the other three QA settings, with \Cref{fig:SFT-lora-dynamics} showing their corresponding training dynamics. Layer-Wise Subset Update consistently and substantially outperforms both Full-Training Update and Global Subset Update on every QA setting (and Global Subset, in turn, improves on Full-Training). The gap is most pronounced when the training pool is broad, i.e., \texttt{less-mix}/\texttt{tydiqa} jumps from \(10.1\%\) to \(27.4\%\) F1 between Full-Training and Layer-Wise Subset, confirming that data regularization matters most when the general distribution is misaligned with the target. We defer a detailed efficiency analysis to \Cref{subsec:system-efficiency}.

\begin{figure}[htpb]
    \centering
    \begin{subfigure}[t]{0.32\linewidth}
        \caption{Full parameter}
        \includegraphics[width=\linewidth]{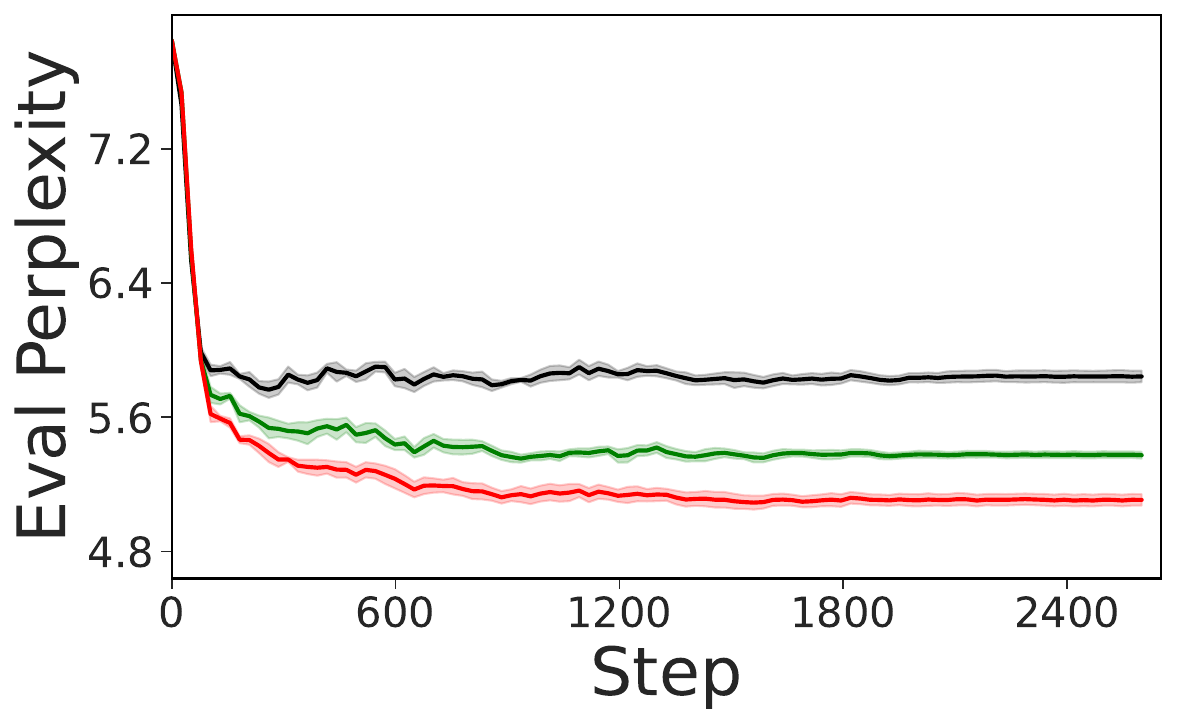}
    \end{subfigure}\hfill
    \begin{subfigure}[t]{0.32\linewidth}
        \caption{LoRA}
        \includegraphics[width=\linewidth]{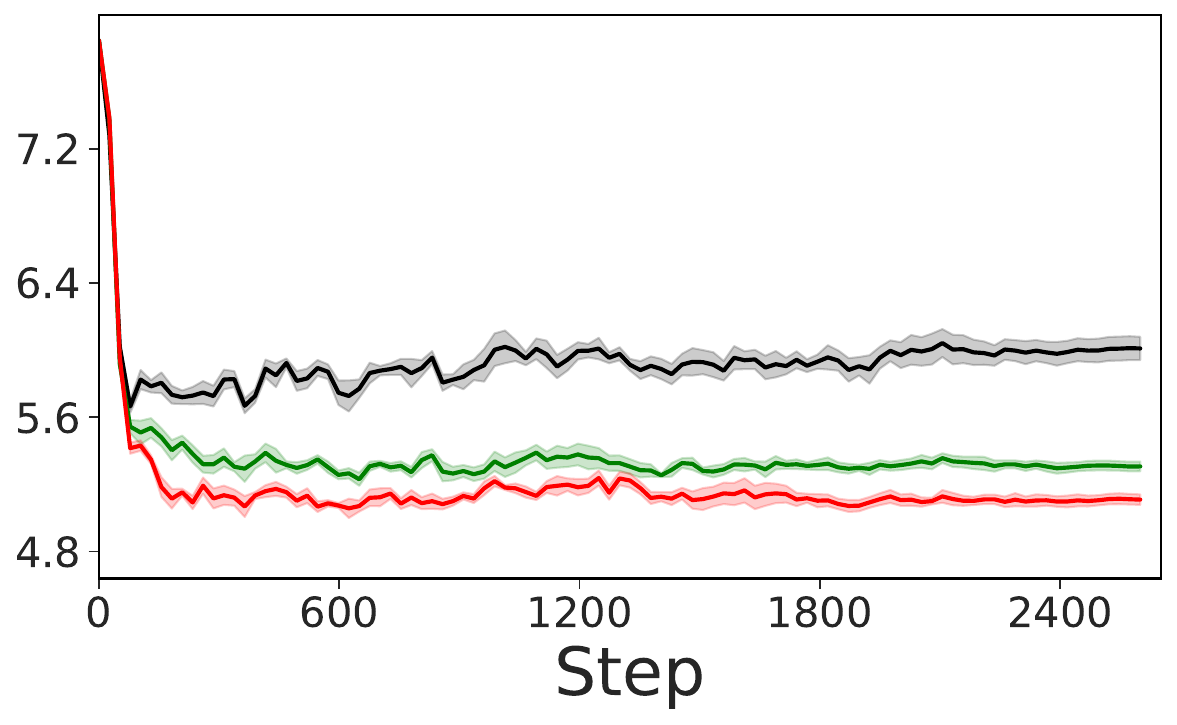}
    \end{subfigure}\hfill
    \begin{subfigure}[t]{0.32\linewidth}
        \caption{MeSO}
        \includegraphics[width=\linewidth]{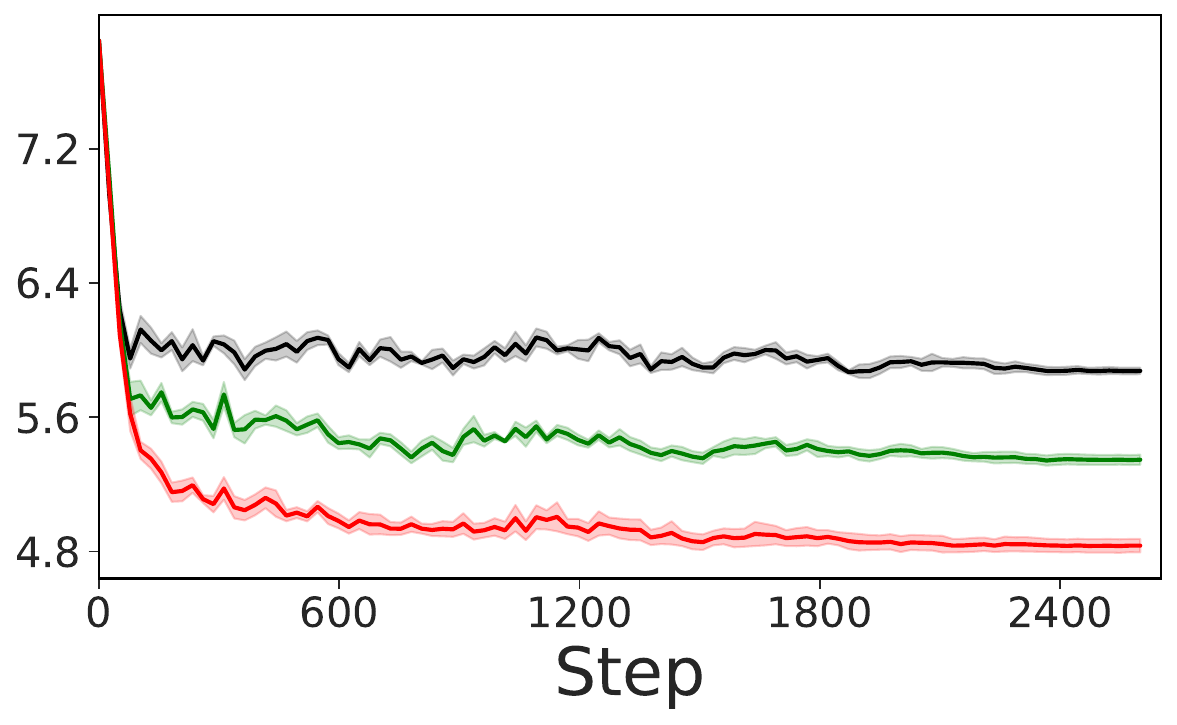}
    \end{subfigure}
    \par\smallskip
    \includegraphics[width=0.45\linewidth]{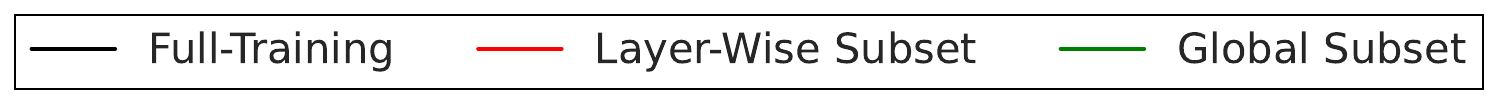}
    \caption{SFT training dynamics on \texttt{alpaca}/\texttt{samsum} across the three fine-tuning methods: evaluation perplexity throughout training.}
    \label{fig:SFT-alpaca-samsum-dynamics}
\end{figure}

\begin{table}[htpb]
    \centering
    \caption{SFT downstream F1 (\%) on the three QA settings.}
    \label{tab:SFT-downstream}
    \begin{tabular}{l c c c c}
        \toprule
        \textbf{Method}   &  & \texttt{less-mix}/\texttt{tydiqa} & \texttt{triviaqa}/\texttt{nq\_open} & \texttt{less-mix}/\texttt{squad} \\
        \midrule
        Full-Training     &  & \(10.1 \pm 0.2\)                  & \(12.6 \pm 0.1\)                    & \(7.6 \pm 0.1\)                  \\
        Global Subset     &  & \(13.4 \pm 1.0\)                  & \(13.7 \pm 0.2\)                    & \(8.3 \pm 0.1\)                  \\
        \midrule
        Layer-Wise Subset &  & \(\mathbf{27.4 \pm 1.3}\)         & \(\mathbf{14.6 \pm 0.4}\)           & \(\mathbf{9.1 \pm 0.3}\)         \\
        \bottomrule
    \end{tabular}
\end{table}

\begin{figure}[htpb]
    \centering
    \begin{subfigure}[t]{0.32\linewidth}
        \caption{\texttt{less-mix}/\texttt{tydiqa}}
        \includegraphics[width=\linewidth]{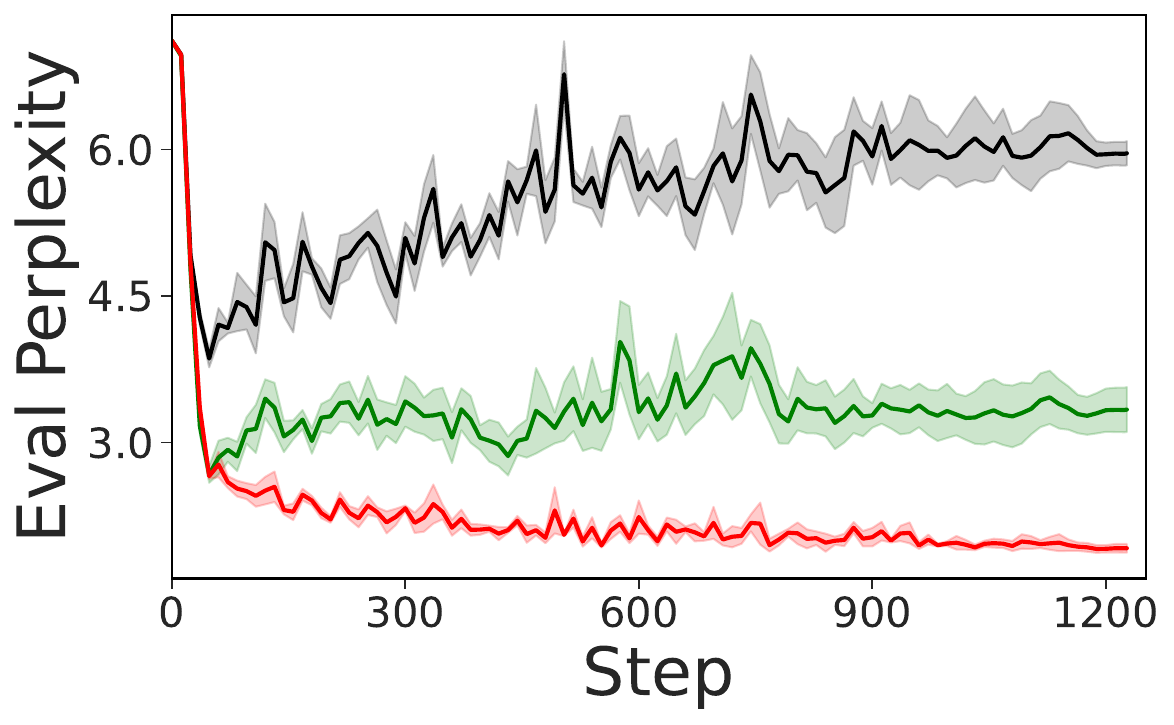}
    \end{subfigure}\hfill
    \begin{subfigure}[t]{0.32\linewidth}
        \caption{\texttt{triviaqa}/\texttt{nq\_open}}
        \includegraphics[width=\linewidth]{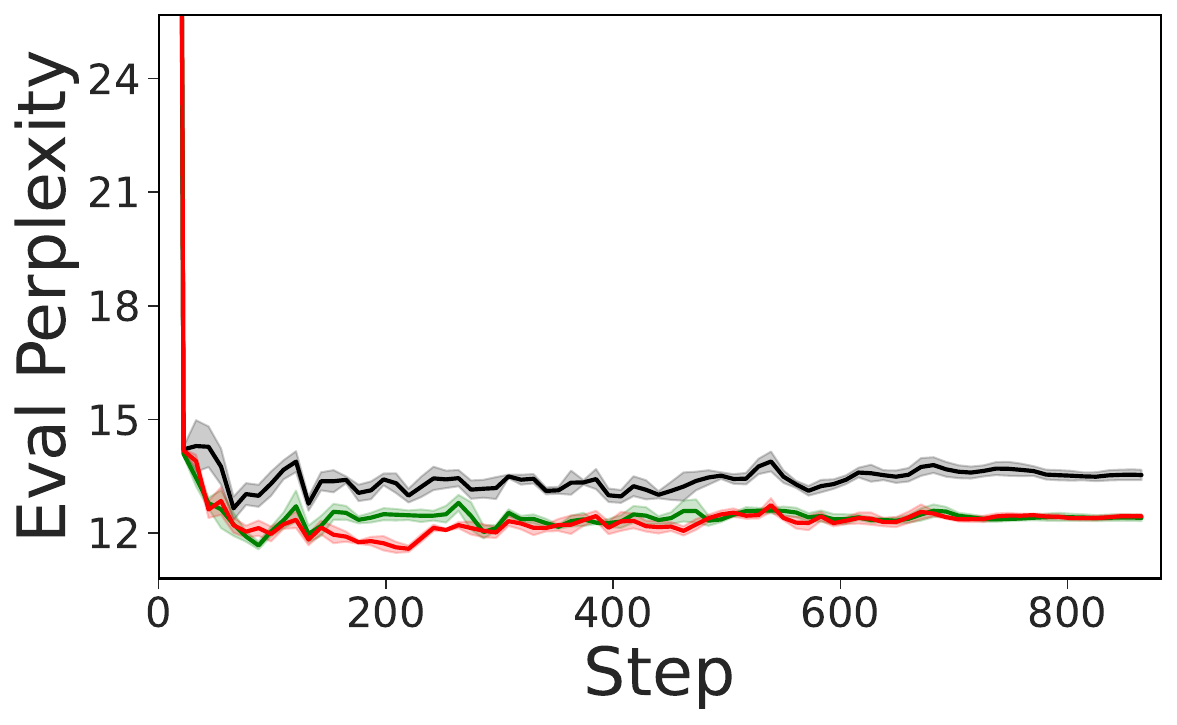}
    \end{subfigure}\hfill
    \begin{subfigure}[t]{0.32\linewidth}
        \caption{\texttt{less-mix}/\texttt{squad}}
        \includegraphics[width=\linewidth]{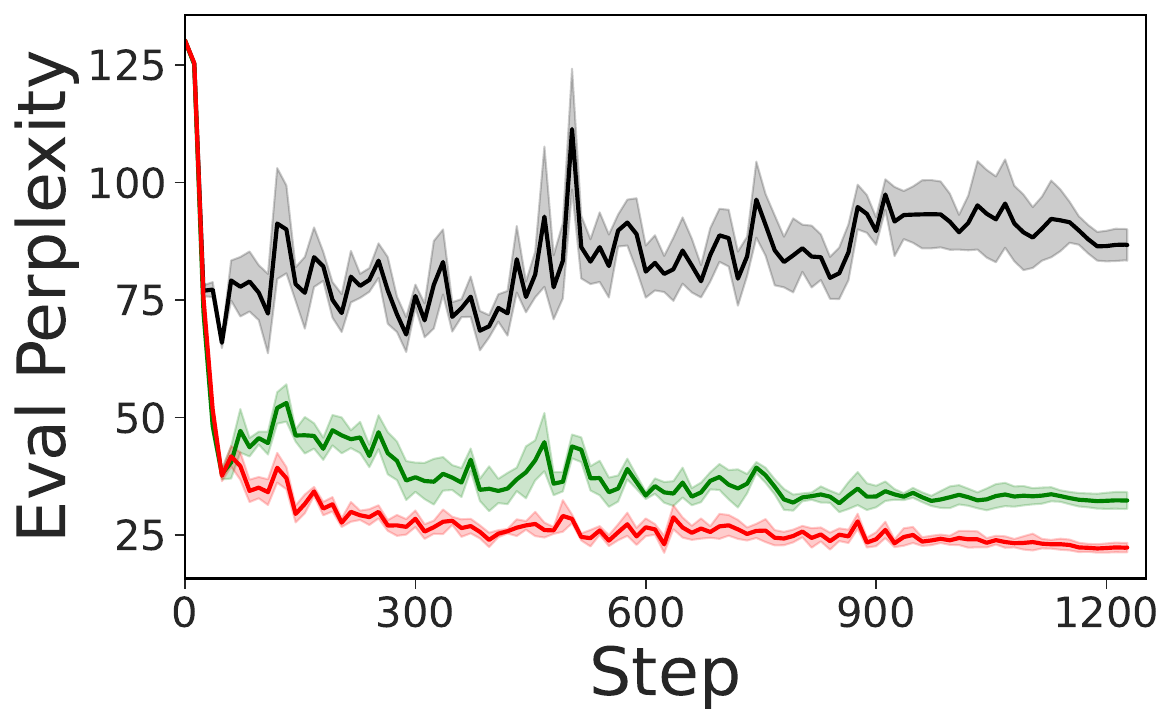}
    \end{subfigure}
    \par\smallskip
    \includegraphics[width=0.45\linewidth]{Figures/SFT/legend.pdf}
    \caption{SFT training dynamics on the three QA settings: evaluation perplexity throughout training.}
    \label{fig:SFT-lora-dynamics}
\end{figure}

\subsubsection{Case Study}\label{subsubsec:case-study}
To understand why Layer-Wise Subset Update outperforms Global Subset Update, we conduct a case study in which we record per-layer scores during Full-Training Update on \texttt{alpaca}/\texttt{samsum} with full-parameter fine-tuning. At each training step, we separately score the same batch of \(n = 8\) samples as in Layer-Wise Subset Update and Global Subset Update, and compute
\begin{enumerate*}[label=(\roman*)]
    \item the mean absolute score per layer (magnitude), and
    \item the Spearman rank correlation between each layer's per-sample scores and the subset's global ranking.
\end{enumerate*}
\Cref{fig:case-study-alpaca-samsum} plots these quantities across all \(16\) blocks, with lines showing the mean across training steps and shaded bands the interquartile range. We aggregate the results by \emph{transformer blocks} and \emph{layer types}: each transformer block contains seven linear layers, namely four attention projections (query (Q), key (K), value (V), and output (O)) and three MLP projections (gate, up, and down).

\begin{figure}[htpb]
    \centering
    \begin{subfigure}[t]{0.49\linewidth}
        \caption{Mean absolute score magnitude (log scale)}
        \includegraphics[width=\linewidth]{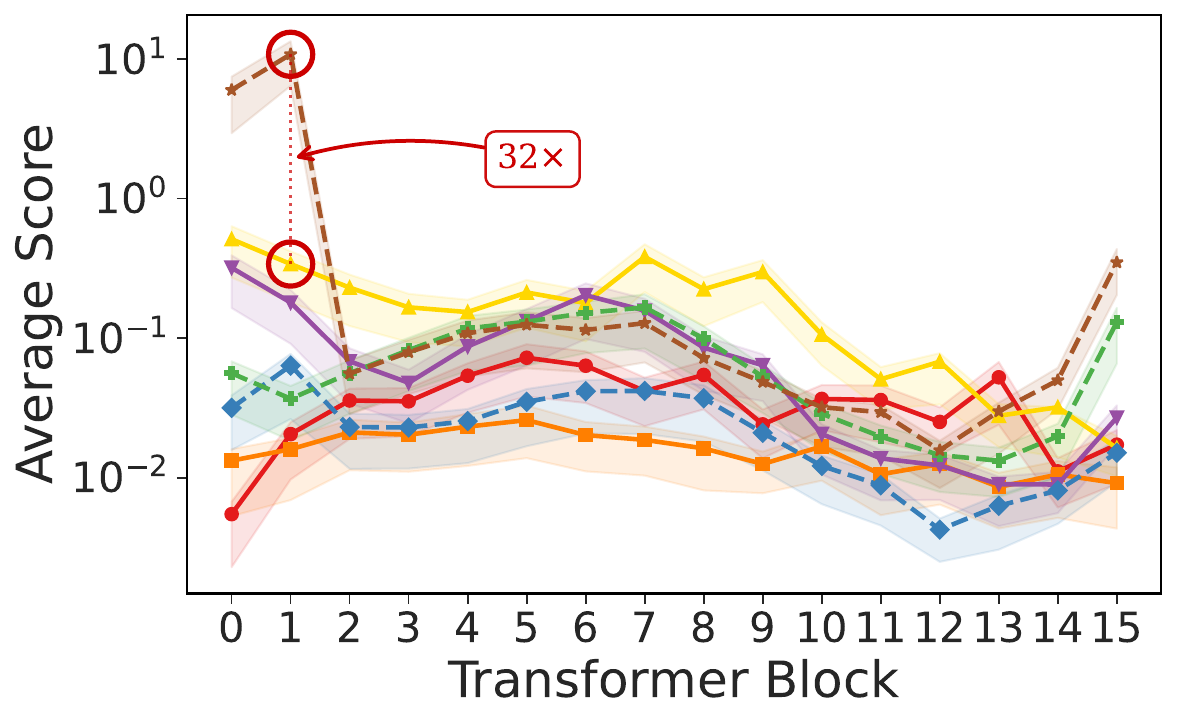}
        \label{fig:case-study-alpaca-samsum-magnitude}
    \end{subfigure}\hfill
    \begin{subfigure}[t]{0.49\linewidth}
        \caption{Spearman \(\rho\) with subset's global ranking}
        \includegraphics[width=\linewidth]{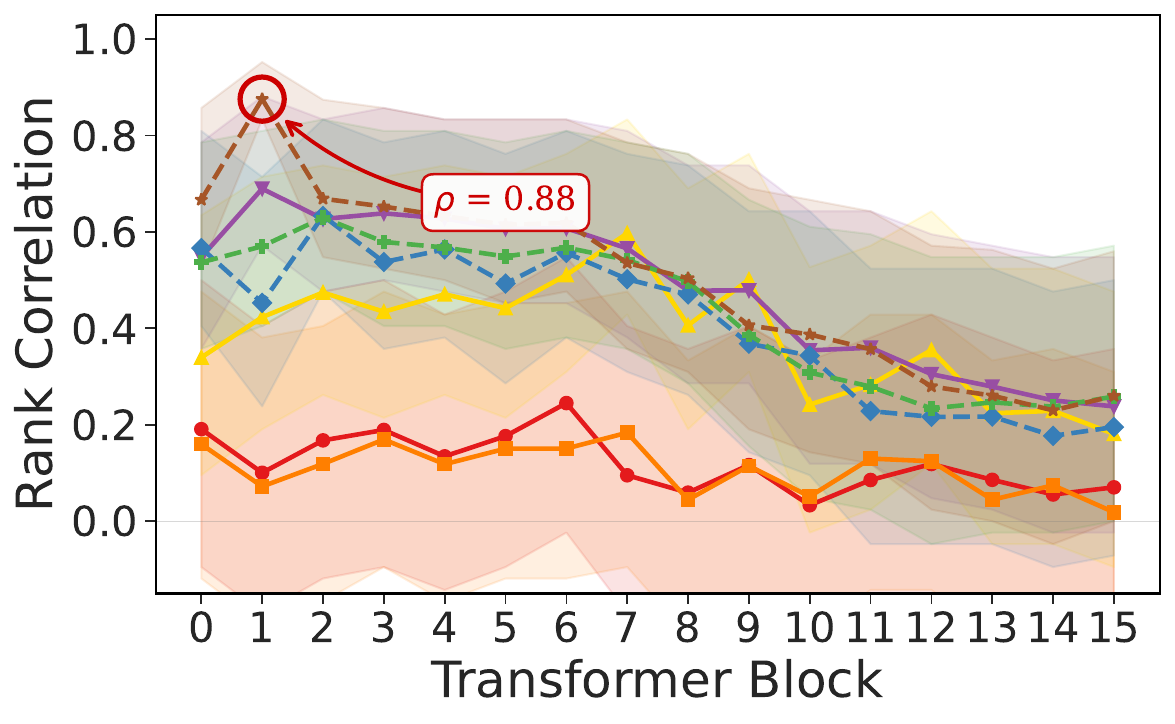}
        \label{fig:case-study-alpaca-samsum-correlation}
    \end{subfigure}
    \par\smallskip
    \includegraphics[width=0.9\linewidth]{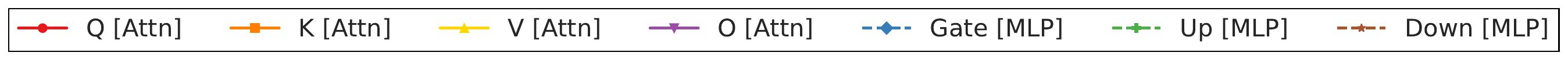}
    \caption{Per-layer scores on \texttt{alpaca}/\texttt{samsum}, averaged across training steps (lines) with \(25^{\text{th}}\)--\(75^{\text{th}}\) interquartile bands. Score magnitudes vary by orders of magnitude across layer types, so Global Subset Update's global ranking is dominated by \texttt{down\_proj} while layers like Q and K (\(\rho \lesssim 0.2\)) receive effectively uncurated data.}
    \label{fig:case-study-alpaca-samsum}
\end{figure}

The magnitudes (\Cref{fig:case-study-alpaca-samsum-magnitude}) are heavily skewed: \texttt{down\_proj} at block \(1\) produces scores \(32\times\) larger than the second-largest layer type at the same block, even though K and V projections share an identical input--output shape (\(512 \times 2048\)). Since Global Subset Update aggregates scores across all layers into a single per-sample ranking, the resulting subset is dominated by whichever layer type has the largest magnitude. \Cref{fig:case-study-alpaca-samsum-correlation} confirms this: \texttt{down\_proj} achieves near-perfect agreement with the global ranking (\(\rho \approx 0.88\)), while Q and K projections remain near zero (\(\rho \lesssim 0.2\)). These \(32\) layers (out of \(112\) block-level linear layers) have essentially no influence on which samples they receive under Global Subset Update. Layer-Wise Subset Update resolves this by letting each layer independently determine its own training subset. The same pattern holds across the other three QA settings (\Cref{adxsubsubsec:case-study}); the magnitude ratio between \texttt{down\_proj} and the second-largest layer type ranges from \(22\times\) to \(38\times\), and the global-ranking correlation for the dominant \texttt{down\_proj} layer ranges from \(\rho \approx 0.90\) to \(\rho \approx 0.94\).

\subsection{Reinforcement Learning from Human Feedback}\label{subsec:RLHF}
We next evaluate in the RLHF setting, where we apply Dr.\ Post-Training framework beyond the training-target distribution setting.

\subsubsection{Setup}
\begin{wrapfigure}[17]{r}{0.5\textwidth}
    \vspace{-1\intextsep}
    \centering
    \begin{subfigure}[t]{0.49\linewidth}
        \caption{Avg.\ reward target}
        \includegraphics[width=\linewidth]{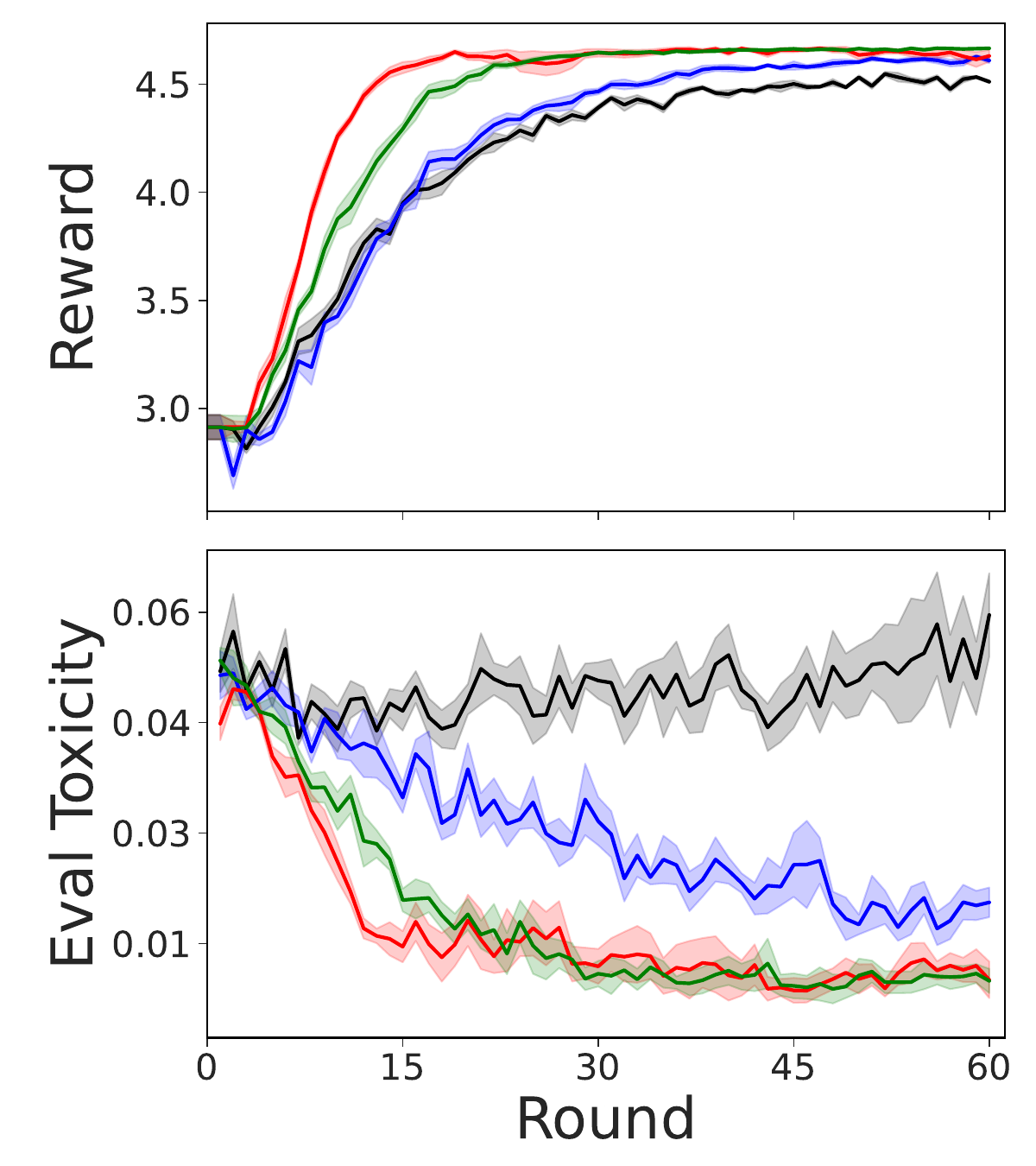}
    \end{subfigure}\hfill
    \begin{subfigure}[t]{0.49\linewidth}
        \caption{Pre-clip PPO target}
        \includegraphics[width=\linewidth]{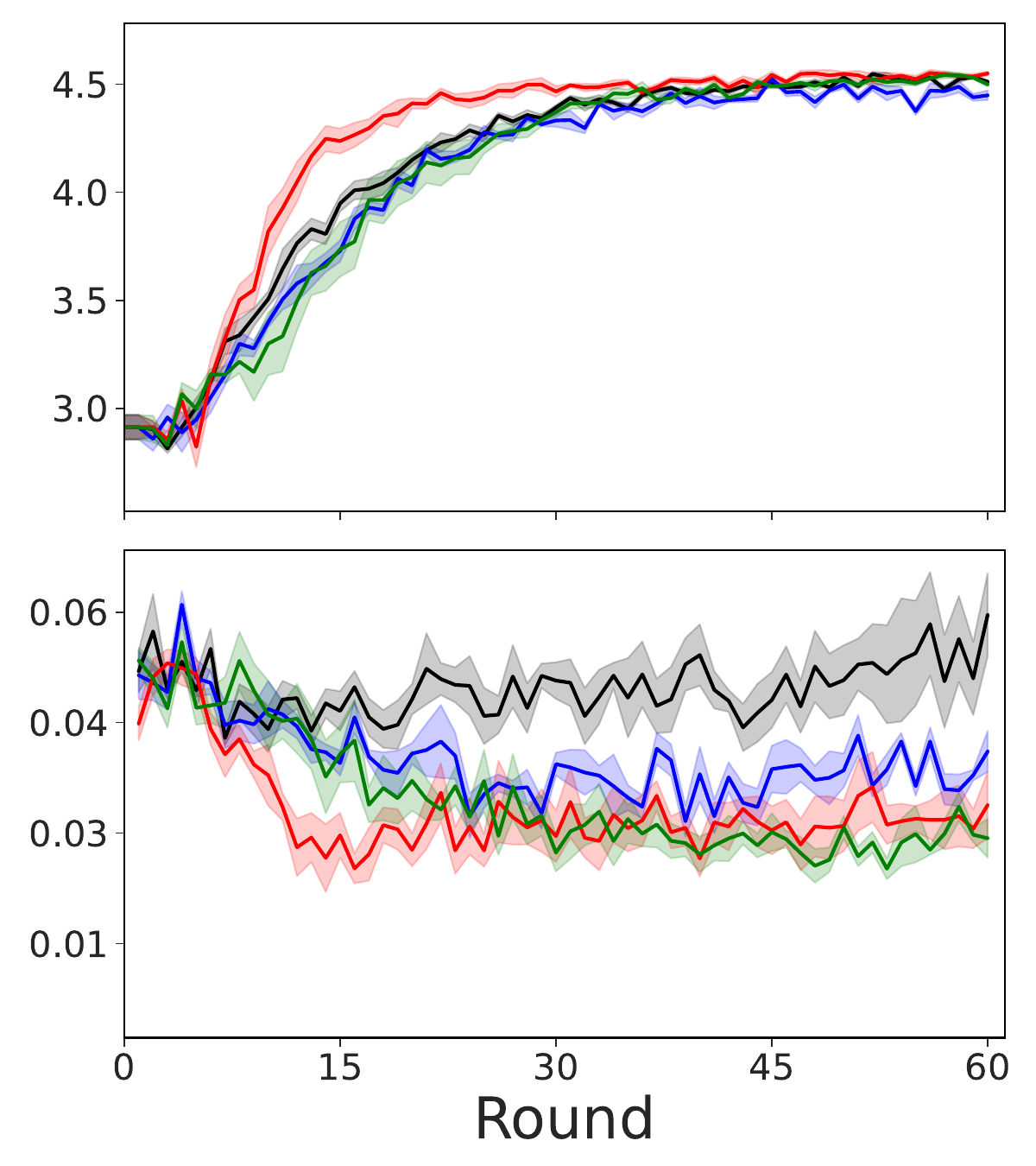}
    \end{subfigure}
    \par\smallskip
    \includegraphics[width=\linewidth]{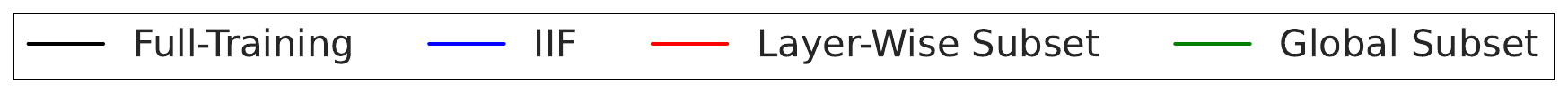}
    \caption{RLHF (\emph{self-reference}\ target). \textbf{Top.} Training reward. \textbf{Bottom.} Evaluation toxicity.}
    \label{fig:RLHF-self-ref}
\end{wrapfigure}

We follow a standard RLHF pipeline using \texttt{TRL}~\citep{vonwerra2022trl} with PPO~\citep{schulman2017proximal} for detoxification~\citep{huggingface2023detoxifying}, using \textsc{GPT-Neo-2.7B}~\citep{gpt-neo} with LoRA. The policy generates continuations scored by a toxicity-based reward model (\texttt{LFTW R4 Target}~\citep{vidgen2021lftw}), and is evaluated using a separate toxicity detector (\texttt{da-electra-hatespeech-detection}). We compare four methods:
\begin{enumerate*}[label=(\roman*)]
    \item \emph{standard} PPO without data curation,
    \item \emph{IIF}~\citep{hu2025a}, which pre-filters the entire rollout batch once before PPO optimization begins (offline Global Subset Update),
    \item \emph{Global Subset Update}, which curates each mini-batch during PPO with a single global selection across all layers, and
    \item \emph{Layer-Wise Subset Update}, which independently selects samples per layer within each mini-batch.
\end{enumerate*}

\begin{wrapfigure}[15]{r}{0.5\textwidth}
    \vspace{-1\intextsep}
    \centering
    \begin{subfigure}[t]{0.49\linewidth}
        \caption{Avg.\ reward target}
        \includegraphics[width=\linewidth]{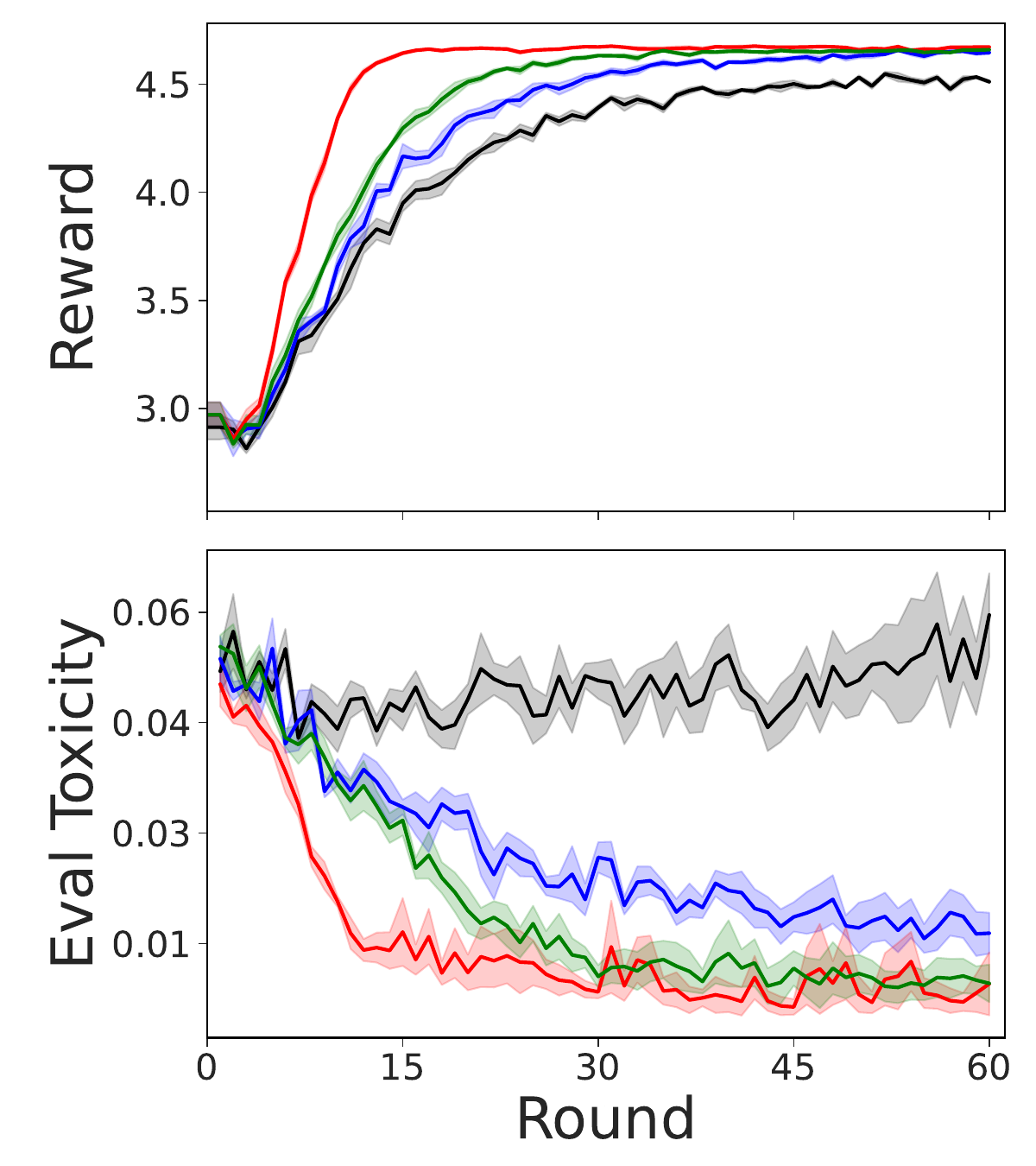}
    \end{subfigure}\hfill
    \begin{subfigure}[t]{0.49\linewidth}
        \caption{Pre-clip PPO target}
        \includegraphics[width=\linewidth]{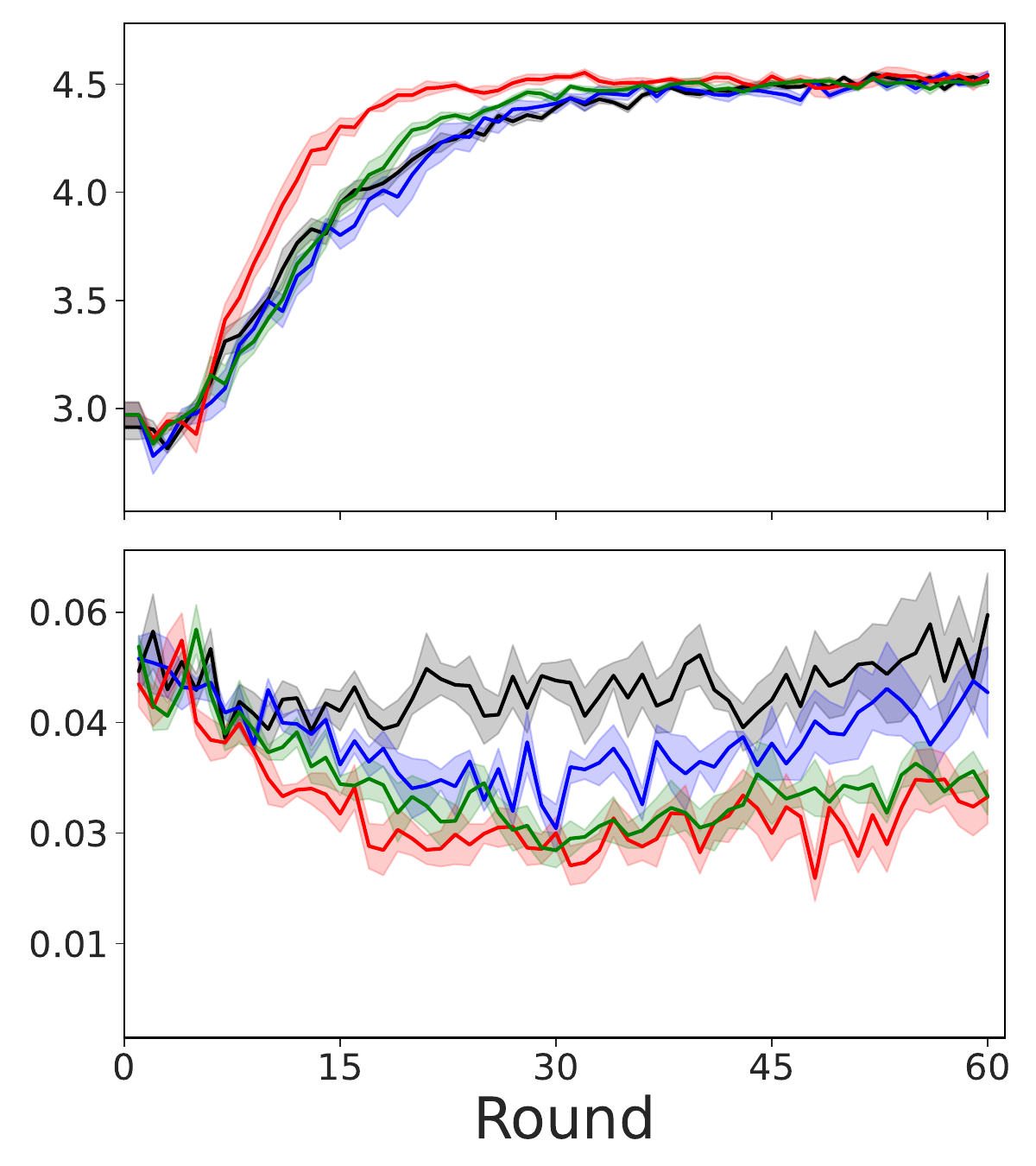}
    \end{subfigure}
    \par\smallskip
    \includegraphics[width=\linewidth]{Figures/RLHF/legend.pdf}
    \caption{RLHF (\emph{held-out} target). \textbf{Top.} Training reward. \textbf{Bottom.} Evaluation toxicity.}
    \label{fig:RLHF-held-out}
\end{wrapfigure}

All curation methods use negative score filtering, retaining only samples with non-negative gradient alignment. We further ablate two design choices:
\begin{enumerate*}[label=(\roman*)]
    \item \emph{target signal source}: self-referencing (the same rollout batch) versus a new rollout on a held-out target set, and
    \item \emph{target loss}: reward-weighted log-probability versus the pre-clip PPO surrogate.
\end{enumerate*}
As described in \Cref{subsec:beyond-two-distribution}, the PPO surrogate here acts as a stabilizing constraint on the (high-variance) policy gradient signal, generalizing the training-target distribution perspective.

\subsubsection{Results}
\Cref{fig:RLHF-self-ref,fig:RLHF-held-out} show results across all four configurations. Layer-Wise Subset Update consistently achieves the highest training rewards and the lowest evaluation toxicity, followed by Global Subset Update, across both target signal sources and both target loss functions. Comparing the two target sources, the held-out target leads to higher final rewards, while the self-referencing target primarily accelerates early convergence. Among the target losses, the reward-weighted objective outperforms the pre-clip PPO surrogate in both settings.

\subsection{Reinforcement Learning with Verifiable Rewards}\label{subsec:RLVR}
Finally, we evaluate in the RLVR setting, where rewards are computed by an automatic verifier.

\subsubsection{Setup}
We use the \texttt{Verl} framework~\citep{sheng2024hybridflow} with GRPO~\citep{shao2024deepseekmath} on the \texttt{MATH} dataset~\citep{hendrycksmath2021} with \textsc{Qwen3-1.7B}~\citep{qwen3technicalreport}. We compare Layer-Wise Subset Update against standard GRPO training (no data curation) and Global Subset Update. We instantiate the data regularization framework as in-run cross-validation: one mini-batch serves as the training batch, another as the target batch, and gradient alignment scores drive the negative score filtering of the training batch.

\begin{wrapfigure}[12]{r}{0.43\textwidth}
    \vspace{-1\intextsep}
    \centering
    \includegraphics[width=\linewidth]{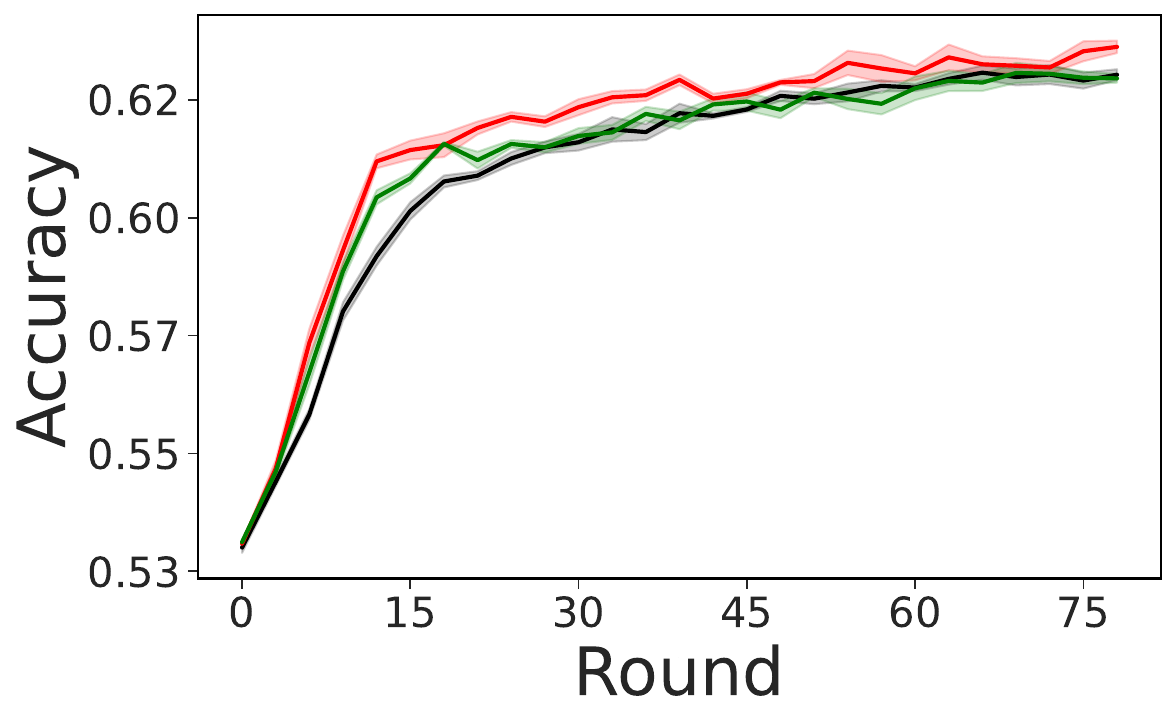}
    \includegraphics[width=\linewidth]{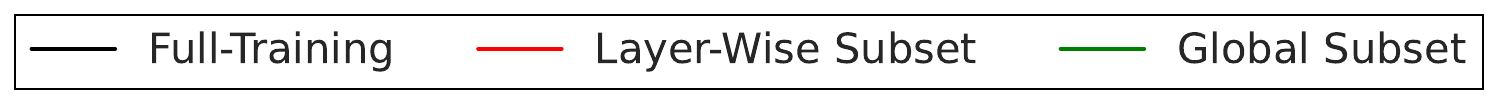}
    \caption{RLVR on \texttt{MATH} with \textsc{Qwen3-1.7B}. Evaluation accuracy over training.}
    \label{fig:RLVR-math}
\end{wrapfigure}

\subsubsection{Results}
\Cref{fig:RLVR-math} shows our proposed Layer-Wise Subset Update method again achieves the best downstream performance in the RLVR setting. In particular, Global Subset Update accelerates early training but converges to roughly the same accuracy as standard GRPO. Layer-Wise Subset Update, by contrast, sustains its advantage throughout and achieves a consistently higher final accuracy.

\subsection{System Efficiency}\label{subsec:system-efficiency}
We benchmark the system efficiency of different methods in \Cref{sec:method} and scoring mechanisms from \Cref{sec:system}:
\begin{enumerate*}[label=(\roman*)]
    \item \emph{Breakdown}: end-to-end training overhead at \(n=8, T=512, m=1\) with compressed scoring; and
    \item \emph{Scoring}: standalone timing on synthetic tensors at fixed \(n=8\), sweeping \(T\) at \(m=1\) and \(m\) at \(T=512\) to validate the crossover predictions from \Cref{subsec:efficient-scoring}.
\end{enumerate*}
All benchmarks are conducted on three models from three different architecture families---\textsc{SmolLM2-360M}~\citep{allal2025smollm2}, \textsc{TinyLlama-1.1B}~\citep{zhang2024tinyllama}, and \textsc{Llama-3.2-3B}~\citep{grattafiori2024llama}---using \texttt{bfloat16} on a single \texttt{A40} GPU. The three families are chosen to factor out family-specific kernel quirks.

\subsubsection{Per-Step Breakdown}
\Cref{tab:system-overhead} reports the wall-clock overhead of each method relative to Full-Training Update, where each column is averaged over 20 timed iterations after 10 warmup iterations.

The backward pass is broken into four components: \texttt{a.grad} computes the input activation gradient \(\partial \ell / \partial a^{(l)}\) via a single matmul with the weight matrix (the output gradient \(\partial \ell / \partial e^{(l)}\) is propagated from the layer above by autograd and requires no computation at this layer); \texttt{scoring} computes per-sample alignment scores; \texttt{w.grad} assembles the weight gradient from the selected samples; and \texttt{autograd} captures the remaining overhead: autograd graph traversal, memory allocation, hook dispatch, and backward passes of non-linear modules (LayerNorm, activation functions, etc.).

\begin{table}[htpb]
    \centering
    \caption{Per-step component breakdown (ms) with standard training (\Cref{algo:standard-training}), using compressed scoring (\(n=8\), \(T=512\), \(m=1\), \(k=4\)).}
    \label{tab:system-overhead}
    \begin{adjustbox}{max width=\linewidth}
        \begin{tabular}{l ccc ccc ccc}
            \toprule
                                    & \multicolumn{3}{c}{\textsc{SmolLM2-360M}} & \multicolumn{3}{c}{\textsc{TinyLlama-1.1B}} & \multicolumn{3}{c}{\textsc{Llama-3.2-3B}}                                                                                                                                                                        \\
            \cmidrule(lr){2-4}\cmidrule(lr){5-7}\cmidrule(lr){8-10}
            \textbf{Component}      & Full-Training                             & Layer-Wise Subset                           & Global Subset                             & Full-Training & Layer-Wise Subset               & Global Subset                  & Full-Training & Layer-Wise Subset               & Global Subset                   \\
            \midrule
            \textbf{Forward}        & 78.4                                      & 88.6                                        & 88.9                                      & 155.5         & 168.4                           & 167.9                          & 343.3         & 387.5                           & 390.0                           \\
            \textbf{Backward}       & 155.7                                     & 234.0                                       & 193.0                                     & 312.5         & 352.8                           & 344.2                          & 828.2         & 825.6                           & 820.2                           \\
            \rowcolor{black!6}
            \quad \texttt{a.grad}   & 32.5                                      & 34.2                                        & 33.3                                      & 93.3          & 96.6                            & 96.6                           & 228.9         & 255.8                           & 256.8                           \\
            \rowcolor{black!6}
            \quad \texttt{scoring}  & ---                                       & 43.8                                        & 22.5                                      & ---           & 35.6                            & 31.3                           & ---           & 49.7                            & 45.4                            \\
            \rowcolor{black!6}
            \quad \texttt{w.grad}   & 27.5                                      & 43.5                                        & 30.7                                      & 91.7          & 77.2                            & 73.4                           & 342.2         & 233.1                           & 235.4                           \\
            \rowcolor{black!6}
            \quad \texttt{autograd} & 95.6                                      & 112.7                                       & 106.5                                     & 127.6         & 143.4                           & 142.9                          & 257.1         & 287.1                           & 282.6                           \\
            \textbf{Optimizer}      & 26.6                                      & 26.6                                        & 26.6                                      & 81.2          & 81.3                            & 81.2                           & 235.4         & 235.4                           & 235.5                           \\
            \midrule
            \textbf{Total}          & \textbf{261}                              & \textbf{349} \scriptsize(+34\%)             & \textbf{309} \scriptsize(+18\%)           & \textbf{549}  & \textbf{602} \scriptsize(+10\%) & \textbf{593} \scriptsize(+8\%) & \textbf{1407} & \textbf{1449} \scriptsize(+3\%) & \textbf{1446} \scriptsize(+3\%) \\
            \bottomrule
        \end{tabular}
    \end{adjustbox}
\end{table}

\paragraph{Results.}
We see that both Layer-Wise and Global Subset Update add only \(3\)--\(34\%\) wall-clock overhead compared to Full-Training Update, with overhead shrinking sharply as the model grows: from \(34\%\) (Layer-Wise) and \(18\%\) (Global Subset) on \textsc{SmolLM2-360M} down to \(3\%\) for both methods on \textsc{Llama-3.2-3B}. Focusing on \texttt{w.grad}, we see that when the model is small (\textsc{SmolLM2-360M}), the time required for both Layer-Wise and Global Subset is similar to or even larger than Full-Training Update, despite the fact that we only materialize the batch gradient from \(k = 4\) samples. This is mainly due to the artifact of the model being too small: as the model grows, the differences start to show, eventually approaching \(n / k = 2\) times more efficient when the bottlenecks become compute-bound rather than IO/kernel launching-bound (e.g., on \textsc{Llama-3.2-3B} \texttt{w.grad} drops from \(342\)~ms to \(\sim\!234\)~ms, a \(\sim\!1.46\times\) reduction). This further explains why the overheads become smaller as the model size grows.

\paragraph{Activation checkpointing.}
As we discussed in \Cref{subsec:compatibility}, the key efficiency difference between different partition granularities under Group-Wise Subset Update, such as Global Subset Update and Layer-Wise Subset Update, is their memory footprint and compatibility with modern memory-saving techniques like activation checkpointing.

\begin{table}[htpb]
    \centering
    \caption{Per-step component breakdown (ms) with activation checkpointing, using compressed scoring (\(n=8\), \(T=512\), \(m=1\), \(k=4\)).}
    \label{tab:system-overhead-ckpt}
    \begin{adjustbox}{max width=\linewidth}
        \begin{tabular}{l ccc ccc ccc}
            \toprule
                                    & \multicolumn{3}{c}{\textsc{SmolLM2-360M}} & \multicolumn{3}{c}{\textsc{TinyLlama-1.1B}} & \multicolumn{3}{c}{\textsc{Llama-3.2-3B}}                                                                                                                                                                       \\
            \cmidrule(lr){2-4}\cmidrule(lr){5-7}\cmidrule(lr){8-10}
            \textbf{Component}      & Full-Training                             & Layer-Wise Subset                           & Global Subset                             & Full-Training & Layer-Wise Subset              & Global Subset                  & Full-Training & Layer-Wise Subset               & Global Subset                   \\
            \midrule
            \textbf{Forward}        & 88.2                                      & 109.7                                       & 107.7                                     & 154.9         & 167.4                          & 167.1                          & 389.2         & 437.4                           & 438.4                           \\
            \textbf{Backward}       & 249.0                                     & 311.4                                       & 297.1                                     & 453.6         & 504.3                          & 495.9                          & 1233.5        & 1279.1                          & 1267.8                          \\
            \rowcolor{black!6}
            \quad \texttt{a.grad}   & 40.7                                      & 41.0                                        & 40.8                                      & 93.3          & 96.8                           & 96.7                           & 269.4         & 299.4                           & 299.3                           \\
            \rowcolor{black!6}
            \quad \texttt{scoring}  & ---                                       & 33.4                                        & 26.6                                      & ---           & 35.5                           & 31.3                           & ---           & 60.2                            & 54.7                            \\
            \rowcolor{black!6}
            \quad \texttt{w.grad}   & 36.2                                      & 43.3                                        & 37.5                                      & 91.8          & 77.1                           & 73.4                           & 371.6         & 254.9                           & 254.0                           \\
            \rowcolor{black!6}
            \quad \texttt{autograd} & 172.0                                     & 193.8                                       & 192.1                                     & 268.5         & 294.9                          & 294.5                          & 592.5         & 664.6                           & 659.7                           \\
            \textbf{Optimizer}      & 26.8                                      & 26.8                                        & 26.8                                      & 81.2          & 81.3                           & 81.3                           & 236.8         & 236.8                           & 236.8                           \\
            \midrule
            \textbf{Total}          & \textbf{364}                              & \textbf{448} \scriptsize(+23\%)             & \textbf{432} \scriptsize(+19\%)           & \textbf{690}  & \textbf{753} \scriptsize(+9\%) & \textbf{744} \scriptsize(+8\%) & \textbf{1860} & \textbf{1953} \scriptsize(+5\%) & \textbf{1943} \scriptsize(+4\%) \\
            \bottomrule
        \end{tabular}
    \end{adjustbox}
\end{table}

\Cref{tab:system-overhead-ckpt,tab:system-overhead} illustrate this clearly. From \Cref{tab:system-overhead-ckpt}, we see that the relative computational overheads remain in the same range (\(4\)--\(19\%\) for Global Subset Update, \(5\)--\(23\%\) for Layer-Wise Subset Update) as those without checkpointing (\Cref{tab:system-overhead}), since checkpointing inflates both the baseline and the scoring-augmented backward pass roughly in proportion as we expected.

On the other hand, \Cref{tab:peak-memory} reports peak GPU memory. Here, we see that without checkpointing, peak memory is nearly identical across all methods at the same \(m\); the overhead relative to Full-Training Update is determined almost entirely by the \(m\) additional target samples in the merged batch. With checkpointing enabled, the memory overhead of Global Subset Update becomes apparent: as discussed in \Cref{subsec:compatibility}, for a partition that spans a large number of layers across the checkpointing schedule, the designed one-pass scheduling fails to respect it, resulting in a similar peak memory usage as without checkpointing when the model size grows.

\begin{table}[htpb]
    \centering
    \caption{Peak GPU memory (GB) using compressed scoring (\(n=8\), \(T=512\), \(m=1\), \(k=4\)), with and without activation checkpointing.}
    \label{tab:peak-memory}
    \begin{adjustbox}{max width=\linewidth}
        \begin{tabular}{l ccc ccc ccc}
            \toprule
                               & \multicolumn{3}{c}{\textsc{SmolLM2-360M}} & \multicolumn{3}{c}{\textsc{TinyLlama-1.1B}} & \multicolumn{3}{c}{\textsc{Llama-3.2-3B}}                                                                                                         \\
            \cmidrule(lr){2-4}\cmidrule(lr){5-7}\cmidrule(lr){8-10}
                               & Full-Training                             & Layer-Wise Subset                           & Global Subset                             & Full-Training & Layer-Wise Subset & Global Subset & Full-Training & Layer-Wise Subset & Global Subset \\
            \midrule
            \textbf{No ckpt}   & 9.4                                       & 10.3                                        & 10.4                                      & 15.0          & 16.2              & 16.2          & 37.9          & 40.7              & 40.7          \\
            \textbf{With ckpt} & 4.6                                       & 4.9                                         & 7.5                                       & 10.3          & 10.4              & 14.5          & 29.9          & 30.1              & 38.1          \\
            \bottomrule
        \end{tabular}
    \end{adjustbox}
\end{table}

\subsubsection{Scoring Cost}
We next benchmark different scoring mechanisms introduced in \Cref{subsec:efficient-scoring}. \Cref{tab:score-cost} reports standalone scoring cost at \(n=8\), sweeping \(T\) at fixed \(m=1\) and \(m\) at fixed \(T=512\).

\begin{table}[htpb]
    \centering
    \caption{Standalone scoring cost (ms/step) at \(n=8\). Bold indicates the cheapest exact method per column. The \(T\)-sweep fixes \(m=1\); the \(m\)-sweep fixes \(T=512\).}
    \label{tab:score-cost}
    \begin{adjustbox}{max width=\linewidth}
        \begin{tabular}{l rrr rrr rrr rrr rrr rrr}
            \toprule
                             & \multicolumn{6}{c}{\textsc{SmolLM2-360M}} & \multicolumn{6}{c}{\textsc{TinyLlama-1.1B}} & \multicolumn{6}{c}{\textsc{Llama-3.2-3B}}                                                                                                                                                                                                                                                                                         \\
            \cmidrule(lr){2-7}\cmidrule(lr){8-13}\cmidrule(lr){14-19}
                             & \multicolumn{3}{c}{sweep \(T\)}           & \multicolumn{3}{c}{sweep \(m\)}             & \multicolumn{3}{c}{sweep \(T\)}           & \multicolumn{3}{c}{sweep \(m\)} & \multicolumn{3}{c}{sweep \(T\)} & \multicolumn{3}{c}{sweep \(m\)}                                                                                                                                                                                   \\
            \cmidrule(lr){2-4}\cmidrule(lr){5-7}\cmidrule(lr){8-10}\cmidrule(lr){11-13}\cmidrule(lr){14-16}\cmidrule(lr){17-19}
            \textbf{Scoring} & 512                                       & 2048                                        & 8192                                      & 1                               & 4                               & 8                               & 512         & 2048         & 8192          & 1           & 4            & 8            & 512         & 2048         & 8192          & 1           & 4            & 8            \\
            \midrule
            Direct           & 56                                        & \textbf{151}                                & \textbf{571}                              & 56                              & 77                              & 99                              & 131         & \textbf{349} & \textbf{1344} & 131         & 174          & 234          & 415         & 1166         & \textbf{4076} & 415         & 518          & 668          \\
            GIP              & \textbf{42}                               & 441                                         & 7073                                      & \textbf{42}                     & 130                             & 239                             & \textbf{43} & 518          & 8317          & \textbf{43} & 153          & 275          & \textbf{71} & \textbf{946} & 15017         & \textbf{71} & \textbf{273} & 495          \\
            PIP              & 53                                        & 155                                         & 577                                       & 53                              & \textbf{71}                     & \textbf{87}                     & 103         & 393          & 1567          & 103         & \textbf{141} & \textbf{189} & 254         & 1076         & 4127          & 254         & 344          & \textbf{465} \\
            \midrule
            Compressed       & 29                                        & 48                                          & 162                                       & 29                              & 29                              & 29                              & 25          & 58           & 216           & 25          & 25           & 25           & 40          & 107          & 412           & 40          & 40           & 40           \\
            \bottomrule
        \end{tabular}
    \end{adjustbox}
\end{table}

\paragraph{Results.}
We see that across the model size, number of validation samples (\(m\)), and also the length of the sequence (\(T\)), our proposed PIP remains competitive across regimes. More specifically, this validates the theoretical crossovers from \Cref{subsec:efficient-scoring}.

At short sequences (\(T=512\)), GIP is the cheapest exact method for all three models (\(42\)--\(71\) ms). At \(T=2048\), the ranking reverses for models with smaller \(\sqrt{d/L}\): Direct becomes cheapest on \textsc{SmolLM2-360M} (\(151\) vs.\ GIP \(441\) ms) and \textsc{TinyLlama-1.1B} (\(349\) vs.\ \(518\) ms), while \textsc{Llama-3.2-3B} (whose larger \(\sqrt{d/L}\) pushes the crossover beyond \(T=2048\)) retains GIP as cheapest (\(946\) ms). By \(T=8192\), all models have crossed over and Direct is the cheapest exact method, with PIP as a close runner-up on \textsc{SmolLM2-360M} (\(571\) vs.\ \(577\) ms) and \textsc{Llama-3.2-3B} (\(4076\) vs.\ \(4127\) ms).

The \(m\)-dependent crossover is also confirmed: at \(T=512\), PIP overtakes GIP at \(m=4\) for \textsc{SmolLM2-360M} (\(71\) vs.\ \(130\) ms) and \textsc{TinyLlama-1.1B} (\(141\) vs.\ \(153\) ms), and at \(m=8\) for \textsc{Llama-3.2-3B} (\(465\) vs.\ \(495\) ms), consistent with the predicted crossover at \(m > \sqrt{d/L}/(2T)\). Compressed scoring remains cheapest across all regimes (\(25\)--\(412\) ms), making it the practical default.
\section{Discussion}\label{sec:discussion}
In this section, we discuss several aspects related to our proposed framework.

\subsection{Classical Regularization}
Both data regularization and classical regularization (e.g., weight decay, KL penalties, LoRA's low-rank constraint) ultimately constrain the parameter update, but they differ in the \emph{source} of the constraint. Classical regularization is usually derived from a \emph{data-agnostic} complexity measure on \(\theta\), restricting the update through intrinsic properties of the parameters themselves. Data regularization, by contrast, imposes a \emph{data-induced} constraint in the parameter-update space via the feasible set \(U_t\), whose admissible directions are shaped by the available training samples.

\subsection{Designing New Feasible Sets}
The Dr.\ Post-Training framework provides a blueprint for designing new data-centric methods:
\begin{enumerate*}[label=(\roman*)]
    \item define a feasible set \(U\),
    \item design an efficient algorithm that solves \(u^{\ast} \in U\) based on \Cref{eq:proj}.
\end{enumerate*}
Several natural extensions of the feasible set follow this template:
\begin{enumerate}[leftmargin=*]
    \item \textbf{Soft weighting.} Replace hard subset constraints with continuous weights: \(U_{\mathrm{soft}} = \{\sum_i w_i g_i \colon w \in \Delta_n\}\), where \(\Delta_n\) is the probability simplex. The projection \Cref{eq:proj} becomes a quadratic program over the simplex, solvable in closed form or via efficient Frank-Wolfe steps~\citep{nikdan2025efficient}. This can be further extended to the same group-wise decomposition across parameter groups. This relaxation sidesteps the combinatorial search over \(\binom{n}{k}\) subsets required previously.
    \item \textbf{Token-level granularity.} For sequential models, instead of operating at the sample level, operate at the token level within each sample, yielding a finer-grained feasible set. Since per-token gradients are naturally available during backpropagation, this can be implemented within the existing Group-Wise Subset Update pipeline.
\end{enumerate}

\subsection{Connection to Domain Adaptation}\label{subsec:domain-transfer-and-distribution-shift}
The mismatch between general-purpose pretraining corpora and a narrow target task is a recurring theme in transfer learning and domain adaptation~\citep{pan2009survey}. A long line of work has studied when and how such transfer succeeds, highlighting source--target divergence and the risk of negative transfer as central concerns~\citep{ben2010theory}. Classical remedies operate at the instance level: reweighting examples under covariate shift~\citep{sugiyama2007covariate}, adapting feature representations~\citep{daume2007frustratingly}, aligning domains adversarially~\citep{ganin2016domain}, or combining multiple source domains~\citep{guo2018multi}. In the LLM era, continued in-domain or task-adaptive pretraining has become the de facto approach for reducing this mismatch~\citep{gururangan2020don}.

Our framework addresses the same general training--target gap, but at a fundamentally different level of abstraction. Where instance reweighting corrects the empirical risk via density ratios and representation alignment enforces domain invariance in hidden states, both are specific algorithmic choices for bridging the domain gap. Dr.\ Post-Training instead formulates general training data as a \emph{regularizer} on the target-driven update direction, yielding an explicit per-step bias--variance tradeoff governed by the feasible set \(U\). This perspective not only subsumes existing instance-level strategies as special cases of the feasible set but also opens the door to \emph{beyond-instance} designs such as Group-Wise Subset Update, where different parameter groups draw on different training subsets within a single optimization step---a capability that falls outside the scope of standard domain-adaptation formulations.
\section{Conclusion}\label{sec:conclusion}
In this work, we propose \textbf{Dr.\ Post-Training}, a data regularization framework for LLM post-training that leverage the abundant yet imperfectly aligned general training data a regularizer that constrains target-driven updates. This framework provides an alternative view of existing data selection methods and further leads to natural generalization, broadening the design space and shedding light on their respective tradeoffs. With extensive system optimization, we offer an efficient implementation of our proposed methods under strict memory constraints in modern LLM training pipelines. Experiments across SFT, RLHF, and RLVR demonstrated consistent improvements over strong baselines with minimal system overhead.

\newpage
\section*{Acknowledgment}
The authors would like to thank Joe Melkonian for helpful discussions at the early stage of this project. P.\ Hu and J.W.\ Ma are partially supported by a gift grant from Google. 

\bibliography{reference}

\begin{thebibliography}{93}
\providecommand{\natexlab}[1]{#1}
\providecommand{\url}[1]{\texttt{#1}}
\expandafter\ifx\csname urlstyle\endcsname\relax
  \providecommand{\doi}[1]{doi: #1}\else
  \providecommand{\doi}{doi: \begingroup \urlstyle{rm}\Url}\fi

\bibitem[Albalak et~al.(2024)Albalak, Elazar, Xie, Longpre, Lambert, Wang, Muennighoff, Hou, Pan, Jeong, Raffel, Chang, Hashimoto, and Wang]{albalak2024a}
A.~Albalak, Y.~Elazar, S.~M. Xie, S.~Longpre, N.~Lambert, X.~Wang, N.~Muennighoff, B.~Hou, L.~Pan, H.~Jeong, C.~Raffel, S.~Chang, T.~Hashimoto, and W.~Y. Wang.
\newblock A survey on data selection for language models.
\newblock \emph{Transactions on Machine Learning Research}, 2024.
\newblock ISSN 2835-8856.
\newblock URL \url{https://openreview.net/forum?id=XfHWcNTSHp}.
\newblock Survey Certification.

\bibitem[Allal et~al.(2025)Allal, Lozhkov, Bakouch, Bl{\'a}zquez, Penedo, Tunstall, Marafioti, Kydl{\'\i}{\v{c}}ek, Lajar{\'\i}n, Srivastav, et~al.]{allal2025smollm2}
L.~B. Allal, A.~Lozhkov, E.~Bakouch, G.~M. Bl{\'a}zquez, G.~Penedo, L.~Tunstall, A.~Marafioti, H.~Kydl{\'\i}{\v{c}}ek, A.~P. Lajar{\'\i}n, V.~Srivastav, et~al.
\newblock Smollm2: When smol goes big--data-centric training of a small language model.
\newblock \emph{arXiv preprint arXiv:2502.02737}, 2025.

\bibitem[Ba et~al.(2016)Ba, Kiros, and Hinton]{ba2016layer}
J.~L. Ba, J.~R. Kiros, and G.~E. Hinton.
\newblock Layer normalization.
\newblock \emph{arXiv preprint arXiv:1607.06450}, 2016.

\bibitem[Bai et~al.(2022)Bai, Jones, Ndousse, Askell, Chen, DasSarma, Drain, Fort, Ganguli, Henighan, et~al.]{bai2022training}
Y.~Bai, A.~Jones, K.~Ndousse, A.~Askell, A.~Chen, N.~DasSarma, D.~Drain, S.~Fort, D.~Ganguli, T.~Henighan, et~al.
\newblock Training a helpful and harmless assistant with reinforcement learning from human feedback.
\newblock \emph{arXiv preprint arXiv:2204.05862}, 2022.

\bibitem[Ben-David et~al.(2010)Ben-David, Blitzer, Crammer, Kulesza, Pereira, and Vaughan]{ben2010theory}
S.~Ben-David, J.~Blitzer, K.~Crammer, A.~Kulesza, F.~Pereira, and J.~W. Vaughan.
\newblock A theory of learning from different domains.
\newblock \emph{Machine learning}, 79\penalty0 (1):\penalty0 151--175, 2010.

\bibitem[Black et~al.(2021)Black, Leo, Wang, Leahy, and Biderman]{gpt-neo}
S.~Black, G.~Leo, P.~Wang, C.~Leahy, and S.~Biderman.
\newblock {GPT-Neo: Large Scale Autoregressive Language Modeling with Mesh-Tensorflow}, Mar. 2021.
\newblock URL \url{https://doi.org/10.5281/zenodo.5297715}.
\newblock {If you use this software, please cite it using these metadata.}

\bibitem[Bottou et~al.(2018)Bottou, Curtis, and Nocedal]{bottou2018optimization}
L.~Bottou, F.~E. Curtis, and J.~Nocedal.
\newblock Optimization methods for large-scale machine learning.
\newblock \emph{SIAM review}, 60\penalty0 (2):\penalty0 223--311, 2018.

\bibitem[Cao et~al.(2023)Cao, Kang, Wang, and Sun]{cao2023instruction}
Y.~Cao, Y.~Kang, C.~Wang, and L.~Sun.
\newblock Instruction mining: Instruction data selection for tuning large language models.
\newblock \emph{arXiv preprint arXiv:2307.06290}, 2023.

\bibitem[Casper et~al.(2023)Casper, Davies, Shi, Gilbert, Scheurer, Rando, Freedman, Korbak, Lindner, Freire, et~al.]{casper2023open}
S.~Casper, X.~Davies, C.~Shi, T.~K. Gilbert, J.~Scheurer, J.~Rando, R.~Freedman, T.~Korbak, D.~Lindner, P.~Freire, et~al.
\newblock Open problems and fundamental limitations of reinforcement learning from human feedback.
\newblock \emph{arXiv preprint arXiv:2307.15217}, 2023.

\bibitem[Chen et~al.(2024)Chen, Li, Yan, Wang, Gunaratna, Yadav, Tang, Srinivasan, Zhou, Huang, et~al.]{chen2024alpagasus}
L.~Chen, S.~Li, J.~Yan, H.~Wang, K.~Gunaratna, V.~Yadav, Z.~Tang, V.~Srinivasan, T.~Zhou, H.~Huang, et~al.
\newblock Alpagasus: Training a better alpaca with fewer data.
\newblock In \emph{The Twelfth International Conference on Learning Representations}, 2024.

\bibitem[Chen et~al.(2016)Chen, Xu, Zhang, and Guestrin]{chen2016training}
T.~Chen, B.~Xu, C.~Zhang, and C.~Guestrin.
\newblock Training deep nets with sublinear memory cost.
\newblock \emph{arXiv preprint arXiv:1604.06174}, 2016.

\bibitem[Choe et~al.(2025)Choe, Ahn, Bae, Zhao, Kang, Chung, Pratapa, Neiswanger, Strubell, Mitamura, et~al.]{choe2025your}
S.~K. Choe, H.~Ahn, J.~Bae, K.~Zhao, M.~Kang, Y.~Chung, A.~Pratapa, W.~Neiswanger, E.~Strubell, T.~Mitamura, et~al.
\newblock What is your data worth to gpt? llm-scale data valuation with influence functions.
\newblock In \emph{Advances in Neural Information Processing Systems}, 2025.

\bibitem[Christiano et~al.(2017)Christiano, Leike, Brown, Martic, Legg, and Amodei]{christiano2017deep}
P.~F. Christiano, J.~Leike, T.~Brown, M.~Martic, S.~Legg, and D.~Amodei.
\newblock Deep reinforcement learning from human preferences.
\newblock \emph{Advances in neural information processing systems}, 30, 2017.

\bibitem[Clark et~al.(2020)Clark, Choi, Collins, Garrette, Kwiatkowski, Nikolaev, and Palomaki]{clark2020tydi}
J.~H. Clark, E.~Choi, M.~Collins, D.~Garrette, T.~Kwiatkowski, V.~Nikolaev, and J.~Palomaki.
\newblock Tydi qa: A benchmark for information-seeking question answering in ty pologically di verse languages.
\newblock \emph{Transactions of the Association for Computational Linguistics}, 8:\penalty0 454--470, 2020.

\bibitem[Daum{\'e}~III(2007)]{daume2007frustratingly}
H.~Daum{\'e}~III.
\newblock Frustratingly easy domain adaptation.
\newblock In \emph{Proceedings of the 45th annual meeting of the association of computational linguistics}, pages 256--263, 2007.

\bibitem[Deng et~al.(2025)Deng, Hu, Hu, Li, Liu, Wang, Ley, Dai, Huang, Huang, Jiao, Just, Pan, Shen, Tu, Wang, Wang, Zhang, Zhang, Jia, Lakkaraju, Peng, Tang, Xiong, Zhao, Tong, Zhao, and Ma]{deng2025survey}
J.~Deng, Y.~Hu, P.~Hu, T.-W. Li, S.~Liu, J.~T. Wang, D.~Ley, Q.~Dai, B.~Huang, J.~Huang, C.~Jiao, H.~A. Just, Y.~Pan, J.~Shen, Y.~Tu, W.~Wang, X.~Wang, S.~Zhang, S.~Zhang, R.~Jia, H.~Lakkaraju, H.~Peng, W.~Tang, C.~Xiong, J.~Zhao, H.~Tong, H.~Zhao, and J.~W. Ma.
\newblock A survey of data attribution: Methods, applications, and evaluation in the era of generative ai.
\newblock \emph{SSRN}, 2025.
\newblock \doi{10.2139/ssrn.5451054}.
\newblock Available at SSRN: \url{https://ssrn.com/abstract=5451054}.

\bibitem[Dettmers et~al.(2023)Dettmers, Pagnoni, Holtzman, and Zettlemoyer]{dettmers2023qlora}
T.~Dettmers, A.~Pagnoni, A.~Holtzman, and L.~Zettlemoyer.
\newblock Qlora: Efficient finetuning of quantized llms.
\newblock \emph{Advances in neural information processing systems}, 36:\penalty0 10088--10115, 2023.

\bibitem[Ethayarajh et~al.(2022)Ethayarajh, Choi, and Swayamdipta]{ethayarajh2022understanding}
K.~Ethayarajh, Y.~Choi, and S.~Swayamdipta.
\newblock Understanding dataset difficulty with $\mathcal{V}$-usable information.
\newblock In \emph{International Conference on Machine Learning}, pages 5988--6008. PMLR, 2022.

\bibitem[Ganin et~al.(2016)Ganin, Ustinova, Ajakan, Germain, Larochelle, Laviolette, March, and Lempitsky]{ganin2016domain}
Y.~Ganin, E.~Ustinova, H.~Ajakan, P.~Germain, H.~Larochelle, F.~Laviolette, M.~March, and V.~Lempitsky.
\newblock Domain-adversarial training of neural networks.
\newblock \emph{Journal of machine learning research}, 17\penalty0 (59):\penalty0 1--35, 2016.

\bibitem[Gehman et~al.(2020)Gehman, Gururangan, Sap, Choi, and Smith]{gehman2020realtoxicityprompts}
S.~Gehman, S.~Gururangan, M.~Sap, Y.~Choi, and N.~A. Smith.
\newblock Realtoxicityprompts: Evaluating neural toxic degeneration in language models.
\newblock \emph{arXiv preprint arXiv:2009.11462}, 2020.

\bibitem[Ghorbani and Zou(2019)]{ghorbani2019data}
A.~Ghorbani and J.~Zou.
\newblock Data shapley: Equitable valuation of data for machine learning.
\newblock In \emph{International conference on machine learning}, pages 2242--2251. PMLR, 2019.

\bibitem[Gliwa et~al.(2019)Gliwa, Mochol, Biesek, and Wawer]{gliwa2019samsum}
B.~Gliwa, I.~Mochol, M.~Biesek, and A.~Wawer.
\newblock Samsum corpus: A human-annotated dialogue dataset for abstractive summarization.
\newblock \emph{EMNLP-IJCNLP 2019}, page~70, 2019.

\bibitem[Grattafiori et~al.(2024)Grattafiori, Dubey, Jauhri, Pandey, Kadian, Al-Dahle, Letman, Mathur, Schelten, Vaughan, et~al.]{grattafiori2024llama}
A.~Grattafiori, A.~Dubey, A.~Jauhri, A.~Pandey, A.~Kadian, A.~Al-Dahle, A.~Letman, A.~Mathur, A.~Schelten, A.~Vaughan, et~al.
\newblock The llama 3 herd of models.
\newblock \emph{arXiv preprint arXiv:2407.21783}, 2024.

\bibitem[Gunasekar et~al.(2023)Gunasekar, Zhang, Aneja, Mendes, Del~Giorno, Gopi, Javaheripi, Kauffmann, de~Rosa, Saarikivi, et~al.]{gunasekar2023textbooks}
S.~Gunasekar, Y.~Zhang, J.~Aneja, C.~C.~T. Mendes, A.~Del~Giorno, S.~Gopi, M.~Javaheripi, P.~Kauffmann, G.~de~Rosa, O.~Saarikivi, et~al.
\newblock Textbooks are all you need.
\newblock \emph{arXiv preprint arXiv:2306.11644}, 2023.

\bibitem[Guo et~al.(2025)Guo, Yang, Zhang, Song, Wang, Zhu, Xu, Zhang, Ma, Bi, et~al.]{guo2025deepseek}
D.~Guo, D.~Yang, H.~Zhang, J.~Song, P.~Wang, Q.~Zhu, R.~Xu, R.~Zhang, S.~Ma, X.~Bi, et~al.
\newblock Deepseek-r1: Incentivizing reasoning capability in llms via reinforcement learning.
\newblock \emph{arXiv preprint arXiv:2501.12948}, 2025.

\bibitem[Guo et~al.(2018)Guo, Shah, and Barzilay]{guo2018multi}
J.~Guo, D.~Shah, and R.~Barzilay.
\newblock Multi-source domain adaptation with mixture of experts.
\newblock In \emph{Proceedings of the 2018 conference on empirical methods in natural language processing}, pages 4694--4703, 2018.

\bibitem[Gururangan et~al.(2020)Gururangan, Marasovi{\'c}, Swayamdipta, Lo, Beltagy, Downey, and Smith]{gururangan2020don}
S.~Gururangan, A.~Marasovi{\'c}, S.~Swayamdipta, K.~Lo, I.~Beltagy, D.~Downey, and N.~A. Smith.
\newblock Don’t stop pretraining: Adapt language models to domains and tasks.
\newblock In \emph{Proceedings of the 58th annual meeting of the association for computational linguistics}, pages 8342--8360, 2020.

\bibitem[Han and Tsvetkov(2021)]{han2021influence}
X.~Han and Y.~Tsvetkov.
\newblock Influence tuning: Demoting spurious correlations via instance attribution and instance-driven update.
\newblock In M.-F. Moens, X.~Huang, L.~Specia, and S.~W.-t. Yih, editors, \emph{Findings of the Association for Computational Linguistics: EMNLP 2021}, pages 4398--4409, Punta Cana, Dominican Republic, Nov. 2021. Association for Computational Linguistics.
\newblock \doi{10.18653/v1/2021.findings-emnlp.374}.
\newblock URL \url{https://aclanthology.org/2021.findings-emnlp.374/}.

\bibitem[Han et~al.(2024)Han, Gao, Liu, Zhang, and Zhang]{han2024parameterefficient}
Z.~Han, C.~Gao, J.~Liu, J.~Zhang, and S.~Q. Zhang.
\newblock Parameter-efficient fine-tuning for large models: A comprehensive survey.
\newblock \emph{Transactions on Machine Learning Research}, 2024.
\newblock ISSN 2835-8856.
\newblock URL \url{https://openreview.net/forum?id=lIsCS8b6zj}.

\bibitem[Hastie et~al.(2009)Hastie, Tibshirani, Friedman, and Friedman]{hastie2009elements}
T.~Hastie, R.~Tibshirani, J.~H. Friedman, and J.~H. Friedman.
\newblock \emph{The elements of statistical learning: data mining, inference, and prediction}, volume~2.
\newblock Springer, 2009.

\bibitem[Hazan(2016)]{hazan2016introduction}
E.~Hazan.
\newblock Introduction to online convex optimization.
\newblock \emph{Foundations and Trends in Optimization}, 2\penalty0 (3-4):\penalty0 157--325, 2016.

\bibitem[He et~al.(2024)He, Xia, and Henderson]{he2024what}
L.~He, M.~Xia, and P.~Henderson.
\newblock What is in your safe data? identifying benign data that breaks safety.
\newblock In \emph{First Conference on Language Modeling}, 2024.
\newblock URL \url{https://openreview.net/forum?id=Hi8jKh4HE9}.

\bibitem[He et~al.(2025)He, Li, Hu, Chen, and Yuan]{he2025subspace}
Y.~He, P.~Li, Y.~Hu, C.~Chen, and K.~Yuan.
\newblock Subspace optimization for large language models with convergence guarantees.
\newblock In \emph{Forty-second International Conference on Machine Learning}, 2025.

\bibitem[Hendrycks et~al.(2021)Hendrycks, Burns, Kadavath, Arora, Basart, Tang, Song, and Steinhardt]{hendrycksmath2021}
D.~Hendrycks, C.~Burns, S.~Kadavath, A.~Arora, S.~Basart, E.~Tang, D.~Song, and J.~Steinhardt.
\newblock Measuring mathematical problem solving with the math dataset.
\newblock \emph{NeurIPS}, 2021.

\bibitem[Hu et~al.(2022)Hu, Shen, Wallis, Allen-Zhu, Li, Wang, Wang, Chen, et~al.]{hu2022lora}
E.~J. Hu, Y.~Shen, P.~Wallis, Z.~Allen-Zhu, Y.~Li, S.~Wang, L.~Wang, W.~Chen, et~al.
\newblock Lora: Low-rank adaptation of large language models.
\newblock \emph{ICLR}, 1\penalty0 (2):\penalty0 3, 2022.

\bibitem[Hu et~al.(2025{\natexlab{a}})Hu, Melkonian, Tang, Zhao, and Ma]{hu2025grass}
P.~Hu, J.~Melkonian, W.~Tang, H.~Zhao, and J.~W. Ma.
\newblock Gra{SS}: Scalable data attribution with gradient sparsification and sparse projection.
\newblock In \emph{The Thirty-ninth Annual Conference on Neural Information Processing Systems}, 2025{\natexlab{a}}.
\newblock URL \url{https://openreview.net/forum?id=o0HgWRmyY1}.

\bibitem[Hu et~al.(2024)Hu, Hu, Zhao, and Ma]{hu2024most}
Y.~Hu, P.~Hu, H.~Zhao, and J.~Ma.
\newblock Most influential subset selection: Challenges, promises, and beyond.
\newblock In A.~Globerson, L.~Mackey, D.~Belgrave, A.~Fan, U.~Paquet, J.~Tomczak, and C.~Zhang, editors, \emph{Advances in Neural Information Processing Systems}, volume~37, pages 119778--119810. Curran Associates, Inc., 2024.
\newblock URL \url{https://proceedings.neurips.cc/paper_files/paper/2024/file/d8684e49752e06ac5e4b554b60ad212a-Paper-Conference.pdf}.

\bibitem[Hu et~al.(2025{\natexlab{b}})Hu, Wu, Ye, Forsyth, Zou, Jiang, Ma, and Zhao]{hu2025a}
Y.~Hu, F.~Wu, H.~Ye, D.~Forsyth, J.~Zou, N.~Jiang, J.~W. Ma, and H.~Zhao.
\newblock A snapshot of influence: A local data attribution framework for online reinforcement learning.
\newblock In \emph{The Thirty-ninth Annual Conference on Neural Information Processing Systems}, 2025{\natexlab{b}}.
\newblock URL \url{https://openreview.net/forum?id=sYK4yPDuT1}.

\bibitem[{Hugging Face}(2023)]{huggingface2023detoxifying}
{Hugging Face}.
\newblock Detoxifying a language model using ppo.
\newblock \url{https://huggingface.co/docs/trl/en/detoxifying_a_lm}, 2023.
\newblock TRL documentation (v0.17.0), accessed May 8, 2025.

\bibitem[Hutchinson(1989)]{hutchinson1989stochastic}
M.~F. Hutchinson.
\newblock A stochastic estimator of the trace of the influence matrix for laplacian smoothing splines.
\newblock \emph{Communications in Statistics-Simulation and Computation}, 18\penalty0 (3):\penalty0 1059--1076, 1989.

\bibitem[Ivison et~al.(2023)Ivison, Smith, Hajishirzi, and Dasigi]{ivison2023data}
H.~Ivison, N.~A. Smith, H.~Hajishirzi, and P.~Dasigi.
\newblock Data-efficient finetuning using cross-task nearest neighbors.
\newblock In \emph{Findings of the Association for Computational Linguistics: ACL 2023}, pages 9036--9061, 2023.

\bibitem[Iyer et~al.(2022)Iyer, Lin, Pasunuru, Mihaylov, Simig, Yu, Shuster, Wang, Liu, Koura, et~al.]{iyer2022opt}
S.~Iyer, X.~V. Lin, R.~Pasunuru, T.~Mihaylov, D.~Simig, P.~Yu, K.~Shuster, T.~Wang, Q.~Liu, P.~S. Koura, et~al.
\newblock Opt-iml: Scaling language model instruction meta learning through the lens of generalization.
\newblock \emph{arXiv preprint arXiv:2212.12017}, 2022.

\bibitem[Jiao et~al.(2025)Jiao, Gao, Raghunathan, and Xiong]{jiao2025feasibility}
C.~Jiao, W.~Gao, A.~Raghunathan, and C.~Xiong.
\newblock On the feasibility of in-context probing for data attribution.
\newblock In L.~Chiruzzo, A.~Ritter, and L.~Wang, editors, \emph{Findings of the Association for Computational Linguistics: NAACL 2025}, pages 5140--5155, Albuquerque, New Mexico, Apr. 2025. Association for Computational Linguistics.
\newblock ISBN 979-8-89176-195-7.
\newblock \doi{10.18653/v1/2025.findings-naacl.286}.
\newblock URL \url{https://aclanthology.org/2025.findings-naacl.286/}.

\bibitem[Joshi et~al.(2017)Joshi, Choi, Weld, and Zettlemoyer]{joshi2017triviaqa}
M.~Joshi, E.~Choi, D.~S. Weld, and L.~Zettlemoyer.
\newblock Triviaqa: A large scale distantly supervised challenge dataset for reading comprehension.
\newblock In \emph{Proceedings of the 55th Annual Meeting of the Association for Computational Linguistics (Volume 1: Long Papers)}, pages 1601--1611, 2017.

\bibitem[Koh and Liang(2017)]{koh2017understanding}
P.~W. Koh and P.~Liang.
\newblock Understanding black-box predictions via influence functions.
\newblock In \emph{International conference on machine learning}, pages 1885--1894. PMLR, 2017.

\bibitem[Kung et~al.(2023)Kung, Yin, Wu, Chang, and Peng]{kung2023active}
P.-N. Kung, F.~Yin, D.~Wu, K.-W. Chang, and N.~Peng.
\newblock Active instruction tuning: Improving cross-task generalization by training on prompt sensitive tasks.
\newblock In \emph{Proceedings of the 2023 Conference on Empirical Methods in Natural Language Processing}, pages 1813--1829, 2023.

\bibitem[Kwiatkowski et~al.(2019)Kwiatkowski, Palomaki, Redfield, Collins, Parikh, Alberti, Epstein, Polosukhin, Devlin, Lee, et~al.]{kwiatkowski2019natural}
T.~Kwiatkowski, J.~Palomaki, O.~Redfield, M.~Collins, A.~Parikh, C.~Alberti, D.~Epstein, I.~Polosukhin, J.~Devlin, K.~Lee, et~al.
\newblock Natural questions: a benchmark for question answering research.
\newblock \emph{Transactions of the Association for Computational Linguistics}, 7:\penalty0 453--466, 2019.

\bibitem[Lambert et~al.(2024)Lambert, Morrison, Pyatkin, Huang, Ivison, Brahman, Miranda, Liu, Dziri, Lyu, Gu, Malik, Graf, Hwang, Yang, Bras, Tafjord, Wilhelm, Soldaini, Smith, Wang, Dasigi, and Hajishirzi]{lambert2024tulu3}
N.~Lambert, J.~Morrison, V.~Pyatkin, S.~Huang, H.~Ivison, F.~Brahman, L.~J.~V. Miranda, A.~Liu, N.~Dziri, S.~Lyu, Y.~Gu, S.~Malik, V.~Graf, J.~D. Hwang, J.~Yang, R.~L. Bras, O.~Tafjord, C.~Wilhelm, L.~Soldaini, N.~A. Smith, Y.~Wang, P.~Dasigi, and H.~Hajishirzi.
\newblock Tülu 3: Pushing frontiers in open language model post-training.
\newblock \emph{arXiv preprint arXiv:2411.15124}, 2024.

\bibitem[Lan(2020)]{lan2020first}
G.~Lan.
\newblock \emph{First-order and stochastic optimization methods for machine learning}, volume~1.
\newblock Springer, 2020.

\bibitem[Lange(2016)]{lange2016mm}
K.~Lange.
\newblock \emph{MM optimization algorithms}.
\newblock SIAM, 2016.

\bibitem[Lange et~al.(2000)Lange, Hunter, and Yang]{lange2000optimization}
K.~Lange, D.~R. Hunter, and I.~Yang.
\newblock Optimization transfer using surrogate objective functions.
\newblock \emph{Journal of computational and graphical statistics}, 9\penalty0 (1):\penalty0 1--20, 2000.

\bibitem[Ling et~al.(2025)Ling, Zhao, Lu, Deng, Zheng, Wang, Chowdhury, Li, Cui, Zhang, et~al.]{ling2025domain}
C.~Ling, X.~Zhao, J.~Lu, C.~Deng, C.~Zheng, J.~Wang, T.~Chowdhury, Y.~Li, H.~Cui, X.~Zhang, et~al.
\newblock Domain specialization as the key to make large language models disruptive: A comprehensive survey.
\newblock \emph{ACM Computing Surveys}, 58\penalty0 (3):\penalty0 1--39, 2025.

\bibitem[Liu et~al.(2023)Liu, Nguyen, Nguyen, Ene, and Nguyen]{liu2023high}
Z.~Liu, T.~D. Nguyen, T.~H. Nguyen, A.~Ene, and H.~Nguyen.
\newblock High probability convergence of stochastic gradient methods.
\newblock In \emph{International conference on machine learning}, pages 21884--21914. PMLR, 2023.

\bibitem[Loshchilov and Hutter(2019)]{loshchilov2018decoupled}
I.~Loshchilov and F.~Hutter.
\newblock Decoupled weight decay regularization.
\newblock In \emph{International Conference on Learning Representations}, 2019.
\newblock URL \url{https://openreview.net/forum?id=Bkg6RiCqY7}.

\bibitem[Lu et~al.(2024)Lu, Yuan, Yuan, Lin, Lin, Tan, Zhou, and Zhou]{lu2024instag}
K.~Lu, H.~Yuan, Z.~Yuan, R.~Lin, J.~Lin, C.~Tan, C.~Zhou, and J.~Zhou.
\newblock \#instag: Instruction tagging for analyzing supervised fine-tuning of large language models.
\newblock In \emph{The Twelfth International Conference on Learning Representations}, 2024.
\newblock URL \url{https://openreview.net/forum?id=pszewhybU9}.

\bibitem[Mairal(2013)]{mairal2013optimization}
J.~Mairal.
\newblock Optimization with first-order surrogate functions.
\newblock In \emph{International conference on machine learning}, pages 783--791. PMLR, 2013.

\bibitem[Mishra et~al.(2022)Mishra, Khashabi, Baral, and Hajishirzi]{mishra2022cross}
S.~Mishra, D.~Khashabi, C.~Baral, and H.~Hajishirzi.
\newblock Cross-task generalization via natural language crowdsourcing instructions.
\newblock In \emph{Proceedings of the 60th Annual Meeting of the Association for Computational Linguistics (Volume 1: Long Papers)}, pages 3470--3487, 2022.

\bibitem[Muennighoff et~al.(2024)Muennighoff, Hongjin, Wang, Yang, Wei, Yu, Singh, and Kiela]{muennighoff2024generative}
N.~Muennighoff, S.~Hongjin, L.~Wang, N.~Yang, F.~Wei, T.~Yu, A.~Singh, and D.~Kiela.
\newblock Generative representational instruction tuning.
\newblock In \emph{The Thirteenth International Conference on Learning Representations}, 2024.

\bibitem[Muhamed et~al.(2024)Muhamed, Li, Woodruff, Diab, and Smith]{muhamed2024grass}
A.~Muhamed, O.~Li, D.~Woodruff, M.~Diab, and V.~Smith.
\newblock Grass: Compute efficient low-memory llm training with structured sparse gradients.
\newblock In \emph{Proceedings of the 2024 Conference on Empirical Methods in Natural Language Processing}, pages 14978--15003, 2024.

\bibitem[Nesterov(2013)]{nesterov2013introductory}
Y.~Nesterov.
\newblock \emph{Introductory lectures on convex optimization: A basic course}, volume~87.
\newblock Springer Science \& Business Media, 2013.

\bibitem[Nikdan et~al.(2025)Nikdan, Cohen-Addad, Alistarh, and Mirrokni]{nikdan2025efficient}
M.~Nikdan, V.~Cohen-Addad, D.~Alistarh, and V.~Mirrokni.
\newblock Efficient data selection at scale via influence distillation.
\newblock In \emph{The Thirty-ninth Annual Conference on Neural Information Processing Systems}, 2025.
\newblock URL \url{https://openreview.net/forum?id=E6ZdfjtoiX}.

\bibitem[Ouyang et~al.(2022)Ouyang, Wu, Jiang, Almeida, Wainwright, Mishkin, Zhang, Agarwal, Slama, Ray, et~al.]{ouyang2022training}
L.~Ouyang, J.~Wu, X.~Jiang, D.~Almeida, C.~Wainwright, P.~Mishkin, C.~Zhang, S.~Agarwal, K.~Slama, A.~Ray, et~al.
\newblock Training language models to follow instructions with human feedback.
\newblock \emph{Advances in neural information processing systems}, 35:\penalty0 27730--27744, 2022.

\bibitem[Pan and Yang(2009)]{pan2009survey}
S.~J. Pan and Q.~Yang.
\newblock A survey on transfer learning.
\newblock \emph{IEEE Transactions on knowledge and data engineering}, 22\penalty0 (10):\penalty0 1345--1359, 2009.

\bibitem[Pandya et~al.(2025)Pandya, Patel, Nord, Walmsley, and {\'C}iprijanovi{\'c}]{pandya2025sidda}
S.~Pandya, P.~Patel, B.~D. Nord, M.~Walmsley, and A.~{\'C}iprijanovi{\'c}.
\newblock Sidda: Sinkhorn dynamic domain adaptation for image classification with equivariant neural networks.
\newblock \emph{Machine Learning: Science and Technology}, 6\penalty0 (3):\penalty0 035032, 2025.

\bibitem[Perez et~al.(2021)Perez, Kiela, and Cho]{perez2021true}
E.~Perez, D.~Kiela, and K.~Cho.
\newblock True few-shot learning with language models.
\newblock \emph{Advances in neural information processing systems}, 34:\penalty0 11054--11070, 2021.

\bibitem[Pruthi et~al.(2020)Pruthi, Liu, Kale, and Sundararajan]{pruthi2020estimating}
G.~Pruthi, F.~Liu, S.~Kale, and M.~Sundararajan.
\newblock Estimating training data influence by tracing gradient descent.
\newblock \emph{Advances in Neural Information Processing Systems}, 33:\penalty0 19920--19930, 2020.

\bibitem[{Qwen Team}(2025)]{qwen3technicalreport}
{Qwen Team}.
\newblock Qwen3 technical report, 2025.
\newblock URL \url{https://arxiv.org/abs/2505.09388}.

\bibitem[Rajpurkar et~al.(2016)Rajpurkar, Zhang, Lopyrev, and Liang]{rajpurkar2016squad}
P.~Rajpurkar, J.~Zhang, K.~Lopyrev, and P.~Liang.
\newblock Squad: 100,000+ questions for machine comprehension of text.
\newblock In \emph{Proceedings of the 2016 conference on empirical methods in natural language processing}, pages 2383--2392, 2016.

\bibitem[Rasley et~al.(2020)Rasley, Rajbhandari, Ruwase, and He]{rasley2020deepspeed}
J.~Rasley, S.~Rajbhandari, O.~Ruwase, and Y.~He.
\newblock Deepspeed: System optimizations enable training deep learning models with over 100 billion parameters.
\newblock In \emph{Proceedings of the 26th ACM SIGKDD international conference on knowledge discovery \& data mining}, pages 3505--3506, 2020.

\bibitem[Renduchintala et~al.(2024)Renduchintala, Konuk, and Kuchaiev]{renduchintala2024tied}
A.~Renduchintala, T.~Konuk, and O.~Kuchaiev.
\newblock Tied-lora: Enhancing parameter efficiency of lora with weight tying.
\newblock In \emph{Proceedings of the 2024 Conference of the North American Chapter of the Association for Computational Linguistics: Human Language Technologies (Volume 1: Long Papers)}, pages 8694--8705, 2024.

\bibitem[Schulman and Lab(2025)]{schulman2025lora}
J.~Schulman and T.~M. Lab.
\newblock Lora without regret.
\newblock \emph{Thinking Machines Lab: Connectionism}, 2025.
\newblock \doi{10.64434/tml.20250929}.
\newblock https://thinkingmachines.ai/blog/lora/.

\bibitem[Schulman et~al.(2017)Schulman, Wolski, Dhariwal, Radford, and Klimov]{schulman2017proximal}
J.~Schulman, F.~Wolski, P.~Dhariwal, A.~Radford, and O.~Klimov.
\newblock Proximal policy optimization algorithms.
\newblock \emph{arXiv preprint arXiv:1707.06347}, 2017.

\bibitem[Shalev-Shwartz and Ben-David(2014)]{shalev2014understanding}
S.~Shalev-Shwartz and S.~Ben-David.
\newblock \emph{Understanding machine learning: From theory to algorithms}.
\newblock Cambridge university press, 2014.

\bibitem[Shao et~al.(2024)Shao, Wang, Zhu, Xu, Song, Bi, Zhang, Zhang, Li, Wu, et~al.]{shao2024deepseekmath}
Z.~Shao, P.~Wang, Q.~Zhu, R.~Xu, J.~Song, X.~Bi, H.~Zhang, M.~Zhang, Y.~Li, Y.~Wu, et~al.
\newblock Deepseekmath: Pushing the limits of mathematical reasoning in open language models.
\newblock \emph{arXiv preprint arXiv:2402.03300}, 2024.

\bibitem[Sheng et~al.(2024)Sheng, Zhang, Ye, Wu, Zhang, Zhang, Peng, Lin, and Wu]{sheng2024hybridflow}
G.~Sheng, C.~Zhang, Z.~Ye, X.~Wu, W.~Zhang, R.~Zhang, Y.~Peng, H.~Lin, and C.~Wu.
\newblock Hybridflow: A flexible and efficient rlhf framework.
\newblock \emph{arXiv preprint arXiv: 2409.19256}, 2024.

\bibitem[Sheng et~al.(2023)Sheng, Cao, Li, Hooper, Lee, Yang, Chou, Zhu, Zheng, Keutzer, Gonzalez, and Stoica]{sheng2023slora}
Y.~Sheng, S.~Cao, D.~Li, C.~Hooper, N.~Lee, S.~Yang, C.~Chou, B.~Zhu, L.~Zheng, K.~Keutzer, J.~E. Gonzalez, and I.~Stoica.
\newblock S-lora: Serving thousands of concurrent lora adapters.
\newblock \emph{arXiv preprint arXiv:2311.03285}, 2023.

\bibitem[Sugiyama et~al.(2007)Sugiyama, Krauledat, and M{\"u}ller]{sugiyama2007covariate}
M.~Sugiyama, M.~Krauledat, and K.-R. M{\"u}ller.
\newblock Covariate shift adaptation by importance weighted cross validation.
\newblock \emph{Journal of Machine Learning Research}, 8\penalty0 (5), 2007.

\bibitem[Taori et~al.(2023)Taori, Gulrajani, Zhang, Dubois, Li, Guestrin, Liang, and Hashimoto]{alpaca}
R.~Taori, I.~Gulrajani, T.~Zhang, Y.~Dubois, X.~Li, C.~Guestrin, P.~Liang, and T.~B. Hashimoto.
\newblock Stanford alpaca: An instruction-following llama model.
\newblock \url{https://github.com/tatsu-lab/stanford_alpaca}, 2023.

\bibitem[Vershynin(2018)]{vershynin2018high}
R.~Vershynin.
\newblock \emph{High-Dimensional Probability: An Introduction with Applications in Data Science}.
\newblock Cambridge University Press, 2018.

\bibitem[Vidgen et~al.(2021)Vidgen, Thrush, Waseem, and Kiela]{vidgen2021lftw}
B.~Vidgen, T.~Thrush, Z.~Waseem, and D.~Kiela.
\newblock Learning from the worst: Dynamically generated datasets to improve online hate detection.
\newblock In \emph{ACL}, 2021.

\bibitem[von Werra et~al.(2020)von Werra, Belkada, Tunstall, Beeching, Thrush, Lambert, Huang, Rasul, and Gallouédec]{vonwerra2022trl}
L.~von Werra, Y.~Belkada, L.~Tunstall, E.~Beeching, T.~Thrush, N.~Lambert, S.~Huang, K.~Rasul, and Q.~Gallouédec.
\newblock Trl: Transformer reinforcement learning.
\newblock \url{https://github.com/huggingface/trl}, 2020.

\bibitem[Wang et~al.(2024{\natexlab{a}})Wang, Lin, Qiao, Foo, and Low]{wang2024helpful}
J.~Wang, X.~Lin, R.~Qiao, C.-S. Foo, and B.~K.~H. Low.
\newblock Helpful or harmful data? fine-tuning-free shapley attribution for explaining language model predictions.
\newblock In \emph{Forty-first International Conference on Machine Learning}, 2024{\natexlab{a}}.
\newblock URL \url{https://openreview.net/forum?id=WSpPC1Jm0p}.

\bibitem[Wang et~al.(2024{\natexlab{b}})Wang, Wu, Song, Mittal, and Jia]{wang2024greats}
J.~T. Wang, T.~Wu, D.~Song, P.~Mittal, and R.~Jia.
\newblock Greats: Online selection of high-quality data for llm training in every iteration.
\newblock \emph{Advances in Neural Information Processing Systems}, 37:\penalty0 131197--131223, 2024{\natexlab{b}}.

\bibitem[Wang et~al.(2023{\natexlab{a}})Wang, Ivison, Dasigi, Hessel, Khot, Chandu, Wadden, MacMillan, Smith, Beltagy, et~al.]{wang2023far}
Y.~Wang, H.~Ivison, P.~Dasigi, J.~Hessel, T.~Khot, K.~Chandu, D.~Wadden, K.~MacMillan, N.~A. Smith, I.~Beltagy, et~al.
\newblock How far can camels go? exploring the state of instruction tuning on open resources.
\newblock \emph{Advances in Neural Information Processing Systems}, 36:\penalty0 74764--74786, 2023{\natexlab{a}}.

\bibitem[Wang et~al.(2023{\natexlab{b}})Wang, Lin, Zeng, and Zhang]{wang2023multilora}
Y.~Wang, Y.~Lin, X.~Zeng, and G.~Zhang.
\newblock Multilora: Democratizing lora for better multi-task learning.
\newblock \emph{arXiv preprint arXiv:2311.11501}, 2023{\natexlab{b}}.

\bibitem[Wei et~al.(2022)Wei, Bosma, Zhao, Guu, Yu, Lester, Du, Dai, and Le]{wei2022finetuned}
J.~Wei, M.~Bosma, V.~Zhao, K.~Guu, A.~W. Yu, B.~Lester, N.~Du, A.~M. Dai, and Q.~V. Le.
\newblock Finetuned language models are zero-shot learners.
\newblock In \emph{International Conference on Learning Representations}, 2022.
\newblock URL \url{https://openreview.net/forum?id=gEZrGCozdqR}.

\bibitem[Xia et~al.(2024{\natexlab{a}})Xia, Malladi, Gururangan, Arora, and Chen]{xia2024less}
M.~Xia, S.~Malladi, S.~Gururangan, S.~Arora, and D.~Chen.
\newblock Less: Selecting influential data for targeted instruction tuning.
\newblock In \emph{International Conference on Machine Learning}, pages 54104--54132. PMLR, 2024{\natexlab{a}}.

\bibitem[Xia et~al.(2024{\natexlab{b}})Xia, Qin, and Hazan]{xia2024chain}
W.~Xia, C.~Qin, and E.~Hazan.
\newblock Chain of lora: Efficient fine-tuning of language models via residual learning.
\newblock \emph{arXiv preprint arXiv:2401.04151}, 2024{\natexlab{b}}.

\bibitem[Zhang and Sennrich(2019)]{zhang2019root}
B.~Zhang and R.~Sennrich.
\newblock Root mean square layer normalization.
\newblock \emph{Advances in neural information processing systems}, 32, 2019.

\bibitem[Zhang et~al.(2023)Zhang, Zhang, Shi, Chu, and Li]{zhang2023lora}
L.~Zhang, L.~Zhang, S.~Shi, X.~Chu, and B.~Li.
\newblock Lora-fa: Memory-efficient low-rank adaptation for large language models fine-tuning.
\newblock \emph{arXiv preprint arXiv:2308.03303}, 2023.

\bibitem[Zhang et~al.(2024)Zhang, Zeng, Wang, and Lu]{zhang2024tinyllama}
P.~Zhang, G.~Zeng, T.~Wang, and W.~Lu.
\newblock Tinyllama: An open-source small language model, 2024.

\bibitem[Zhao et~al.(2024)Zhao, Zhang, Chen, Wang, Anandkumar, and Tian]{zhao2024galore}
J.~Zhao, Z.~Zhang, B.~Chen, Z.~Wang, A.~Anandkumar, and Y.~Tian.
\newblock Galore: Memory-efficient llm training by gradient low-rank projection.
\newblock In \emph{International Conference on Machine Learning}, pages 61121--61143. PMLR, 2024.

\bibitem[Zhou et~al.(2023)Zhou, Liu, Xu, Iyer, Sun, Mao, Ma, Efrat, Yu, YU, Zhang, Ghosh, Lewis, Zettlemoyer, and Levy]{zhou2023lima}
C.~Zhou, P.~Liu, P.~Xu, S.~Iyer, J.~Sun, Y.~Mao, X.~Ma, A.~Efrat, P.~Yu, L.~YU, S.~Zhang, G.~Ghosh, M.~Lewis, L.~Zettlemoyer, and O.~Levy.
\newblock {LIMA}: Less is more for alignment.
\newblock In \emph{Thirty-seventh Conference on Neural Information Processing Systems}, 2023.
\newblock URL \url{https://openreview.net/forum?id=KBMOKmX2he}.

\end{thebibliography}
\bibliographystyle{abbrvnat}

\newpage
\appendix
\section{Omitted Details from \texorpdfstring{\Cref{sec:related-work}}{Section 2}}\label{adxsec:related-work}
In this section, we briefly review the two low-dimensional parameter training methods used in this paper, LoRA and MeSO.

To keep the discussion self-contained, we introduce only the notation needed here. In particular, for a linear layer \(l\), we write its weight matrix as \(W^{(l)} \in \mathbb{R}^{d_{\text{out}}^{(l)} \times d_{\text{in}}^{(l)}}\), where \(d_{\text{in}}^{(l)}\) and \(d_{\text{out}}^{(l)}\) denote the input and output dimensions, respectively. This notation is consistent with the main text, except that here we do not assume \(d_{\text{in}}^{(l)} = d_{\text{out}}^{(l)} = \sqrt{d/L}\).

\subsection{LoRA}\label{adxsubsec:LoRA}
LoRA~\citep{hu2022lora} reparameterizes each linear layer \(l\) as \(W^{(l)} + B^{(l)}A^{(l)}\), where \(B^{(l)} \in \mathbb{R}^{d_{\text{out}}^{(l)} \times r}\) and \(A^{(l)} \in \mathbb{R}^{r \times d_{\text{in}}^{(l)}}\) are low-rank adapter matrices with \emph{rank} \(r \ll \min(d_{\text{in}}^{(l)}, d_{\text{out}}^{(l)})\). During training, the pretrained weight \(W^{(l)}\) is kept frozen, and only the adapter parameters \(A^{(l)}\) and \(B^{(l)}\) are updated. We collectively refer to these trainable parameters as the \emph{adapters}. Because the number of trainable parameters is much smaller than in full fine-tuning, LoRA substantially reduces memory usage: gradients and optimizer states (which can be several times larger than the gradients themselves) need to be stored only for the adapters rather than for the full weight matrix.

\subsection{MeSO}\label{adxsubsec:MeSO}
MeSO~\citep{zhao2024galore} maintains optimizer states in a compressed subspace \(\mathbb{R}^{\kappa}\) rather than the full parameter space \(\mathbb{R}^{d}\). At each layer \(l\), MeSO uses a projection \(\Pi \in \mathbb{R}^{\kappa^{(l)} \times d^{(l)}}\) (where \(d^{(l)} = d_{\text{in}}^{(l)} d_{\text{out}}^{(l)}\) is the number of parameters, \(\kappa^{(l)}\) is the compressed dimension for layer \(l\), with \(d = \sum_{l} d^{(l)}\) and \(\kappa = \sum_{l} \kappa^{(l)}\)) to compress the vectorized gradient \(\operatorname{vec}(g^{(l)})\). The standard MeSO update is given in \Cref{algo:meso-standard}.

\begin{algorithm}[H]\label{algo:meso-standard}
    \DontPrintSemicolon{}
    \caption{Standard MeSO Training}
    \KwData{Model \(\theta_t\), training data \(\{z_i\}_{i=1}^n\), learning rate \(\eta_t\), projections \(\Pi\)}
    \KwResult{Updated model parameters \(\theta_{t+1}\)}
    \SetKwFunction{Forward}{Forward}
    \SetKwFunction{Backward}{Backward}
    \SetKwFunction{Release}{Release}
    \BlankLine
    (\(\ell\), \(\{a_{i}^{(l)}, e_{i}^{(l)}\}_{l,i}\))\(\gets\)\Forward{\(\theta_t\), \(\{z_i\}_{i=1}^{n}\)}\;
    \For{\(l = L, \ldots, 1\)}{
        \(\{\partial \ell / \partial e_{i}^{(l)}\}_{i=1}^{n} \gets\)\Backward{\(\ell\), \(\theta_t^{(l)}\)}\;
        \(u_t^{(l)} \gets \frac{1}{n} \sum_{i=1}^{n} (\partial \ell / \partial e_i^{(l)}) \otimes a_i^{(l)}\)\;
        \Release{\(a_i^{(l)}\), \(\partial \ell / \partial e_i^{(l)}\)} for all \(i\)\;
        \(\widetilde{u}_t^{(l)} \gets \Pi(u_t^{(l)})\)\Comment*[r]{Compress to \(\mathbb{R}^{\kappa^{(l)}}\)}
        \(\theta_{t+1}^{(l)} \gets \theta_t^{(l)} - \eta_t \cdot (\Pi^\top \widetilde{u}_t^{(l)})\)\Comment*[r]{Back-project and update}
        \Release{\(\widetilde{u}_t^{(l)}\)}\;
    }
    \Return{\(\theta_{t+1}\)}\;
\end{algorithm}

Compared to LoRA, MeSO achieves memory savings through a different mechanism. LoRA reduces memory by restricting training to a small set of adapter parameters, so gradients and optimizer states are stored only for those parameters. In contrast, MeSO keeps the original parameterization of the model, but performs optimization in a low-dimensional subspace. As a result, gradient-related quantities such as optimizer states need only be maintained in the compressed space rather than in the full parameter space. This can lead to even greater memory savings, especially when the subspace dimension is smaller than the number of trainable parameters used by LoRA.\footnote{Technically, it is possible to combine LoRA and MeSO. However, this leads to instability and is not a common practice.}

\subsubsection{Subspace Refreshing}
We note that in many MeSO methods, the projection \(\Pi\) is periodically refreshed from \(\Pi_{\text{old}}\) to \(\Pi_{\text{new}}\) to allow updating different subspaces, which requires transferring the optimizer's momentum states to the new subspace. Prior works omit this nuance by setting the momentum states to zero for simplicity~\citep{zhao2024galore}. Here, we provide a strategy for AdamW~\citep{loshchilov2018decoupled} to mitigate this.

\paragraph{First moment.}
For the \emph{first moment} \(\hat{m}_{\text{old}} \in \mathbb{R}^{\kappa^{(l)}}\), the transfer is straightforward: back-project to full space via \(\Pi^\top_{\text{old}}\), then forward-project via \(\Pi_{\text{new}}\).

\paragraph{Second moment.}
For the \emph{second moment} \(\hat{v}_{\text{old}} \in \mathbb{R}^{\kappa^{(l)}}\), a naive linear transformation is incorrect since variance is a quadratic quantity. In our implementation we consider a diagonal covariance approximation \(\hat{\Sigma}_{\text{old}} \approx \diag(\hat{v}_{\text{old}})\). This allows us to transform the second moment as:
\[
    \hat{v}_{\text{new}}
    = \diag(M \diag(\hat{v}_{\text{old}}) M^\top) = (M \odot M) \hat{v}_{\text{old}},
\]
where \(M = \Pi_{\text{new}} \Pi^\top_{\text{old}}\). When \(M\) is too large to form explicitly, this diagonal can be estimated stochastically via Hutchinson's method~\citep{hutchinson1989stochastic}: \(\hat{v}_{\text{new}} \approx \frac{1}{N_{\mathrm{probe}}} \sum_{j=1}^{N_{\mathrm{probe}}} r_j \odot (M \diag(\hat{v}_{\text{old}}) M^\top r_j)\) with Rademacher probes \(r_j \in \{-1,1\}^{\kappa^{(l)}}\).

\section{Omitted Details from \texorpdfstring{\Cref{sec:method}}{Section 3}}\label{adxsec:method}
This section collects the technical results deferred from \Cref{sec:method}. \Cref{adxsubsec:mm-derivation} derives the projection formulation (\Cref{eq:proj}) from the majorization--minimization relaxation. \Cref{adxsubsec:bias-variance-proof} derives both the one-step majorization descent lemma (\Cref{lma:majorization}) that reduces bias--variance tradeoffs for each method to MSE, and further provides the MSE bounds for all four methods, proving \Cref{prop:bias-variance,thm:bias-variance}.

For convenience, we restate all the assumptions used throughout the proofs.

\begin{assumption*}[Target smoothness, \Cref{ass:smooth}]
    The target population loss \(\mathcal{L}_\star\) is \(\beta\)-smooth for some \(\beta > 0\).
\end{assumption*}

\begin{assumption*}[Bounded gradients, \Cref{ass:bounded}]
    There exists \(C > 0\) such that \(\lVert g_i \rVert \leq C\) for all \(i \in [n]\).
\end{assumption*}

\begin{assumption*}[Sub-Gaussian noise, \Cref{ass:sub-Gaussian-noise}]
    Noise of target gradient \(\xi \coloneqq \hat{g}_\star - g_\star\) is sub-Gaussian with parameter \(\sigma/\sqrt m\) for some \(\sigma > 0\).
\end{assumption*}

\subsection{Derivation of the Projection Formulation}\label{adxsubsec:mm-derivation}
By \Cref{ass:smooth}, the target loss admits the quadratic majorization
\[
    \mathcal{L}_{\star}(\theta_t - \eta_t u)
    \leq \mathcal{L}_{\star}(\theta_t) - \eta_t \langle g_{\star}, u \rangle + \frac{\beta \eta_t^2}{2} \lVert u \rVert^2.
\]
Minimizing the upper bound on the right-hand side over \(u \in U_t\), dropping the constant \(\mathcal{L}_\star(\theta_t)\), and substituting the target batch estimate \(\hat{g}_\star\) for the unknown \(g_\star\) yields
\[
    u_t
    = \argmax_{u \in U_t} \left\{ \langle \hat{g}_\star, u \rangle - \frac{\beta \eta_t}{2} \lVert u \rVert^2 \right\}
    \iff u_t
    \in \argmin_{u \in U_t} \left\lVert u - \frac{\hat{g}_\star}{\beta \eta_t} \right\rVert^2,
\]
where the equivalence follows from completing the square:
\[
    \langle \hat{g}_\star, u \rangle - \frac{\beta\eta_t}{2}\lVert u \rVert^2
    = -\frac{\beta\eta_t}{2} \left\lVert u - \frac{1}{\beta\eta_t} \hat{g}_\star \right\rVert^2 + \frac{\lVert \hat{g}_\star \rVert^2}{2\beta\eta_t}
\]
Setting \(\eta_t = 1/\beta\)~\citep{bottou2018optimization} absorbs the scaling, giving \Cref{eq:proj}.

\subsection{Proof of the Bias--Variance Tradeoffs}\label{adxsubsec:bias-variance-proof}
We now prove the one-step majorization descent lemma (\Cref{lma:majorization}):

\begin{lemma}
    Assume \Cref{ass:smooth}. Fix \(\theta_t\) and let \(0 < \eta_t \leq 1/\beta\). For any \(u\) with \(\mathbb{E}[\lVert u \rVert ^2 \mid \theta_t] < \infty\),
    \[
        \mathbb{E}\left[\mathcal{L}_\star(\theta_{t+1}) \mid \theta_t\right]
        \leq \mathcal{L}_\star(\theta_t) - \frac{\eta_t}{2} \lVert g_\star \rVert ^2 + \frac{\eta_t}{2} \MSE(u).
    \]
\end{lemma}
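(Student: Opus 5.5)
The plan is to start from the quadratic majorization \Cref{eq:major}, which follows from \Cref{ass:smooth}. We apply it pathwise with \(\theta_{t+1} = \theta_t - \eta_t u\) for the (random) update \(u\):
\[
    \mathcal{L}_\star(\theta_{t+1})
    \leq \mathcal{L}_\star(\theta_t) - \eta_t \langle g_\star, u \rangle + \frac{\beta \eta_t^2}{2} \lVert u \rVert^2 .
\]
Since \(0<\eta_t\le 1/\beta\), we have \(\beta\eta_t^2/2 \le \eta_t/2\). This lets us replace the last term by \(\frac{\eta_t}{2}\lVert u\rVert^2\) and get the deterministic inequality
\[
    \mathcal{L}_\star(\theta_{t+1}) \le \mathcal{L}_\star(\theta_t) - \eta_t\langle g_\star,u\rangle + \frac{\eta_t}{2}\lVert u\rVert^2 .
\]

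The key algebraic step is the polarization identity
\[
    -\langle g_\star,u\rangle + \tfrac12\lVert u\rVert^2 = \tfrac12\lVert u-g_\star\rVert^2 - \tfrac12\lVert g_\star\rVert^2 .
\]
It rewrites the right-hand side as
\[
    \mathcal{L}_\star(\theta_t) - \frac{\eta_t}{2}\lVert g_\star\rVert^2 + \frac{\eta_t}{2}\lVert u-g_\star\rVert^2 .
\]
This holds for every realization of the training and target batches. Here \(\theta_t\) is fixed, so \(g_\star\) and \(\mathcal{L}_\star(\theta_t)\) are deterministic.

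Finally, we take the conditional expectation given \(\theta_t\). The assumption \(\mathbb{E}[\lVert u\rVert^2\mid\theta_t]<\infty\) guarantees that \(\mathbb{E}[\lVert u-g_\star\rVert^2\mid\theta_t]=\MSE(u)\) is finite. The right-hand side is then integrable, and monotonicity of conditional expectation gives the claim. I expect no real obstacle. The only points needing care are measurability and integrability, namely that \(\mathcal{L}_\star(\theta_{t+1})\) has a well-defined conditional expectation. If it is not integrable from below, we interpret the inequality in the extended sense; the upper bound is integrable in any case. The rest is the inequality \(\beta\eta_t\le 1\) and completing the square.
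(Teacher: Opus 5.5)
Your proposal is correct and follows essentially the paper's proof. Both use the three ingredients of the smoothness majorization, the bound $\beta\eta_t^2/2 \le \eta_t/2$, and completing the square to produce $\MSE(u)$. The only difference is the order: you complete the square pathwise and then take the conditional expectation, whereas the paper takes expectations first, expands $\MSE(u)$, and drops the nonpositive remainder $\bigl(\frac{\beta\eta_t^2}{2}-\frac{\eta_t}{2}\bigr)\mathbb{E}[\lVert u\rVert^2\mid\theta_t]$.
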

\begin{proof}
    By \Cref{ass:smooth} and linearity of expectation,
    \[
        \mathbb{E}[\mathcal{L}_\star(\theta_{t+1}) \mid \theta_t]
        \leq \mathcal{L}_\star(\theta_t) - \eta_t \langle g_\star, \mathbb{E}[u \mid \theta_t] \rangle + \frac{\beta \eta_t^2}{2} \mathbb{E}[\lVert u \rVert ^2 \mid \theta_t].
    \]
    Expanding \(\MSE(u) = \mathbb{E}[\lVert u \rVert ^2 \mid \theta_t] - 2\langle g_\star, \mathbb{E}[u \mid \theta_t]\rangle + \lVert g_\star \rVert ^2\) and rearranging gives
    \[
        -\langle g_\star, \mathbb{E}[u \mid \theta_t] \rangle + \frac{1}{2} \mathbb{E}[\lVert u \rVert ^2 \mid \theta_t]
        = -\frac{1}{2} \lVert g_\star \rVert ^2 + \frac{1}{2} \MSE(u).
    \]
    Substituting and using \(\eta_t \leq 1/\beta\) to drop the nonpositive remainder \((\frac{\beta \eta_t^2}{2} - \frac{\eta_t}{2})\mathbb{E}[\lVert u \rVert ^2 \mid \theta_t] \leq 0\) yields the result.
\end{proof}

We are now ready to prove the bias--variance tradeoffs. Firstly, we prove \Cref{prop:bias-variance} for the Full-Training Update and the Target-Only Update.

\begin{proposition*}[Bias--variance tradeoffs]
    Fix \(\theta_t\) and let \(\eta_t = 1/\beta\). Then:
    \begin{enumerate}[label=(\roman*),leftmargin=*]
        \item \textbf{Full-Training Update.}
              For \(U_t^{\mathrm{tr}}=\{\hat{g}_{\mathrm{tr}}\}\),
              \[
                  \mathcal{B}(U_t^{\mathrm{tr}})
                  = \lVert g_{\mathrm{tr}} - g_\star \rVert^2 + \frac{\tr(\Sigma_{\mathrm{tr}})}{n},\qquad
                  \mathcal{V}(u_t^{\mathrm{tr}})
                  = 0.
              \]
        \item \textbf{Target-Only Update.}
              For \(U_t^\star=\mathbb{R}^d\),
              \[
                  \mathcal{B}(U_t^{\star})
                  = 0,\qquad
                  \mathcal{V}(u_t^{\star})
                  = \frac{\tr(\Sigma_\star)}{m}.
              \]
    \end{enumerate}
\end{proposition*}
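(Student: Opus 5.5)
The plan is to compute both quantities directly from the definitions of $\mathcal{B}$ and $\mathcal{V}=\MSE(u_t)-\mathcal{B}(U_t)$, conditioning on $\theta_t$ throughout. Since $g_\star=g_\star(\theta_t)$ is fixed given $\theta_t$, each case reduces to a standard mean-squared-error computation for an i.i.d. sample mean. The step size $\eta_t=1/\beta$ only guarantees that $u_t=\Proj_{U_t}(\hat g_\star)$ via \Cref{eq:proj}; it plays no further role in the computation.

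\textbf{Full-Training Update.} The feasible set $U_t^{\mathrm{tr}}=\{\hat g_{\mathrm{tr}}\}$ is a singleton, so $\inf_{u\in U_t^{\mathrm{tr}}}\lVert u-g_\star\rVert^2=\lVert\hat g_{\mathrm{tr}}-g_\star\rVert^2$. Hence $\mathcal{B}(U_t^{\mathrm{tr}})=\mathbb{E}[\lVert\hat g_{\mathrm{tr}}-g_\star\rVert^2\mid\theta_t]$. The projection is forced to be $u_t^{\mathrm{tr}}=\hat g_{\mathrm{tr}}$, so $\MSE(u_t^{\mathrm{tr}})$ equals this same expectation, and therefore $\mathcal{V}(u_t^{\mathrm{tr}})=0$.

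To evaluate the bias, we write $\hat g_{\mathrm{tr}}-g_\star=(\hat g_{\mathrm{tr}}-g_{\mathrm{tr}})+(g_{\mathrm{tr}}-g_\star)$ and expand the square. The cross term vanishes because $\mathbb{E}[\hat g_{\mathrm{tr}}\mid\theta_t]=g_{\mathrm{tr}}$, which holds since the $z_i$ are i.i.d. from $\mathbb{P}_{\mathrm{tr}}$ and we may interchange gradient and expectation (a mild regularity assumption we adopt tacitly). For the remaining term, independence of the $g_i$ gives $\mathbb{E}\lVert\hat g_{\mathrm{tr}}-g_{\mathrm{tr}}\rVert^2=\tfrac{1}{n^2}\sum_i\mathbb{E}\lVert g_i-g_{\mathrm{tr}}\rVert^2=\tr(\Sigma_{\mathrm{tr}})/n$. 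Together these yield the stated formula for $\mathcal{B}(U_t^{\mathrm{tr}})$.

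\textbf{Target-Only Update.} Here $U_t^\star=\mathbb{R}^d$ contains $g_\star$ itself, so the infimum is $0$ pointwise and $\mathcal{B}(U_t^\star)=0$. Projection onto $\mathbb{R}^d$ is the identity, so $u_t^\star=\hat g_\star$. Therefore $\mathcal{V}(u_t^\star)=\MSE(\hat g_\star)=\mathbb{E}\lVert\hat g_\star-g_\star\rVert^2$. Using unbiasedness $\mathbb{E}[g_j^\star\mid\theta_t]=g_\star$ and the same independence argument over the $m$ i.i.d. target samples, this equals $\tr(\Sigma_\star)/m$.

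The only potential obstacle is justifying unbiasedness of the batch gradients, i.e., exchanging $\nabla_\theta$ with the expectation over the data distributions. The covariances $\Sigma_{\mathrm{tr}}$ and $\Sigma_\star$ are implicitly assumed finite, since otherwise the statement is vacuous. Under these standard conditions, every other step is routine algebra.
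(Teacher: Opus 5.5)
Your proof is correct and follows essentially the same route as the paper. For the singleton set you identify $\mathcal{B}$ with $\MSE$, so the variance term is $0$, and you compute the MSE as squared mean plus $\tr(\Sigma_{\mathrm{tr}})/n$; for $U_t^\star=\mathbb{R}^d$ you use $g_\star\in U_t^\star$ and $u_t^\star=\hat g_\star$. Your explicit mention of the gradient--expectation interchange and of finite covariances just makes precise what the paper assumes tacitly.
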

\begin{proof}
    We prove this one by one.
    \begin{enumerate}[label=(\roman*),leftmargin=*]
        \item \emph{Part~\labelcref{prop:bias-variance-i} (Full-Training Update).}
              Since \(u_t^{\mathrm{tr}} = \hat{g}_{\mathrm{tr}}\) has mean \(g_{\mathrm{tr}}\) and covariance \(\Sigma_{\mathrm{tr}} / n\),
              \[
                  \MSE(u_t^{\mathrm{tr}})
                  = \lVert \mathbb{E}[\hat{g}_{\mathrm{tr}} - g_\star \mid \theta_t] \rVert^2 + \tr(\Cov(\hat{g}_{\mathrm{tr}} - g_\star \mid \theta_t))
                  = \lVert g_{\mathrm{tr}} - g_\star \rVert ^2 + \frac{1}{n} \tr(\Sigma_{\mathrm{tr}}).
              \]
              Furthermore, note that since \(U_t^{\mathrm{tr}} = \{\hat{g}_t^{\mathrm{tr}}\}\), we also have \(\mathcal{B}(U_t^{\mathrm{tr}}) = \MSE(u_t^{\mathrm{tr}})\).
        \item \emph{Part~\labelcref{prop:bias-variance-ii} (Target-Only Update).}
              With \(u_t^{\star} = \hat{g}_\star\) and \(\hat{g}_\star - g_\star\) has mean zero and covariance \(\Sigma_\star / m\),
              \[
                  \MSE(u_t^{\star})
                  = \mathbb{E}[\lVert \hat{g}_\star - g_\star\rVert ^2 \mid \theta_t]
                  = \tr(\Cov(\hat{g}_\star \mid \theta_t))
                  = \frac{1}{m} \tr(\Sigma_\star).
              \]
              We conclude the proof by noting that \(\mathcal{B}(U_t^{\star}) = 0\) as \(U_t^{\star} = \mathbb{R}^d\).
    \end{enumerate}
\end{proof}

Finally, we conclude this section by proving \Cref{thm:bias-variance}.

\begin{theorem*}[Bias--variance tradeoffs]
    Assume \Cref{ass:bounded,ass:sub-Gaussian-noise} and fix \(\theta_t\) and let \(\eta_t = 1/\beta\). Then for any \(k \in [n]\):
    \begin{enumerate}[label=(\roman*),leftmargin=*]
        \item \textbf{Global Subset Update.}
              For \(U_t^{\mathrm{glob}}(k)\),
              \[
                  \mathcal{V}(u_t^{\mathrm{glob}}(k))
                  \leq \frac{4C\sigma}{\sqrt m} \sqrt{2\log\left(2\tbinom{n}{k}\right)}.
              \]
        \item \textbf{Group-Wise Subset Update.}
              For \(U_t^{\mathrm{grp}}(k; \mathcal{G})\) where \(\mathcal{G} = \{G_p\}_{p=1}^{P}\) contains \(P\) groups,
              \[
                  \mathcal{V}(u_t^{\mathrm{grp}}(k; \mathcal{G}))
                  \leq \frac{4CP\sigma}{\sqrt m} \sqrt{2\log\left(2\tbinom{n}{k}\right)}.
              \]
    \end{enumerate}
    Moreover, Global Subset Update has a higher bias compared to Group-Wise Subset Update:
    \[
        \mathcal{B}(U_t^{\mathrm{grp}}(k; \mathcal{G}))
        \leq \mathcal{B}(U_t^{\mathrm{glob}}(k)).
    \]
\end{theorem*}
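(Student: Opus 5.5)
The plan is to handle the bias inequality directly and to reduce both variance bounds to a single comparison inequality followed by a finite-class maximal inequality. For the bias, recall that \(U_t^{\mathrm{glob}}(k) \subseteq U_t^{\mathrm{grp}}(k;\mathcal{G})\) pointwise, for every realization of \(B_t\). Hence \(\inf_{u \in U_t^{\mathrm{grp}}} \lVert u - g_\star\rVert^2 \le \inf_{u \in U_t^{\mathrm{glob}}} \lVert u - g_\star\rVert^2\) almost surely, and taking conditional expectation gives \(\mathcal{B}(U_t^{\mathrm{grp}}(k;\mathcal{G})) \le \mathcal{B}(U_t^{\mathrm{glob}}(k))\).

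For the variance, fix a finite feasible set \(U_t\) and condition on \(B_t\). I assume the target batch is independent of \(B_t\), so \(\xi\) is independent of \(U_t\). Let \(u^\dagger \in \argmin_{u\in U_t}\lVert u-g_\star\rVert^2\) be the oracle point, and let \(u_t\) be the projection of \(\hat g_\star = g_\star+\xi\) onto \(U_t\). Optimality of \(u_t\) gives \(\lVert u_t - g_\star - \xi\rVert^2 \le \lVert u^\dagger - g_\star - \xi\rVert^2\). Expanding both sides, this becomes
\[
\lVert u_t - g_\star\rVert^2 - \lVert u^\dagger - g_\star\rVert^2 \le 2\langle \xi, u_t - u^\dagger\rangle \le 4\sup_{u\in U_t}\lvert\langle \xi,u\rangle\rvert .
\]
Taking expectations shows \(\mathcal{V}(u_t) \le 4\,\mathbb{E}[\sup_{u\in U_t}\lvert\langle\xi,u\rangle\rvert\mid\theta_t]\). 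This is where the factor \(4\) comes from.

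For part~(i), every \(u = g_S \in U_t^{\mathrm{glob}}(k)\) satisfies \(\lVert u\rVert\le C\) by \Cref{ass:bounded} and convexity of the norm. So each \(\langle\xi,u\rangle\) is sub-Gaussian with parameter \(C\sigma/\sqrt m\). The class \(\{\pm u\}\) has at most \(2\binom{n}{k}\) elements, and the standard maximal inequality for sub-Gaussian variables gives \(\mathbb{E}\sup\lvert\langle\xi,u\rangle\rvert \le (C\sigma/\sqrt m)\sqrt{2\log(2\binom nk)}\). Multiplying by \(4\) yields the stated bound.

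For part~(ii), the feasible set is a product, so I write \(\langle\xi,u\rangle=\sum_{p=1}^P\langle\xi^{(p)},u^{(p)}\rangle\). The supremum then splits as \(\sup_{u}\lvert\langle\xi,u\rangle\rvert\le\sum_p\sup_{S}\lvert\langle\xi^{(p)},g_S^{(p)}\rangle\rvert\). For each group I apply the same maximal inequality. Here \(\lVert g_S^{(p)}\rVert\le\lVert g_S\rVert\le C\), and \(\xi^{(p)}\) is sub-Gaussian with the same parameter because it is a coordinate projection of \(\xi\). Each group therefore contributes at most \((C\sigma/\sqrt m)\sqrt{2\log(2\binom nk)}\), and summing over the \(P\) groups and multiplying by \(4\) gives the factor \(P\).

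I expect the main obstacle to be the bookkeeping in the comparison step. The variance \(\mathcal{V}\) is defined against \(\mathcal{B}\), which involves the infimum taken inside the expectation over \(B_t\). So the oracle \(u^\dagger\) must be chosen per realization of \(B_t\), and one needs conditional independence of \(\xi\) from \(B_t\) in order to apply the maximal inequality conditionally. I will state this independence explicitly, since the problem setup draws the two batches independently. I will also note that the interpretation of ``sub-Gaussian vector with parameter \(\sigma/\sqrt m\)'' being used is that every one-dimensional marginal \(\langle\xi,v\rangle\) is \((\sigma/\sqrt m)\lVert v\rVert\)-sub-Gaussian.
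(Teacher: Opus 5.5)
Your proof is correct. For the bias inequality, the comparison step \(\mathcal{V}(u_t)\le 4\,\mathbb{E}[\sup_{u\in U_t}\lvert\langle\xi,u\rangle\rvert\mid\theta_t]\), and part~(i), it is the paper's argument.

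Part~(ii) is where you take a different route. The paper treats \(U_t^{\mathrm{grp}}(k;\mathcal{G})\) as a single finite class, with \(\lVert u\rVert\le\sqrt{P}\,C\) and \(\lvert U\rvert\le\binom{n}{k}^P\). It applies the maximal inequality once and then uses \(\log(2\binom{n}{k}^P)\le P\log(2\binom{n}{k})\), so the two factors of \(\sqrt{P}\) combine into \(P\). You instead use the product structure to split the supremum into \(P\) per-group suprema and reuse the part~(i) bound on each. Both give exactly the stated constant.

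Your decomposition is more modular and extends directly to heterogeneous groups: it gives \(\sum_p\max_i\lVert g_i^{(p)}\rVert\) in place of \(PC\). The paper's single maximal inequality, taken before its final \(\log(2\binom{n}{k}^P)\le P\log(2\binom{n}{k})\) relaxation, is marginally sharper in the homogeneous case.

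You also make explicit two points the paper uses without stating them. First, \(B_t^\star\) is independent of \(B_t\), which is what allows the maximal inequality to be applied conditionally on the random class \(U_t\). Second, the vector sub-Gaussian assumption is read through its one-dimensional marginals, which is what makes each \(\xi^{(p)}\) sub-Gaussian with the same parameter.
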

\begin{proof}
    We prove this one by one.
    \begin{enumerate}[label=(\roman*),leftmargin=*]
        \item \emph{Part~\labelcref{thm:bias-variance-i} (Global Subset Update).}
              Fix \(k\) and write \(U \coloneqq U_t^{\mathrm{glob}}(k)\). Let \(\xi \coloneqq \hat{g}_\star - g_\star\). Define
              \[
                  u^{\ast} \coloneqq u_t^{\mathrm{glob}} \in \argmin_{u \in U} \lVert u - \hat{g}_\star \rVert ^2, \qquad
                  u^\dagger \in \argmin_{u \in U} \lVert u - g_\star \rVert ^2.
              \]
              By the projection property, \(\lVert u^\ast - \hat{g}_\star \rVert ^2 \leq \lVert u^\dagger - \hat{g}_\star \rVert ^2\). Expanding both sides via \(\hat{g}_\star = g_\star + \xi\):
              \[
                  \lVert u^\ast - g_\star \rVert ^2 - 2\langle u^\ast - g_\star, \xi \rangle
                  \leq \lVert u^\dagger - g_\star \rVert ^2 - 2\langle u^\dagger - g_\star, \xi \rangle.
              \]
              Rearranging, we get
              \[
                  \lVert u^\ast - g_\star \rVert ^2
                  \leq \lVert u^\dagger - g_\star \rVert ^2 + 2\langle u^\ast - u^\dagger, \xi \rangle
                  \leq \lVert u^\dagger - g_\star \rVert ^2 + 4 \sup_{u \in U} \lvert \langle \xi, u \rangle \rvert .
              \]
              Taking expectation over randomness in \(U\) (i.e., over \(B_t\)), we have
              \[
                  \MSE(u_t^{\mathrm{glob}})
                  \leq \mathbb{E}[\lVert u^\dagger - g_\star \rVert ^2 \mid \theta_t] + 4 \mathbb{E}\left[\sup_{u \in U} \lvert \langle \xi, u \rangle \rvert \middle| \theta_t\right],
              \]
              The first term is simply \(\mathcal{B}(U) \coloneqq\mathbb{E} [\inf_{u \in U} \lVert u - g_\star \rVert^2 \mid \theta_t ]\). Hence, we have
              \[
                  \mathcal{V}(u_t^{\mathrm{glob}})
                  = \MSE(u_t^{\mathrm{glob}}) - \mathcal{B}(U)
                  \leq 4 \mathbb{E}\left[\sup_{u \in U} \lvert \langle \xi, u \rangle \rvert \middle| \theta_t\right],
              \]
              and it remains to bound \(\mathbb{E}[\sup_{u \in U} \lvert \langle \xi, u \rangle \rvert \mid \theta_t]\). By the sub-Gaussian assumption on the target noise, for each fixed \(u\) the random variable \(\langle \xi, u \rangle\) is sub-Gaussian with parameter \(\sigma^2 \lVert u \rVert ^2 / m\), i.e.,
              \[
                  \mathbb{E}\left[\exp(\lambda \langle \xi, u \rangle) \middle| \theta_t\right]
                  \leq \exp \left(\frac{\lambda^2 \sigma^2}{2m} \lVert u \rVert ^2\right), \quad \forall \lambda \in \mathbb{R}.
              \]
              By \Cref{ass:bounded} and convexity, \(\lVert u \rVert \leq G\) for all \(u \in U\). The sub-Gaussian maximal inequality \citep{vershynin2018high} states that for \(N\) mean-zero sub-Gaussian random variables \(X_1, \ldots, X_N\) with common parameter \(\tau^2\), \(\mathbb{E}[\max_i \lvert X_i \rvert] \leq \tau \sqrt{2 \log(2N)}\). Applying this with \(N = \lvert U \rvert \leq \binom{n}{k}\) and \(\tau^2 = \sigma^2 G^2 / m\):
              \[
                  \mathbb{E}\left[\sup_{u \in U} \lvert\langle \xi, u \rangle\rvert \middle| \theta_t\right]
                  \leq \frac{\sigma G}{\sqrt{m}} \sqrt{2 \log(2\lvert U \rvert)}.
              \]
              Taking expectation over \(\xi\) (i.e., over \(B_t^{\star}\)) finishes the proof.

        \item \emph{Part~\labelcref{thm:bias-variance-ii} (Group-Wise Subset Update).}
              The argument follows Part~\labelcref{thm:bias-variance-i}. Write \(U \coloneqq U_t^{\mathrm{grp}}(k)\). By \Cref{ass:bounded}, each per-group component satisfies \(\lVert u^{(p)}\rVert \leq G\), so \(\lVert u \rVert ^2 \leq PG^2\). Hence, the same argument yields
              \[
                  \mathcal{V}(u_t^{\mathrm{grp}})
                  \leq \frac{4G\sqrt{P} \sigma}{\sqrt{m}} \sqrt{2 \log(2\lvert U \rvert)}.
              \]
              The cardinality bound follows from independence across groups: each group \(p\) independently chooses \(S_{t,p} \subset [n]\) with \(\lvert S_{t,p}\rvert = k\), so \(\lvert U\rvert \leq \binom{n}{k}^P\). Substituting: \(\log(2\binom{n}{k}^P) = \log 2 + P\log\binom{n}{k} \leq P\log(2\binom{n}{k})\) for \(P \geq 1\), so \(G\sqrt{P} \cdot \sqrt{2P\log(2\binom{n}{k})} = GP\sqrt{2\log(2\binom{n}{k})}\).
    \end{enumerate}
\end{proof}

\paragraph{Simulation.}
Overall, which update dominates depends on the target batch size, the training--target mismatch, and the feasible-set complexity. \Cref{fig:spectrum-view} illustrates three representative regimes using synthetic parameter choices to illustrate \Cref{thm:bias-variance}:
\begin{itemize}[leftmargin=*]
    \item \textbf{Small mismatch.} All four methods occupy distinct regimes. As \(m\) grows, the preferred update transitions from the Full-Training Update to the Global Subset Update, then to the Group-Wise Subset Update, and finally to the Target-Only Update.
    \item \textbf{Moderate mismatch.} The Group-Wise Subset Update dominates the Global Subset Update across a wider range of \(m\), yielding a transition from the Full-Training Update to the Group-Wise Subset Update and then to the Target-Only Update.
    \item \textbf{Large mismatch.} The bias of full-training and Global Subset Updates is large enough that the Group-Wise Subset Update dominates even for relatively small \(m\), before eventually giving way to the Target-Only Update when enough target data are available.
\end{itemize}

\begin{figure}[htpb]
    \centering
    \begin{tikzpicture}

    \pgfplotsset{
        spectrumaxis/.style={
                width=0.24\linewidth,
                height=0.16\linewidth,
                scale only axis,
                xmin=0, xmax=300,
                ymin=-0.05, ymax=2.6,
                grid=both,
                major grid style={opacity=0.22},
                minor grid style={opacity=0.10},
                tick align=outside,
                xticklabels=\empty,
                yticklabels=\empty,
                major tick length=2pt,
                label style={font=\small},
                xlabel style={at={(ticklabel cs:0.5)}, yshift=8pt},
                ylabel style={at={(ticklabel cs:0.5)}, yshift=-8pt},
                title style={font=\scriptsize, yshift=-4pt},
                title style={font=\scriptsize},
                clip=true,
                unbounded coords=discard,
                every axis plot/.append style={
                        thick,
                        line join=round,
                        line cap=round
                    },
            }
    }

    \definecolor{cTr}{RGB}{0,0,0}            
    \definecolor{cSub}{RGB}{0,114,178}      
    \definecolor{cBlk}{RGB}{213,94,0}        
    \definecolor{cVal}{RGB}{0,158,115}       

    \pgfmathsetmacro{\XMAX}{300}
    \pgfmathsetmacro{\Vtr}{1.15}
    \pgfmathsetmacro{\Tval}{180}

    \begin{axis}[
            name=ax1,
            spectrumaxis,
            at={(0,0)},
            anchor=south west,
            xlabel={$m$},
            ylabel={$\MSE$},
            title={$u_t^{\mathrm{tr}} \to u_t^{\mathrm{glob}} \to u_t^{\mathrm{grp}} \to u_t^{\star}$},
            legend to name=speclegend,
            legend columns=1,
            legend cell align=left,
            legend style={
                    draw=none,
                    font=\small,
                    row sep=1pt,
                    /tikz/every odd column/.append style={column sep=2pt},
                },
        ]

        \def\VtrA{0.95}          
        \def\egA{0.72}
        \def\AgA{1.29}
        \def\elA{0.62}
        \def\AlA{17.96}

        \def\mTG{30}
        \def\mGL{70}
        \def\mLV{260}

        \path[fill=cTr,   fill opacity=0.025] (axis cs:0,-0.05)       rectangle (axis cs:\mTG,2.6);
        \path[fill=cSub, fill opacity=0.030] (axis cs:\mTG,-0.05)    rectangle (axis cs:\mGL,2.6);
        \path[fill=cBlk,  fill opacity=0.030] (axis cs:\mGL,-0.05)    rectangle (axis cs:\mLV,2.6);
        \path[fill=cVal,  fill opacity=0.030] (axis cs:\mLV,-0.05)    rectangle (axis cs:\XMAX,2.6);

        \addplot[color=cTr] coordinates {(0,\VtrA) (\XMAX,\VtrA)};
        \addlegendentry{$U_t^{\mathrm{tr}}$}

        \addplot[color=cSub] expression[domain=0:\XMAX, samples=600]
            {\egA + \AgA/sqrt(x+1)};
        \addlegendentry{$U_t^{\mathrm{glob}}$}

        \addplot[color=cBlk]  expression[domain=0:\XMAX, samples=600]
            {\elA + \AlA/(x+1)};
        \addlegendentry{$U_t^{\mathrm{grp}}$}

        \addplot[color=cVal, restrict y to domain=-0.05:2.6]
        expression[domain=0:\XMAX, samples=800]{\Tval/(x+1)};
        \addlegendentry{$U_t^{\star}$}

        \addplot[color=cTr, ultra thick] coordinates {(0,\VtrA) (\mTG,\VtrA)};
        \addplot[color=cSub, ultra thick] expression[domain=\mTG:\mGL, samples=220]
            {\egA + \AgA/sqrt(x+1)};
        \addplot[color=cBlk, ultra thick] expression[domain=\mGL:\mLV, samples=220]
            {\elA + \AlA/(x+1)};
        \addplot[color=cVal, ultra thick, restrict y to domain=-0.05:2.6]
        expression[domain=\mLV:\XMAX, samples=220]{\Tval/(x+1)};

        \addplot[color=cTr!60, dashed, very thin] coordinates {(\mTG,-0.05) (\mTG,2.6)};
        \addplot[color=cTr!60, dashed, very thin] coordinates {(\mGL,-0.05) (\mGL,2.6)};
        \addplot[color=cTr!60, dashed, very thin] coordinates {(\mLV,-0.05) (\mLV,2.6)};

    \end{axis}

    \begin{axis}[
            name=ax2,
            spectrumaxis,
            at={(ax1.east)},
            anchor=west,
            xshift=0.025\linewidth,
            xlabel={$m$},
            ylabel={},
            title={$u_t^{\mathrm{tr}} \to u_t^{\mathrm{grp}} \to u_t^{\star}$},
        ]

        \def\egB{0.72}
        \def\AgB{1.29}
        \def\elB{0.666}
        \def\AlB{4.356}

        \def\mTG{8}
        \def\mLV{262.8}

        \path[fill=cTr,  fill opacity=0.025] (axis cs:0,-0.05)     rectangle (axis cs:\mTG,2.6);
        \path[fill=cBlk, fill opacity=0.030] (axis cs:\mTG,-0.05)  rectangle (axis cs:\mLV,2.6);
        \path[fill=cVal, fill opacity=0.030] (axis cs:\mLV,-0.05)  rectangle (axis cs:\XMAX,2.6);

        \addplot[color=cTr] coordinates {(0,\Vtr) (\XMAX,\Vtr)};
        \addplot[color=cSub] expression[domain=0:\XMAX, samples=600]
            {\egB + \AgB/sqrt(x+1)};
        \addplot[color=cBlk] expression[domain=0:\XMAX, samples=600]
            {\elB + \AlB/(x+1)};
        \addplot[color=cVal, restrict y to domain=-0.05:2.6]
        expression[domain=0:\XMAX, samples=800]{\Tval/(x+1)};

        \addplot[color=cTr, ultra thick] coordinates {(0,\Vtr) (\mTG,\Vtr)};
        \addplot[color=cBlk, ultra thick] expression[domain=\mTG:\mLV, samples=240]
            {\elB + \AlB/(x+1)};
        \addplot[color=cVal, ultra thick, restrict y to domain=-0.05:2.6]
        expression[domain=\mLV:\XMAX, samples=240]{\Tval/(x+1)};

        \addplot[color=cTr!60, dashed, very thin] coordinates {(\mTG,-0.05) (\mTG,2.6)};
        \addplot[color=cTr!60, dashed, very thin] coordinates {(\mLV,-0.05) (\mLV,2.6)};

    \end{axis}

    \begin{axis}[
            name=ax3,
            spectrumaxis,
            at={(ax2.east)},
            anchor=west,
            xshift=0.025\linewidth,
            xlabel={$m$},
            ylabel={},
            title={$u_t^{\mathrm{grp}} \to u_t^{\star}$},
        ]

        \def\VtrC{2.2}          
        \def\egC{0.72}
        \def\AgC{1.29}
        \def\elC{0.65}
        \def\AlC{1.2}

        \def\mLV{274}

        \path[fill=cBlk, fill opacity=0.030] (axis cs:0,-0.05)     rectangle (axis cs:\mLV,2.6);
        \path[fill=cVal, fill opacity=0.030] (axis cs:\mLV,-0.05)  rectangle (axis cs:\XMAX,2.6);

        \addplot[color=cTr] coordinates {(0,\VtrC) (\XMAX,\VtrC)};
        \addplot[color=cSub] expression[domain=0:\XMAX, samples=600]
            {\egC + \AgC/sqrt(x+1)};
        \addplot[color=cBlk] expression[domain=0:\XMAX, samples=600]
            {\elC + \AlC/(x+1)};
        \addplot[color=cVal, restrict y to domain=-0.05:2.6]
        expression[domain=0:\XMAX, samples=800]{\Tval/(x+1)};

        \addplot[color=cBlk, ultra thick] expression[domain=0:\mLV, samples=240]
            {\elC + \AlC/(x+1)};
        \addplot[color=cVal, ultra thick, restrict y to domain=-0.05:2.6]
        expression[domain=\mLV:\XMAX, samples=240]{\Tval/(x+1)};

        \addplot[color=cTr!60, dashed, very thin] coordinates {(\mLV,-0.05) (\mLV,2.6)};

    \end{axis}

    \node[anchor=west] at ([xshift=3mm]ax3.east) {%
        \pgfplotslegendfromname{speclegend}%
    };

\end{tikzpicture}
    \caption{Illustration of MSE for each method as a function of target sample size \(m\), according to \Cref{thm:bias-variance}. Shaded regions indicate the method with the smallest MSE. As distribution mismatch increases (left to right), Group-Wise Subset Update dominates over a wider range of \(m\).}
    \label{fig:spectrum-view}
\end{figure}

\section{Omitted Details from \texorpdfstring{\Cref{sec:system}}{Section 4}}\label{adxsec:system}
This section provides the details deferred from \Cref{sec:system}. First, \Cref{adxsubsec:general-partition} fills the gap of \Cref{subsec:one-pass-tensor-lifetime-scheduling}, where we discuss the tensor lifetime scheduling and also the scoring for general partition that we omit in \Cref{sec:system}. \Cref{adxsec:two-pass-implementation} then presents a naive implementation that avoids memory management issues by using two passes. \Cref{adxsubsec:efficient-projection} explains how to compress per-sample gradients efficiently without materializing the full gradient. \Cref{adxsubsec:scoring-complexity} derives the exact computational and memory complexity for each scoring method in \Cref{tab:scoring-comparison}. \Cref{adxsec:integrating-LoRA-MeSO} then describes how Layer-Wise Subset Update integrates with LoRA and MeSO, covering gradient representations and optimizer state transfer. Finally, \Cref{adxsubsec:beyond-linear} discusses how data regularization extends beyond linear layers, including an efficient scoring algorithm for embedding layers and the treatment of other non-linear modules, specifically normalization layers.

\paragraph{Notation.}
Unlike in the main text where we focus primarily on layer-aligned partition \(\mathcal{G} = \{G_p\}_{p=1}^{P}\) such that \(G_p\) contains parameters of layer(s), in this section, we consider general partitions such that each group might contain a part of each layer and can also span across multiple layers. For the ease of presentation, we overload the notation \(^{(p)}\) and \(^{(l)}\), where the former always refers to group-wise quantities, and the latter refers to layer-wise quantities.

\subsection{General Partition}\label{adxsubsec:general-partition}
The one-pass tensor lifetime schedule presented in \Cref{subsec:one-pass-tensor-lifetime-scheduling} focuses on layer-aligned partitions, where each group \(G_p\) is a union of whole layers. Here, we discuss the additional complications that arise for arbitrary partitions whose groups may split parameters within a single layer.

\subsubsection{Layer-by-Layer Discipline}
After the backward pass, all retained pairs \((a^{(l)}, \partial \ell / \partial e^{(l)})\) are available. To stay within the memory budget, score computation and gradient assembly must still proceed layer by layer. At each layer \(l\), we compute per-layer score contributions and accumulate them into the scores for each group. A group \(G_p\) becomes resolvable once all layers intersecting \(G_p\) have been visited, at which point \(S_{t,p}\) is determined. The update direction \(u^{(p)}\) is then assembled from the per-sample gradients of the selected samples at layers intersecting \(G_p\), and the retained quantities at layer \(l\) are released once all groups intersecting that layer have been resolved.

\subsubsection{Non-Layer-Aligned Partitions}
The layer-by-layer discipline introduces a tension between memory and recomputation for partitions that do not align with the layer structure. When a group \(G_p\) spans multiple layers, \(S_{t,p}\) is not known until all layers in the group have been scored. If the scoring strategy materializes per-sample gradients \(g_i^{(l)}\) at each layer, these cannot in general be retained across the group's full span without exceeding the memory budget; they must be discarded after scoring, and recomputed from the retained pairs \((a^{(l)}, \partial \ell / \partial e^{(l)})\) when \(u_t^{(p)}\) is assembled.

\paragraph{Scoring.}
We now show how to compute the per-group scores \(s_i^{(p)}\) during the backward pass. A crucial observation is that the score decomposes linearly over \emph{parameter space}, i.e., \(s_i^{(p)} = \sum_{q \in G_p} g_{i,q} \cdot \hat{g}_{\star, q}\) where \(g_{i,q}\) denotes the \(q^{\text{th}}\) entry of \(g_i\), similarly for \(\hat{g}_{\star, q}\). Hence, computing \(s_i^{(p)}\) reduces to computing the \emph{per-parameter} score \(s_{i,q} \coloneqq g_{i,q} \cdot \hat{g}_{\star, q}\) for each \(i \in [n]\) and \(q \in [d]\). This is realized via layer-wise accumulation: for each \(p\), we initialize \(s_i^{(p)} \gets 0\). Then during the backward pass at layer \(l\), we first materialize the per-sample gradient \(g_i^{(l)}\) and the average target gradient \(\hat{g}_\star^{(l)}\), and obtain all per-parameter scores \(s_{i,q}\) by computing \(\hat{g}_\star^{(l)} \odot g_i^{(l)} \in \mathbb{R}^{d / L}\) for all \(q\) belonging to layer \(l\). Finally, we accumulate \(s_i^{(p)} \gets s_i^{(p)} + \sum_{q \in G_p} s_{i, q}\), and continue the backward pass to layer \(l + 1\).

\begin{remark}
    We highlight that the scoring methods discussed in \Cref{subsec:efficient-scoring}, except for Direct, therefore can not handle general partition, as they do not materialize \(g_{\star}^{(l)}\) and \(g_i^{(l)}\) explicitly.
\end{remark}

\paragraph{Update assembly.}
Recall that for layer-aligned partitions, we can break down the construction of \(u_t^{(p)}\) into layers. Similarly, in the case of general partitions, we again follow the layer-by-layer discipline, where for each layer, we first identify groups \(G_p\) that have non-empty intersections with it, and for each of these groups, what are the corresponding parameters in the intersection. Then, we materialize all the per-sample gradients \(\{g^{(l)}_i\}_{i=1}^{n}\) of the entire layer, and resolve for each group by first computing the average over samples \(i \in S_{t,p}\) with sub-vectors of \(g_i^{(l)}\) that corresponds to parameters that belongs to the group.

\subsection{Two Pass Implementation}\label{adxsec:two-pass-implementation}
Recall from \Cref{subsec:standard-training} that in standard training, both \(a^{(l)}\) and \(\partial \ell / \partial e^{(l)}\) can be released as soon as the batch gradient \(\hat{g}_{\mathrm{tr}}^{(l)}\) is assembled, that is, once the update direction at layer \(l\) is decided. For Group-Wise Subset Update, however, the update direction at each group \(p\) depends on \(S_{t,p}\), and \(S_{t,p}\) can only be determined after scoring all samples across every layer whose parameters intersect \(G_p\). Standard training releases the per-sample quantities needed for gradient assembly before \(S_{t,p}\) is known. For Layer-Wise Subset Update, each group corresponds to a single layer, so \(S_{t,p}\) can be determined after scoring all samples within that layer, and per-sample quantities can be released immediately. For coarser partitions, however, a two-pass approach trades extra computation for standard memory.

\subsubsection{Separate Scoring and Gradient Passes}\label{adxsubsubsec:separate-scoring-and-gradient-passes}
One solution is to use two passes. A first \emph{scoring pass} accumulates per-group scores \(s_i^{(p)}\) layer by layer during a standard forward-backward pass as described in \Cref{adxsubsec:general-partition}. After \(S_{t,p}\) is determined, a second \emph{gradient pass} re-runs forward-backward on the union \(\bigcup_p S_{t,p}\): at each layer \(l\), the gradient for each group \(p\) whose parameters intersect that layer is assembled from the samples in \(S_{t,p}\), which again follow the same strategy described in \Cref{adxsubsec:general-partition} in the case of general partitions.

As a concrete instance, consider Global Subset Update, where all parameters form a single group so that \(S_{t,p} = S_t\) is global. The per-sample score decomposes as \(s_i = \sum_{l=1}^{L} s_i^{(l)}\), so the scoring pass accumulates per-layer contributions into one global score per sample. The gradient pass simplifies to a forward-backward on the \(k\) selected samples, at an additional cost of recomputation. GREATS~\citep{wang2024greats} adopts this two-pass design; \Cref{algo:global-subset-update-two-pass} presents the full pseudocode.

\begin{algorithm}[htpb]\label{algo:global-subset-update-two-pass}
    \DontPrintSemicolon{}
    \caption{Global Subset Update (two-pass)}
    \KwData{Model \(\theta_t\), training data \(\{z_i\}_{i=1}^n\), target data \(\{z^\star_j\}_{j=1}^m\), learning rate \(\eta_t\)}
    \KwResult{Updated model parameters \(\theta_{t+1}\)}
    \SetKwFunction{Forward}{Forward}
    \SetKwFunction{Backward}{Backward}
    \SetKwFunction{Release}{Release}
    \SetKwFunction{Select}{Select-S}
    \SetKwFunction{Score}{Score}

    \BlankLine

    \tcc{Scoring pass}
    (\(\ell\), \(\{a_{i}^{(l)}, e_{i}^{(l)}\}_{l,i} \cup \{a_{j}^{\star(l)}, e_{j}^{\star(l)}\}_{l,j}\))\(\gets\)\Forward{\(\theta_t\), \(\{z_i\}_{i=1}^{n} \cup \{z^\star_j\}_{j=1}^{m}\)}\Comment*[r]{\memcost{\(+ O(2NT \sqrt{dL})\)}}
    \For{\(l = L, \ldots, 1\)}{
        \(\{\partial \ell / \partial e_{i}^{(l)}\}_{i=1}^{n} \cup \{\partial \ell / \partial e_{j}^{\star(l)}\}_{j=1}^{m} \gets\)\Backward{\(\ell\), \(\theta_t^{(l)}\)}\;
        \(s_i^{(l)} \gets\)\Score{\(a_i^{(l)}\), \(\partial \ell / \partial e_i^{(l)}\), \(\{a^{\star (l)}_j, \partial \ell / \partial e^{\star (l)}_j\}_{j=1}^m\)} for each \(i \in [n]\)\Comment*[r]{\Cref{subsec:efficient-scoring}}
        \Release{\(a_i^{(l)}\), \(\partial \ell / \partial e_i^{(l)}\), \(a_j^{\star(l)}\), \(\partial \ell / \partial e_j^{\star(l)}\)} for all \(i, j\)\Comment*[r]{\memcost{\(- O(2NT \sqrt{d/L})\)}}
    }
    \(s_i \gets \sum_{l=1}^L s_i^{(l)}\) for all \(i \in [n]\)\Comment*[r]{Global scores}
    \(S_t \gets\)\Select{\(\{s_i\}_{i=1}^n\), \(k\)}\;
    \BlankLine
    \tcc{Gradient pass on subset \(S_t\)}
    (\(\ell_{S_t}\), \(\{a_{i}^{(l)}\}_{l,i}\))\(\gets\)\Forward{\(\theta\), \(\{z_i\}_{i \in S_t}\)}\Comment*[r]{\memcost{\(+ O(kT \sqrt{dL})\)}}
    \For{\(l = L, \ldots, 1\)}{
        \(\{\partial \ell_{S_t} / \partial e_{i}^{(l)}\}_{i \in S_t} \gets\)\Backward{\(\ell_{S_t}\), \(\theta_t^{(l)}\)}\Comment*[r]{\memcost{\(- O(kT \sqrt{d/L}) + O(kT \sqrt{d/L})\)}}
        \(u_t^{(l)} \gets \frac{1}{k} \sum_{i \in S_t} (\partial \ell_{S_t} / \partial e_i^{(l)}) \otimes a_i^{(l)}\)\Comment*[r]{\memcost{\(+ O(d/L)\)}}
        \Release{\(a_i^{(l)}\), \(\partial \ell_{S_t} / \partial e_i^{(l)}\)} for all \(i \in S_t\)\Comment*[r]{\memcost{\(- O(2kT \sqrt{d/L})\)}}
        \(\theta_{t+1}^{(l)} \gets \theta_t^{(l)} - \eta_t u_t^{(l)}\)\;
        \Release{\(u_t^{(l)}\)}\Comment*[r]{\memcost{\(- O(d/L)\)}}
    }
    \Return{\(\theta_{t+1}\)}\;
\end{algorithm}

\subsubsection{Compatibility with Memory-Saving Techniques}\label{adxsubsubsec:compatability-with-memory-saving-techniques}
The two-pass design is compatible with both activation checkpointing and gradient accumulation, since each pass independently preserves the standard memory profile. For activation checkpointing, when the partition does not align with the checkpointing schedule (i.e., some group \(G_p\) spans multiple checkpoint segments), the scoring pass computes per-group scores under standard checkpointing, releasing activations at segment boundaries, and the gradient pass performs a checkpointed forward-backward over \(\bigcup_p S_{t,p}\). For gradient accumulation with rules that depend on global batch statistics (e.g., top-\(k\)), the scoring pass processes all micro-batches to determine \(S_{t,p}\) for every group, and the gradient pass assembles gradients from \(\bigcup_p S_{t,p}\), itself split into micro-batches if needed.

\subsection{Efficient Per-Sample Gradient Projection}\label{adxsubsec:efficient-projection}
In this section, we provide a self-contained discussion on the state-of-the-art per-sample gradient compression~\citep{hu2025grass}, which is used in both the MeSO integration and also the compressed scoring, where the scoring is computed using compressed gradients.

\subsubsection{Factorized Projection}
State-of-the-art gradient compressors~\citep{hu2025grass,choe2025your} consider a projection operator \(\Pi \in \mathbb{R}^{\kappa \times d}\) from dimension \(d \coloneqq d_{\text{in}} d_{\text{out}}\) to dimension \(\kappa \ll d\) with a two-stage structure:
\[
    \Pi = P_{\text{final}} \left(P_{\text{in}} \otimes P_{\text{out}}\right),
\]
where \(P_{\text{in}} \in \mathbb{R}^{\kappa_{\text{in}} \times d_{\text{in}}}\) and \(P_{\text{out}} \in \mathbb{R}^{\kappa_{\text{out}} \times d_{\text{out}}}\) are the first-stage projections, \(P_{\text{in}} \otimes P_{\text{out}}\) denotes their Kronecker product, and \(P_{\text{final}} \in \mathbb{R}^{\kappa \times \kappa_{\text{in}} \kappa_{\text{out}}}\) is a small second-stage projection.

Remarkably, for any input vectors that admit such a factorized structure, the factorized projection can be computed efficiently without materializing the full \(\mathbb{R}^{\kappa \times d}\) representation of \(\Pi\), and can directly operate on the input factors in \(\mathbb{R}^{d_{\text{in}}}\) and \(\mathbb{R}^{d_{\text{out}}}\), enabling efficient forward and back-projection. This allows us to apply such a compressor to linear-layer gradients efficiently, as we will soon see.

\paragraph{Forward projection.}
For any input with outer-product structure \(b \otimes a\) where \(a \in \mathbb{R}^{d_{\text{in}}}\) and \(b \in \mathbb{R}^{d_{\text{out}}}\), the forward projection evaluates as
\[
    \Pi \operatorname{vec}(b \otimes a) = P_{\text{final}} \operatorname{vec} \big((P_{\text{out}} b) \otimes (P_{\text{in}} a)\big),
\]
requiring only two small matrix-vector products (\(P_{\text{in}} a \in \mathbb{R}^{\kappa_{\text{in}}}\) and \(P_{\text{out}} b \in \mathbb{R}^{\kappa_{\text{out}}}\)) followed by the second-stage projection, without forming \(\operatorname{vec}(b \otimes a) \in \mathbb{R}^{d}\). By linearity, this extends to sums of outer products: for \(X = \sum_{\tau} b_\tau \otimes a_\tau\),
\[
    \widetilde{X}
    = \Pi \operatorname{vec}(X) = P_{\text{final}} \operatorname{vec} \left(\sum_{\tau} (P_{\text{out}} b_\tau) \otimes (P_{\text{in}} a_\tau)\right),
\]
where each term contributes a small outer product in \(\mathbb{R}^{\kappa_{\text{out}} \times \kappa_{\text{in}}}\).

\paragraph{Backward projection.}
For any vector \(\widetilde{x} \in \mathbb{R}^{\kappa}\), the transpose map \(\Pi^\top \widetilde{x} \in \mathbb{R}^{d}\) is equally efficient. Let \(X^{\prime} \in \mathbb{R}^{\kappa_{\text{out}} \times \kappa_{\text{in}}}\) be the matrix obtained by reshaping \((P_{\text{final}})^\top \widetilde{x}\). The Kronecker structure gives
\[
    \operatorname{Mat}(\Pi^\top \widetilde{x})
    = (P_{\text{out}})^\top X ^{\prime} P_{\text{in}},
\]
a pair of small matrix multiplications, without forming the full matrix \(\Pi^\top \in \mathbb{R}^{d \times \kappa}\).

\subsubsection{Per-Sample Gradient Compression}\label{adxsubsubsec:per-sample-gradient-compression}
We now show how to apply this compressor to compress linear layers' per-sample gradients.

\paragraph{Notation.}
For generality, we do not assume that each linear layer is square: specifically, the weight at layer \(l\) is \(W^{(l)} \in \mathbb{R}^{d_{\text{out}}^{(l)} \times d_{\text{in}}^{(l)}}\) with \(d^{(l)} \coloneqq d_{\text{in}}^{(l)} d_{\text{out}}^{(l)}\), related to the vectorized form by \(\theta^{(l)} = \operatorname{vec}(W^{(l)})\). Per-training-sample weight gradients are correspondingly denoted \(G_i^{(l)} \in \mathbb{R}^{d_{\text{out}}^{(l)} \times d_{\text{in}}^{(l)}}\), so that \(g_i^{(l)} = \operatorname{vec}(G_i^{(l)})\). Similarly for \(G^{\star}_j\) for per-target-sample gradients. Let \(\langle \cdot, \cdot \rangle\) denote the Frobenius inner product, i.e., the Euclidean inner product of the vectorized forms.

\paragraph{Efficient per-sample gradient compression.}
Recall that the per-sample weight gradient for sample \(z_i\) at layer \(l\) is a sum of outer products over tokens: \(g_i^{(l)} = \sum_{\tau=1}^T (\partial \ell / \partial e_{i,\tau}^{(l)}) \otimes a_{i,\tau}^{(l)} \in \mathbb{R}^{d^{(l)}}\). Consider a compressor \(\Pi^{(l)} \coloneqq P_{\text{final}}^{(l)} (P_{\text{in}}^{(l)} \otimes P_{\text{out}}^{(l)})\), with \(P_{\text{in}}^{(l)} \in \mathbb{R}^{\kappa_{\text{in}}^{(l)} \times d_{\text{in}}^{(l)}}\) and \(P_{\text{out}}^{(l)} \in \mathbb{R}^{\kappa_{\text{out}}^{(l)} \times d_{\text{out}}^{(l)}}\), and \(P_{\text{final}}^{(l)} \in \mathbb{R}^{\kappa^{(l)} \times \kappa_{\text{in}}^{(l)}\kappa_{\text{out}}^{(l)}}\). Applying the forward projection gives the compressed per-sample gradient
\[
    \widetilde{g}_i^{(l)}
    = P^{(l)}_{\text{final}} \operatorname{vec} \left(\sum_{\tau=1}^{T} \left(P^{(l)}_{\text{in}} a_{i,\tau}^{(l)}\right) \otimes \Bigg(P^{(l)}_{\text{out}} \frac{\partial \ell}{\partial e_{i,\tau}^{(l)}}\Bigg)\right) \in \mathbb{R}^{\kappa^{(l)}}.
\]
We note that this can be computed directly from the cached activations and activation gradients, without materializing \(g_i^{(l)}\). With careful choice of \(P_{\text{in} }^{(l)} \), \(P_{\text{out} }^{(l)}\), and \(P_{\text{final} }^{(l)}\), the per-sample memory is \(O(\kappa^{(l)})\) instead of \(O(d^{(l)})\), and the per-token computational cost is \(O(\kappa^{(l)})\)~\citep{hu2025grass}, so compressing all \(n\) samples across \(T\) tokens and \(L\) layers costs \(O(nT\kappa)\) in total where \(\kappa = \sum_l \kappa^{(l)}\). Later, we write this operation as \(\Pi^{(l)}(a_i^{(l)}, \partial \ell / \partial e_i^{(l)}, \{a_j^{\star(l)}, \partial \ell / \partial e_{j}^{\star(l)}\})\) in \Cref{algo:meso-layer-wise} to emphasize that \(\Pi^{(l)}\) operates on factorized inputs without materializing the full gradient.

\subsection{Scoring Complexity}\label{adxsubsec:scoring-complexity}
We derive the exact computational and memory complexity for each scoring method in \Cref{tab:scoring-comparison}. Every addition and multiplication counts as one operation; memory is measured in scalar entries under fully batched execution. Throughout, we let \(w = \sqrt{d/L}\) to denote the layer width for brevity, \(N = n + m\), and all quantities carry a layer index \({}^{(l)}\) that we suppress for brevity.

\subsubsection{Direct}
Three steps:
\begin{enumerate}[leftmargin=*]
    \item \emph{Materialize \(n\) training gradient matrices} \(g_i = \sum_{\tau=1}^{T} (\partial \ell / \partial e_{i,\tau}) \otimes a_{i,\tau}\). Each \(\otimes\) Kronecker product costs \(w^2\) multiplications; summing \(T\) outer products element-wise adds \((T-1) w^2\) additions. Per sample: \((2T-1)w^2\). Total: \(n(2T-1)w^2\).
    \item\label{direct-materialization-ii} \emph{Materialize \(\hat{g}_\star = \frac{1}{m}\sum_j g_j^{\star}\).} Same Kronecker-product accumulation for \(m\) target samples: \(m(2T-1)w^2\). Summing the \(m\) matrices: \((m-1)w^2\) additions. Scaling by \(1/m\): \(w^2\) multiplications. Total: \(m(2T-1)w^2 + (m-1)w^2 + w^2 = 2mTw^2\).
    \item \emph{Score.} Each Frobenius inner product \(s_i = \langle \hat{g}_\star, g_i \rangle\) costs \(w^2\) multiplications \(+\) \((w^2-1)\) additions \(= (2w^2-1)\) per sample. Total: \(n(2w^2-1)\).
\end{enumerate}

\paragraph{FLOPs.}
\(n(2T-1)w^2 + 2mTw^2 + n(2w^2-1) = 2NTw^2 + n(w^2-1)\).

\paragraph{Memory.}
\(n\) gradient matrices \(g_i \in \mathbb{R}^{w \times w}\) plus \(\hat{g}_\star\): \((n+1)w^2\).

\subsubsection{Ghost Inner Product (GIP)}
For each training-target pair \((i,j)\):
\begin{enumerate}[leftmargin=*]
    \item \emph{Two \(T \times T\) matrix products.} Form the activation-gradient cross-correlation \((\partial \ell / \partial e_i)^{\top}(\partial \ell / \partial e_j^{\star}) \in \mathbb{R}^{T \times T}\) and the activation cross-correlation \((a_i)^{\top} a_j^{\star} \in \mathbb{R}^{T \times T}\). Each entry is a dot product of \(w\)-vectors: \(w\) multiplications \(+\) \((w-1)\) additions \(= (2w-1)\) per entry, \(T^2(2w-1)\) per matrix. Two such matrices: \(2T^2(2w-1)\).
    \item \emph{Frobenius inner product.} Element-wise multiply and sum: \(T^2\) multiplications \(+\) \((T^2-1)\) additions \(= (2T^2-1)\).
\end{enumerate}
Per pair: \(2T^2(2w-1) + (2T^2-1) = 4T^2 w - 1\). Accumulating \(m\) pairs per training sample adds \((m-1)\) additions and \(1\) multiplication (the \(1/m\) scaling), giving \(m(4T^2 w - 1) + m = 4mT^2 w\) per training sample.

\paragraph{FLOPs.}
\(4nmT^2 w\).

\paragraph{Memory.}
Two \(T \times T\) cross-correlation matrices per \((i,j)\) pair, batched over all pairs: \(2nmT^2\).

\subsubsection{Per-Token Inner Product (PIP)}
Two steps:
\begin{enumerate}[leftmargin=*]
    \item \emph{Materialize \(\hat{G}_\star\).} Identical to Step \labelcref{direct-materialization-ii} of direct: \(2mTw^2\).
    \item \emph{Score.} For each training token \((i,\tau)\): a matrix-vector product \(\hat{G}_\star  a_{i,\tau} \in \mathbb{R}^w\) costs \(w(2w-1)\) (each of \(w\) entries is a dot product of \(w\)-vectors), then a dot product with \(\partial \ell / \partial e_{i,\tau}\) costs \((2w-1)\). Per token: \(w(2w-1) + (2w-1) = (w+1)(2w-1) = 2w^2 + w - 1\). Accumulating \(T\) tokens into \(s_i\): \(T(2w^2 + w - 1) + (T-1) = 2Tw^2 + Tw - 1\) per sample. Total: \(n(2Tw^2 + Tw - 1)\).
\end{enumerate}

\paragraph{FLOPs.}
\(2mTw^2 + n(2Tw^2 + Tw - 1) = 2NTw^2 + n(Tw - 1)\).

\paragraph{Memory.}
\(\hat{G}_\star \in \mathbb{R}^{w \times w}\) plus the batched intermediates \(\hat{G}_\star A \in \mathbb{R}^{w \times nT}\): \(w^2 + nTw\).

\subsubsection{Compressed Gradient}
Three steps:
\begin{enumerate}[leftmargin=*]
    \item \emph{Compress all \(N\) per-sample gradients} to \(\widetilde{g}_i \in \mathbb{R}^{\kappa^{(l)}}\) via the projection \(\Pi\). The per-token cost depends on the Kronecker structure of \(\Pi\) (see \Cref{adxsubsec:efficient-projection}); the total is \(O(NT\kappa^{(l)})\).
    \item \emph{Aggregate target gradient} \(\widetilde{g}_\star = \frac{1}{m}\sum_j \widetilde{g}_j^{\star}\). Summing \(m\) vectors of dimension \(\kappa^{(l)}\): \((m-1)\kappa^{(l)}\) additions. Scaling by \(1/m\): \(\kappa^{(l)}\) multiplications. Total: \(m\kappa^{(l)}\).
    \item \emph{Score.} Each inner product \(s_i = \langle \widetilde{g}_\star, \widetilde{g}_i \rangle\) costs \(\kappa^{(l)}\) multiplications \(+\) \((\kappa^{(l)}-1)\) additions \(= (2\kappa^{(l)}-1)\) per sample. Total: \(n(2\kappa^{(l)}-1)\).
\end{enumerate}

\paragraph{FLOPs.}
\(O(NT\kappa^{(l)}) + (2n+m)\kappa^{(l)}\).

\paragraph{Memory.}
\(n\) compressed training gradients plus \(\widetilde{g}_\star\): \((n+1)\kappa^{(l)}\).

\subsection{Integrating LoRA and MeSO}\label{adxsec:integrating-LoRA-MeSO}
The Dr.\ Post-Training framework is agnostic to the gradient representation at each layer, composing naturally with parameter-efficient and memory-efficient training methods. We now detail the integration with LoRA and MeSO.

\subsubsection{LoRA Integration}\label{adxsubsec:lora-integration}
We adapt the same notation as in \Cref{adxsubsec:LoRA}. Since the two LoRA adapters \(B^{(l)}\) and \(A^{(l)}\) are themselves linear maps, the per-sample gradients admit the same outer-product factorization as full-parameter gradients:
\[
    G_i^{(A,l)}
    = \sum_{\tau=1}^{T} \frac{\partial \ell}{\partial a_{i,\tau}^{(B, l)}} \otimes a_{i,\tau}^{(B, l)} \in \mathbb{R}^{r \times d_{\text{in}}^{(l)}}, \qquad
    G_i^{(B,l)}
    = \sum_{\tau=1}^{T} \frac{\partial \ell}{\partial e_{i,\tau}^{(l)}} \otimes a_{i,\tau}^{(A, l)} \in \mathbb{R}^{d_{\text{out}}^{(l)} \times r}.
\]
The layer-wise scoring \(s_i^{(l)} = \langle \hat{G}_\star^{(l)}, G_i^{(l)} \rangle\) and gradient computation proceed identically to the full-parameter case, but with the adapter dimension \(r(d_{\text{in}}^{(l)} + d_{\text{out}}^{(l)})\) replacing the full dimension \(d_{\text{in}}^{(l)} d_{\text{out}}^{(l)}\). This yields proportional reductions in both scoring and gradient computation cost. No modification to the Layer-Wise Subset Update algorithm (\Cref{algo:layer-wise-subset-update}) is needed; LoRA simply changes the gradient representation that the algorithm operates on.

\subsubsection{MeSO Integration}\label{adxsubsec:meso-integration}
We adapt the same notation as in \Cref{adxsubsec:MeSO}. We see that since MeSO only changes how the optimizer update is applied, not how per-sample gradients are computed, data regularization integrates directly: any scoring method from \Cref{subsec:efficient-scoring} (including the exact methods) can be used to determine \(S_{t,p}\), and the optimizer then updates in the projected subspace using only the samples in \(S_{t,p}\).

\paragraph{Compression.}
We adapt the \Cref{adxsubsec:efficient-projection} for the computation of the compression required by MeSO. Note that the compressed gradients are trivial to compute by directly applying the forward projection in the factorized form. The tricky part is that when computing the momentum states transfer during subspace refreshing, some of the input does not have the factorized structure.

Taking the first moment \(\hat{m}_{\text{old}}\) as an example, we see that the transformation requires computing \(\Pi_{\text{new}} \Pi_{\text{old}}^{\top} \hat{m}_{\text{old}}\). We see that for the backward projection, we have an efficient implementation to compute \(\Pi_{\text{old}}^{\top} \hat{m}_{\text{old}}\). However, for the forward projection, it is unclear how to forward project this general input \(\Pi_{\text{old}}^{\top} \hat{m}_{\text{old}}\) under \(\Pi_{\text{new}}\). However, a simple application of the same trick (formally known as the \emph{vec-trick}), as discussed in the backward projection, resolves this problem and provides a way to compute the forward projection on general input. We omit the details here for simplicity.

\paragraph{Scoring.}
For exact scoring, this is quite straightforward: we first compute the scores using one of the exact scoring methods discussed in \Cref{subsec:efficient-scoring}, discard all the intermediate quantities, solve for \(S_{t,p}\), and finally compute the compressed gradients for \(S_{t,p}\). Since the exact scoring and compressed gradient computations generally do not have overlapping computations, we do not need to consider retaining any intermediate quantities to avoid recomputation, simplifying the memory management issue.

On the other hand, for approximate scoring, we see that the compressed per-sample gradients that MeSO computes for parameter updates can also be used for compressed scoring (\Cref{subsubsec:compressed-scoring}) as an additional optimization, so that the \emph{same} compressed gradients \(\widetilde{g}_{i}^{(l)}, \widetilde{g}_\star^{(l)}\) drive both alignment scoring (\(s_i^{(l)} = \langle \widetilde{g}_{\star}^{(l)}, \widetilde{g}_i^{(l)} \rangle\)) and the optimizer update, avoiding redundant computation.

However, in this case, we note that managing memory for the compressed per-sample gradients needed by the optimizer update and compressed scoring is again non-trivial: when there is a group \(G_p\) that contains many layers, materializing all the per-sample compressed gradients across all layers may still be memory-intensive and infeasible under a tight memory budget.

\paragraph{Memory management for compressed scoring.}
We next discuss the memory-management aspect for MeSO with compressed gradients reused. Specifically, assume that we materialize all the \(O(n)\) compressed per-sample gradients \(\widetilde{g}_i^{(l)}\) and also \(\widetilde{g}_{\star}^{(l)}\) during the backward pass. Once \(\widetilde{g}_i^{(l)}\) is computed, the activations \(a_i^{(l)}\) and activation gradients \(\partial \ell / \partial e_i^{(l)}\) can be released immediately, since the optimizer update only requires averaging the compressed gradients in \(S_{t,l}\). This replaces \(O(2 nT\sqrt{d/L})\) memory (for \(a_i^{(l)}\) and \(\partial \ell / \partial e_i^{(l)}\)) with \(O(n\kappa^{(l)})\) (for \(\widetilde{g}_i^{(l)}\)), a net savings when \(\kappa^{(l)} < T\sqrt{d/L}\), which holds in typical settings. Hence, it is generally possible to directly retain all the compressed gradients to avoid recomputation of the per-sample compressed gradients for the MeSO updates.

For illustration, the pseudocode for this strategy in the case of Layer-Wise Subset Update is provided in \Cref{algo:meso-layer-wise}, where the compressed gradients are further released at each layer once \(S_{t,l}\) is determined, so only one layer's worth of per-sample compressed gradients is held at a time.

\begin{algorithm}[H]\label{algo:meso-layer-wise}
    \DontPrintSemicolon{}
    \caption{Layer-Wise Subset Update with MeSO}
    \KwData{Model \(\theta_t\), training data \(\{z_i\}_{i=1}^n\), target data \(\{z^\star_j\}_{j=1}^m\), learning rate \(\eta_t\), projections \(\Pi\)}
    \KwResult{Updated model parameters \(\theta_{t+1}\)}
    \SetKwFunction{Forward}{Forward}
    \SetKwFunction{Backward}{Backward}
    \SetKwFunction{Release}{Release}
    \SetKwFunction{Select}{Select-S}
    \BlankLine
    (\(\ell\), \(\{a_{i}^{(l)}\}_{l,i}\), \(\{a_{j}^{\star(l)}\}_{l,j}\))\(\gets\)\Forward{\(\theta_t\), \(\{z_i\}_{i=1}^{n} \cup \{z^\star_j\}_{j=1}^{m}\)}\;
    \For{\(l = L, \ldots, 1\)}{
        \(\{\partial \ell / \partial e_{i}^{(l)}\}_{i=1}^{n}, \{\partial \ell / \partial e_{j}^{\star(l)}\}_{j=1}^{m} \gets\)\Backward{\(\ell\), \(\theta_t^{(l)}\)}\;
        \(\widetilde{g}_i^{(l)} \gets \Pi(a_i^{(l)}, \partial \ell / \partial e_i^{(l)}) \in \mathbb{R}^{\kappa^{(l)}}\) for all \(i \in [n]\)\Comment*[r]{Compress training}
        \(\widetilde{g}_j^{\star(l)} \gets \Pi(a_j^{\star(l)}, \partial \ell / \partial e_j^{\star(l)}) \in \mathbb{R}^{\kappa^{(l)}}\) for all \(j \in [m]\)\Comment*[r]{Compress target}
        \Release{\(a_i^{(l)}\), \(\partial \ell / \partial e_i^{(l)}\), \(a_j^{\star(l)}\), \(\partial \ell / \partial e_j^{\star(l)}\)} for all \(i, j\)\;
        \(\widetilde{g}_\star^{(l)} \gets \frac{1}{m}\sum_{j=1}^m \widetilde{g}_j^{\star(l)}\)\;
        \(s^{(l)}_i \gets \langle \widetilde{g}_\star^{(l)}, \widetilde{g}_i^{(l)} \rangle\) for each \(i \in [n]\)\Comment*[r]{Score via compressed gradients}
        \(S_{t,l} \gets\)\Select{\(\{s^{(l)}_i\}_{i=1}^n\), \(k\)}\;
        \(\widetilde{u}_t^{(l)} \gets \frac{1}{k} \sum_{i \in S_{t,l}} \widetilde{g}_i^{(l)}\)\Comment*[r]{Aggregate selected}
        \Release{\(\widetilde{g}_i^{(l)}\)} for all \(i\)\;
        \(\theta_{t+1}^{(l)} \gets \theta _t^{(l)} - \eta_t \cdot (\Pi^\top \widetilde{u}_t^{(l)})\)\Comment*[r]{Back-project and update}
        \Release{\(\widetilde{u}_t^{(l)}\)}\;
    }
    \Return{\(\theta_{t+1}\)}\;
\end{algorithm}

\subsection{Beyond Linear Layers}\label{adxsubsec:beyond-linear}
The system design in \Cref{sec:system} develops scoring and selection exclusively for linear layers (e.g., \texttt{nn.Linear} in \texttt{PyTorch}), which account for the vast majority of trainable parameters in standard transformer architectures. A transformer model, however, also contains other trainable modules---token embeddings (\texttt{nn.Embedding}) and normalization layers (\texttt{nn.LayerNorm}, \texttt{nn.RMSNorm})---whose treatment we now discuss.

\subsubsection{Embedding Layers}\label{adxsubsubsec:embedding-scoring}
The token embedding layer maps each input token index \(x_{i,\tau} \in [V]\) to a dense vector via a lookup: \(\mathrm{output}_{i,\tau} = W[x_{i,\tau}]\), where \(W \in \mathbb{R}^{V \times D}\) is the embedding matrix, \(V\) is the vocabulary size, and \(D\) is the embedding dimension. Unlike a linear layer, whose per-sample gradient is a sum of dense outer products, the embedding gradient is \emph{sparse}: writing \(\delta_{i,\tau} \coloneqq \partial \ell_i / \partial \mathrm{output}_{i,\tau} \in \mathbb{R}^D\) for the activation gradient at position \(\tau\), the per-sample weight gradient is
\[
    G_i = \sum_{\tau=1}^{T} e_{x_{i,\tau}} \otimes \delta_{i,\tau} \in \mathbb{R}^{V \times D},
\]
where \(e_{x_{i,\tau}} \in \mathbb{R}^V\) is the one-hot indicator for token \(x_{i,\tau}\). Row \(v\) of \(G_i\) equals \(\sum_{\tau \colon x_{i,\tau} = v} \delta_{i,\tau}\), and all other rows are zero.

\paragraph{Efficient scoring via lookup.}
This sparsity enables a scoring algorithm that avoids materializing any per-sample gradient matrix. The target embedding gradient is
\[
    \hat{G}_\star = \frac{1}{m} \sum_{j=1}^{m} \sum_{\tau=1}^{T} e_{x_{j,\tau}^\star} \otimes \delta_{j,\tau}^\star \in \mathbb{R}^{V \times D},
\]
computable in \(O(mTD)\) via \texttt{index\_add} (scatter-accumulate over token indices). The per-sample alignment score then simplifies as follows:
\[
    s_i^{(\mathrm{emb})}
    = \langle G_i, \hat{G}_\star \rangle
    = \sum_{v=1}^{V} \langle G_i[v], \hat{G}_\star[v] \rangle
    = \sum_{\tau=1}^{T} \langle \delta_{i,\tau},\, \hat{G}_\star[x_{i,\tau}] \rangle,
\]
where the last equality follows from \(G_i[v] \neq 0\) only for \(v \in \{x_{i,\tau}\}_{\tau}\). Each term is a table lookup of \(\hat{G}_\star\) at index \(x_{i,\tau}\) followed by a dot product with \(\delta_{i,\tau}\), both \(O(D)\) operations. Summing over \(T\) tokens gives \(O(TD)\) per sample, and \(O(nTD)\) across all \(n\) training samples. Including the cost of constructing \(\hat{G}_\star\), the total is \(O(NTD)\) FLOPs with \(O(VD)\) memory for storing \(\hat{G}_\star\).

\paragraph{Comparison with linear layer scoring.}
The embedding scoring algorithm is the natural analogue of the reduced ghost inner product for linear layers (\Cref{subsec:efficient-scoring}). For a linear layer, the reduced ghost evaluates \(s_i^{(l)} = \sum_\tau (\partial \ell / \partial e_{i,\tau}^{(l)})^\top \hat{G}_\star^{(l)} a_{i,\tau}^{(l)}\), which requires a matrix-vector product \(\hat{G}_\star^{(l)} a_{i,\tau}^{(l)} \in \mathbb{R}^{\sqrt{d/L}}\) at each token. For the embedding layer, the ``input'' is a one-hot vector \(e_{x_{i,\tau}}\), so the matrix-vector product \(\hat{G}_\star \, e_{x_{i,\tau}} = \hat{G}_\star[x_{i,\tau}]\) reduces to a single row lookup, eliminating the \(O(D)\) factor from the matrix-vector multiply. In practice, the activation gradients \(\delta_{i,\tau}\) are already available during backpropagation (they are the upstream gradient flowing into the embedding layer), so scoring adds only the table-lookup and dot-product cost per token.

\paragraph{Gradient assembly.}
After selection, the weight gradient for the selected subset \(S\) is assembled as \(G^{(\mathrm{emb})} = \frac{1}{\lvert S \rvert} \sum_{i \in S} G_i\), which is a sparse accumulation computable in \(O(\lvert S \rVert T D)\) via \texttt{index\_add}, matching the cost of standard embedding gradient computation on a batch of size \(\lvert S \rVert\).

\subsubsection{Normalization Layers}\label{adxsubsubsec:normalization-layers}
Normalization layers (e.g., \texttt{LayerNorm}~\citep{ba2016layer}, \texttt{RMSNorm}~\citep{zhang2019root}) contain a small number of trainable parameters---a scale vector \(\gamma \in \mathbb{R}^D\) and optionally a bias \(\beta \in \mathbb{R}^D\)---contributing \(O(D)\) parameters per normalization layer versus \(O(D^2)\) for each linear layer. These parameters are not covered by the data regularization hooks and receive their gradients from the full batch via standard autograd, i.e., they are effectively trained under Full-Training Update (\(U = \{\hat{g}_{\mathrm{tr}}\}\)). In a typical transformer, the total parameter count of all normalization layers is negligible relative to that of the linear layers.

\paragraph{Summary.}
In our implementation, only \texttt{nn.Linear} modules within the transformer blocks are hooked for data-regularized training. The embedding layer admits efficient per-sample scoring via the lookup-based algorithm above, though it is not hooked in our current experiments (in LoRA and MeSO settings, the embedding is frozen; in the full-parameter setting, the \texttt{lm\_head}, which is often weight-tied with the embedding, is included as a linear layer). Normalization layers are trained with the full batch. Since linear layers account for the vast majority of trainable parameters, this design covers the parameters that matter most while maintaining compatibility with standard training infrastructure.

\section{Omitted Details from \texorpdfstring{\Cref{sec:experiment}}{Section 5}}\label{adxsec:experiment}
In this section, we provide all the omitted details for all experiment settings, including SFT, RLHF, and RLVR.

\subsection{Computing Resources}\label{adxsubsec:details-of-computing-resources}
Our experiments are conducted on the NCSA Delta cluster.\footnote{\url{https://docs.ncsa.illinois.edu/systems/delta/en/latest/index.html}} SFT, RLHF, and benchmark experiments run on nodes equipped with a single \texttt{AMD EPYC 7763 CPU @ 2.45GHz} (64 cores) and four \texttt{NVIDIA A40 GPUs} (48 GB GDDR6 each). RLVR experiments run on nodes with dual \texttt{Intel Xeon Platinum 8558 CPUs} (96 cores total) and eight \texttt{NVIDIA H200 GPUs} (141 GB HBM3 each).

\subsection{Supervised Fine-Tuning}\label{adxsubsec:details-of-sft}
We now provide the missing details of the SFT experiments in \Cref{subsec:SFT}, including details on datasets, training configurations, data regularization, and the case study.

\subsubsection{Datasets}
We consider four general/target pairs:
\begin{itemize}[leftmargin=*]
    \item \texttt{alpaca}\footnote{\url{https://huggingface.co/datasets/tatsu-lab/alpaca}}~\citep{alpaca} (CC BY-NC 4.0 license)/\texttt{samsum}\footnote{\url{https://huggingface.co/datasets/knkarthick/samsum}}~\citep{gliwa2019samsum} (CC BY-NC-ND 4.0 license). Single-turn instruction-following/dialogue summarization. We randomly sample \(40\%\) of the \texttt{alpaca} training pool per seed (\(\sim\!20.8\)k examples).
    \item \texttt{less-mix}~\citep{xia2024less}/\texttt{tydiqa}\footnote{\url{https://huggingface.co/datasets/google-research-datasets/tydiqa}}~\citep{clark2020tydi} (Apache-2.0 license). Heterogeneous instruction mixture/multilingual extractive QA. We randomly sample \(0.5\%\) of \texttt{less-mix} per seed (\(\sim\!9.8\)k examples).
    \item \texttt{triviaqa}\footnote{\url{https://huggingface.co/datasets/mandarjoshi/trivia_qa}}~\citep{joshi2017triviaqa}/\texttt{nq\_open}\footnote{\url{https://huggingface.co/datasets/google-research-datasets/nq_open}}~\citep{kwiatkowski2019natural} (CC BY-SA 3.0 license). Closed-book TriviaQA QA pairs (\texttt{rc.nocontext}, \(\sim\!138\)k examples)/closed-book Natural Questions evaluation. We randomly sample \(5\%\) of the \texttt{triviaqa} training pool per seed (\(\sim\!8.9\)k examples).
    \item \texttt{less-mix}~\citep{xia2024less}/\texttt{squad}\footnote{\url{https://huggingface.co/datasets/rajpurkar/squad}}~\citep{rajpurkar2016squad} (CC BY-SA 4.0 license). Same training pool as the \texttt{tydiqa} setting, with the target task replaced by SQuAD reading-comprehension answers without context. \(0.5\%\) sampling rate as above.
\end{itemize}

For each target task we hold out \(16\) samples for data regularization, \(500\) samples for in-training perplexity tracking, and report the final-checkpoint downstream metric (Rouge-L for \texttt{samsum}; F1 for \texttt{tydiqa}, \texttt{nq\_open}, and \texttt{squad}) on a separate \(500\)-sample test split. \texttt{Llama-3.2-1B-Base} ships without a chat template, so we install an \texttt{open-instruct}-style fallback (\(\langle\)user\(\rangle\)/\(\langle\)assistant\(\rangle\) plaintext markers) before tokenization, with the loss computed only on assistant-content tokens.

\subsubsection{Training Configurations}
All models are trained with standard supervised fine-tuning using next-token prediction on formatted instruction--response pairs. Across the four settings, we report results for
\begin{enumerate*}[label=(\roman*)]
    \item full-parameter fine-tuning,
    \item LoRA fine-tuning, and
    \item MeSO;
\end{enumerate*}
\texttt{alpaca}/\texttt{samsum} reports all three fine-tuning methods, while \texttt{less-mix}/\texttt{tydiqa}, \texttt{triviaqa}/\texttt{nq\_open}, and \texttt{less-mix}/\texttt{squad} report LoRA only. All methods use the same number of optimization steps within each dataset pair and the same effective batch size. We use AdamW~\citep{loshchilov2018decoupled} with \(\beta_1 = 0.9\), \(\beta_2 = 0.999\), weight decay \(0\), a linear learning-rate schedule with \(3\%\) warmup, batch size \(8\), maximum sequence length \(512\), and train for \(1\) epoch. Training uses \texttt{bfloat16} precision and FlashAttention-2. For the learning rate, we follow \citet{grattafiori2024llama}, where we use \(10^{-5}\) for full-parameter fine-tuning and MeSO, and \(10^{-4}\) (10 times scaling~\citep{schulman2025lora}).

The number of training steps is determined by the \(1\)-epoch budget on the sampled training pool: \(\sim\!2600\) for \texttt{alpaca}/\texttt{samsum}, \(\sim\!1225\) for \texttt{less-mix}/\texttt{tydiqa} and \texttt{less-mix}/\texttt{squad}, and \(\sim\!1107\) for \texttt{triviaqa}/\texttt{nq\_open}. We log evaluation perplexity at \(\sim\!100\) equally spaced steps across the trajectory in all four settings.

For LoRA, we use rank \(r = 8\), scaling factor \(\alpha = 16\), dropout \(0.1\), and apply adapters to all linear layers (the full Q/K/V/O attention projections and the gate/up/down MLP projections). For MeSO, we use the gradient compressor described in \Cref{adxsubsubsec:per-sample-gradient-compression} with a Gaussian projector for both \(P_{\text{out}}^{(l)}\) and \(P_{\text{in}}^{(l)}\) of dimension \(512\) (i.e., the total per-layer compression dimension is \(\kappa^{(l)} = 512 \times 512\)), and no second-stage projector (\(P^{(l)}_{\text{final}} = I\)), refreshing the subspace every \(200\) steps.

\subsubsection{Data Regularization}
Data regularization is guided by a small held-out target set of \(16\) samples (\(n_{\mathrm{val}} = 16\)). At each training step, we form a merged batch of \(n=8\) training samples and \(m=1\) target sample (resampled per step from the held-out target pool), and score candidate training examples bycompressed scoring (\Cref{subsubsec:compressed-scoring}) with gradient compressors of dimension \(\kappa^{(l)} = 64 \times 64\) for each layer, including MeSO. Note that under MeSO, the score compression and update compression are done using different compressors (\(64 \times 64\) for scoring vs.\ \(512 \times 512\) for updates), allowing a fair comparison across fine-tuning methods.

We solve \(S_t\) (or \(S_{t,l}\) in the case of Layer-Wise Subset Update) by the top-\(k\) strategy, where we consider a retaining rate of \(50\%\) (i.e., \(k = 4\) out of batch size \(n = 8\)).

\subsubsection{Case Study}\label{adxsubsubsec:case-study}
\Cref{fig:case-study-extra} extends the score analysis from \Cref{subsubsec:case-study} to the three QA settings. The same qualitative pattern holds across all four settings: a single \texttt{down\_proj} layer at an early block (block \(0\) or \(1\)) produces scores \(22\times\)--\(38\times\) larger than the second-largest layer type at the same block, and the global ranking is consequently dominated by that layer. Q and K projections, by contrast, retain near-zero correlation (\(\rho \lesssim 0.3\)) with the global subset ranking across all four settings, confirming that the score-magnitude skew is not specific to any particular general/target pair or training-pool composition.

\begin{figure}[htpb]
    \centering
    \begin{subfigure}[t]{0.49\linewidth}
        \subcaption{\texttt{less-mix}/\texttt{tydiqa}: magnitude}
        \includegraphics[width=\linewidth]{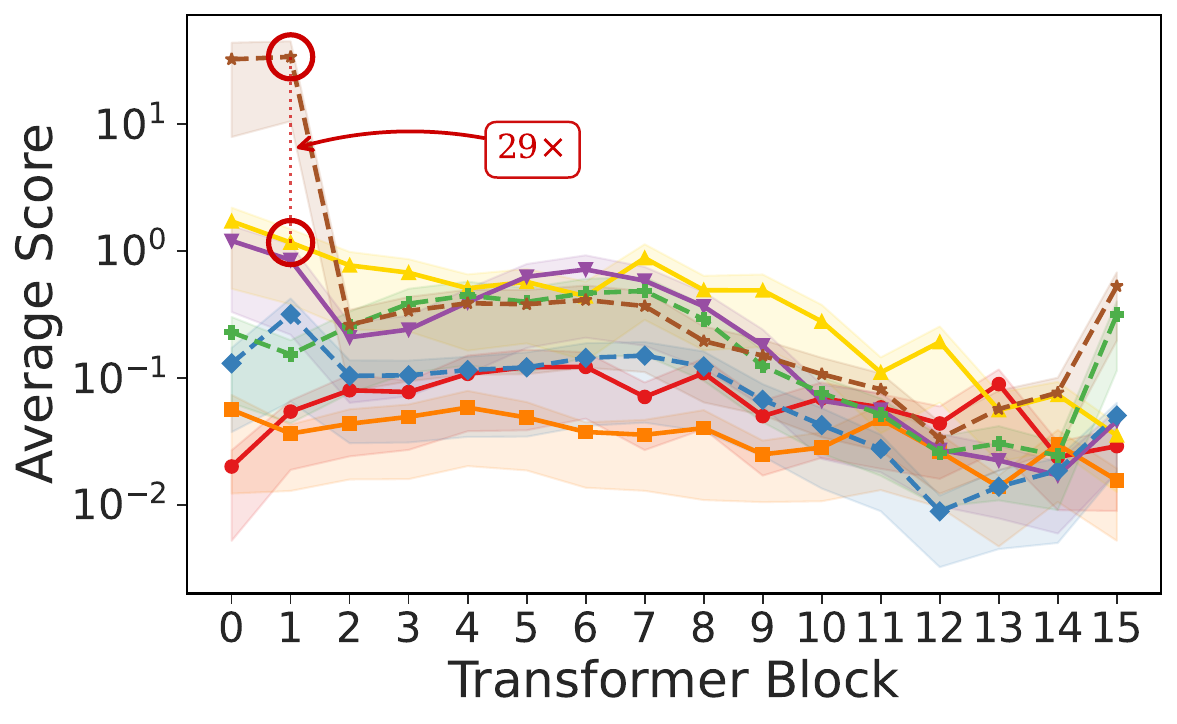}
    \end{subfigure}\hfill
    \begin{subfigure}[t]{0.49\linewidth}
        \subcaption{\texttt{less-mix}/\texttt{tydiqa}: correlation}
        \includegraphics[width=\linewidth]{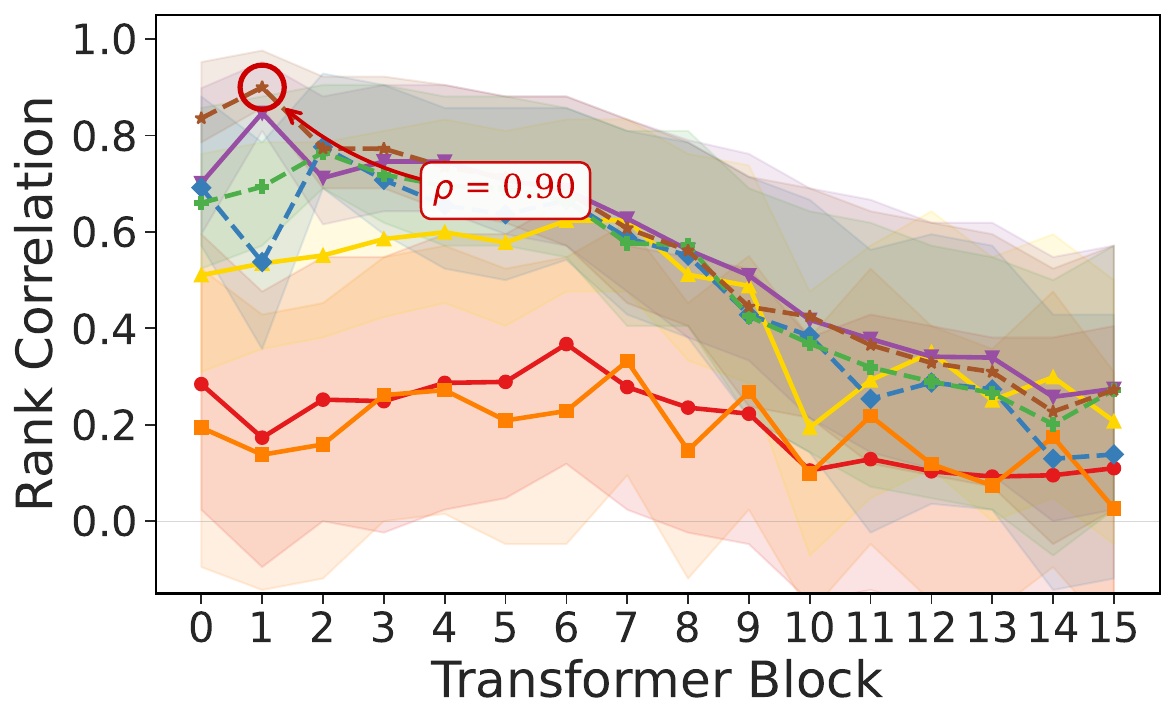}
    \end{subfigure}
    \\[0.5em]
    \begin{subfigure}[t]{0.49\linewidth}
        \subcaption{\texttt{triviaqa}/\texttt{nq\_open}: magnitude}
        \includegraphics[width=\linewidth]{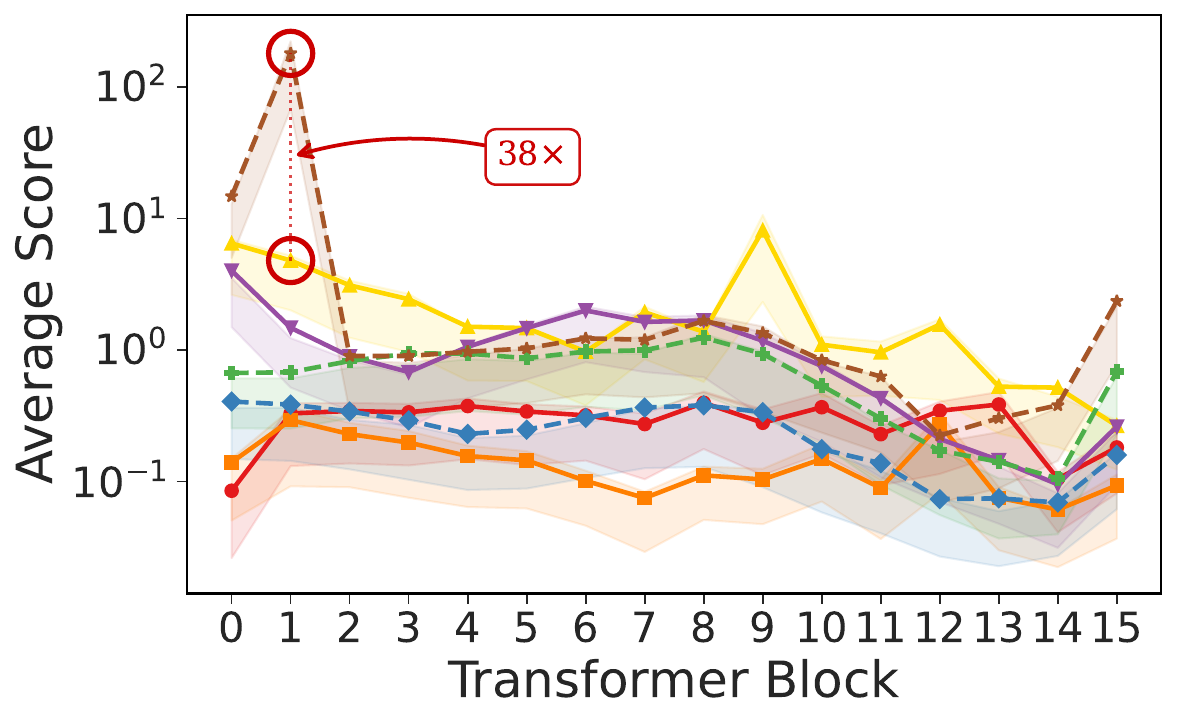}
    \end{subfigure}\hfill
    \begin{subfigure}[t]{0.49\linewidth}
        \subcaption{\texttt{triviaqa}/\texttt{nq\_open}: correlation}
        \includegraphics[width=\linewidth]{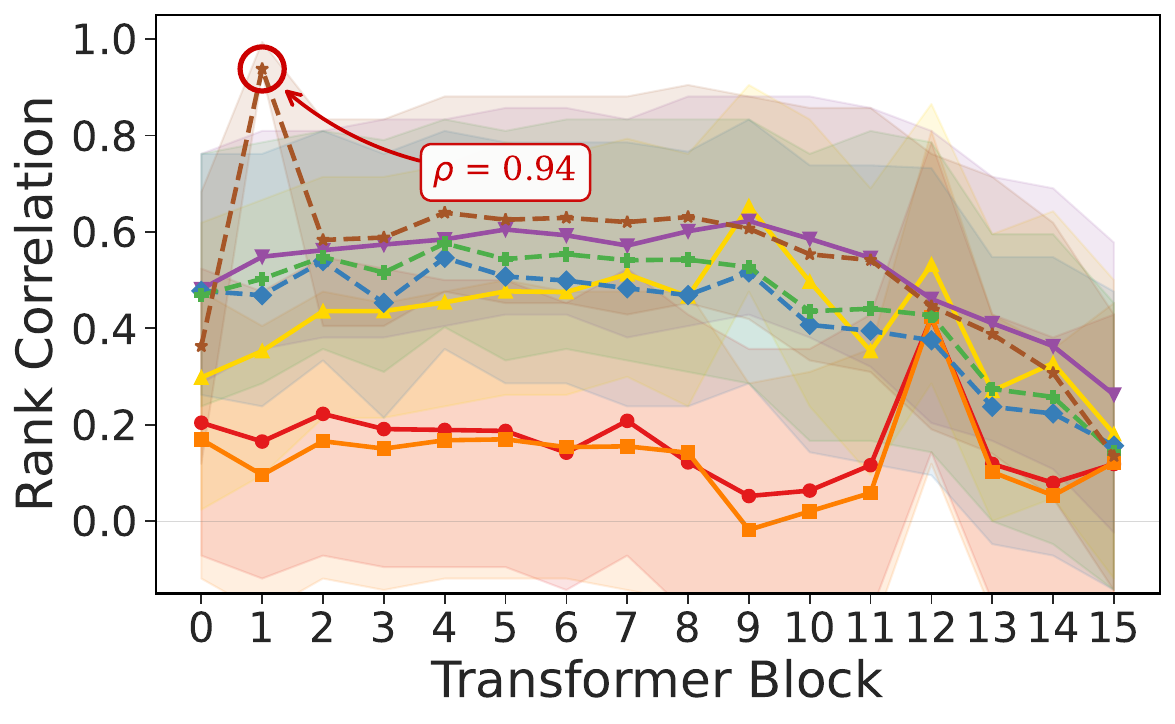}
    \end{subfigure}
    \\[0.5em]
    \begin{subfigure}[t]{0.49\linewidth}
        \subcaption{\texttt{less-mix}/\texttt{squad}: magnitude}
        \includegraphics[width=\linewidth]{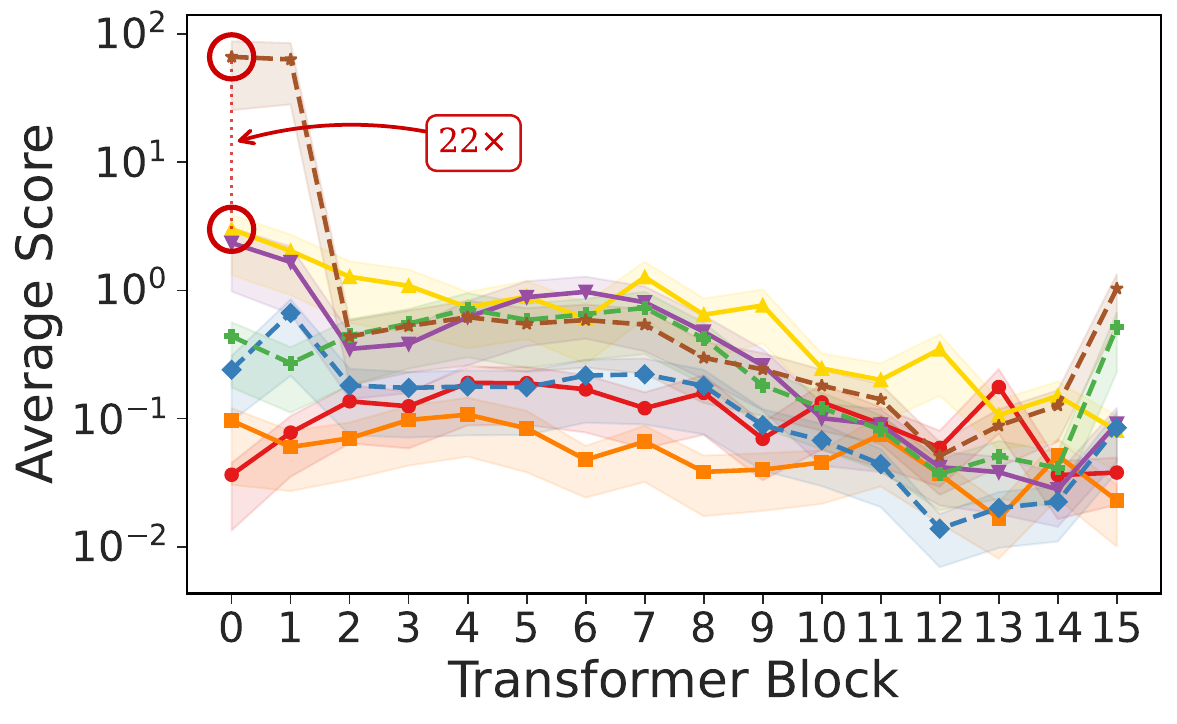}
    \end{subfigure}\hfill
    \begin{subfigure}[t]{0.49\linewidth}
        \subcaption{\texttt{less-mix}/\texttt{squad}: correlation}
        \includegraphics[width=\linewidth]{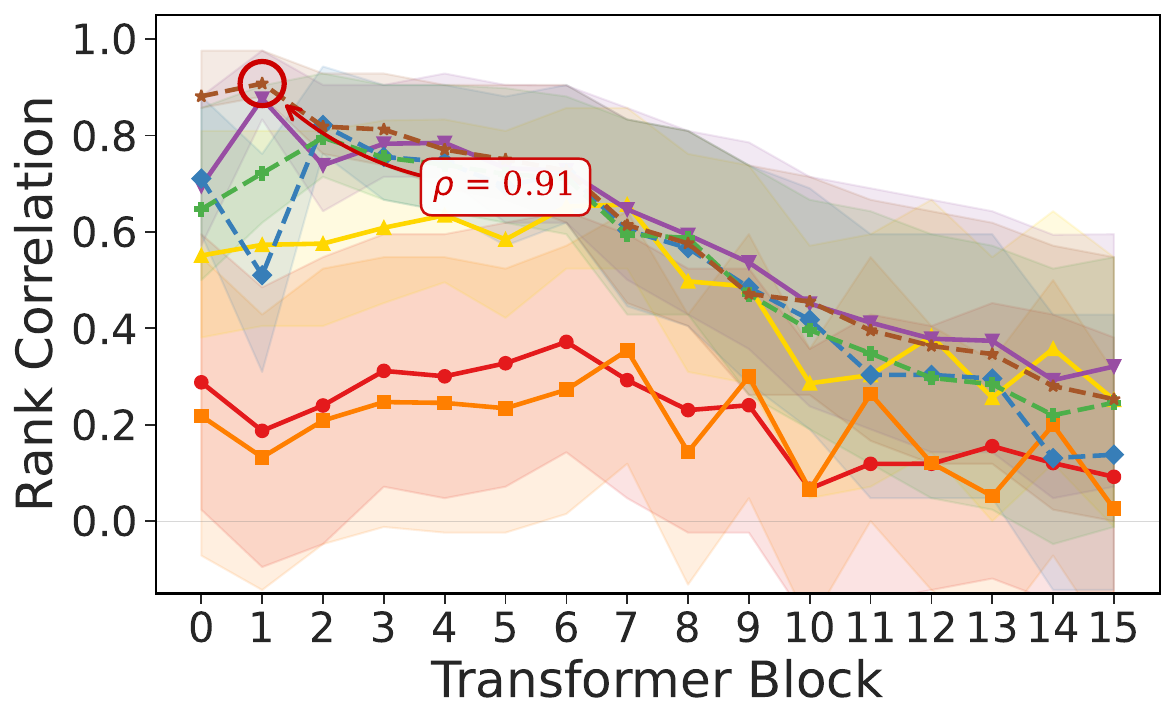}
    \end{subfigure}
    \par\smallskip
    \includegraphics[width=0.9\linewidth]{Figures/SFT/case_study/legend.pdf}
    \caption{Per-layer scores on the three QA-only settings (full-parameter SFT trace), analogous to \Cref{fig:case-study-alpaca-samsum}. Magnitude ratios at the dominant block: \(22\times\) (\texttt{less-mix}/\texttt{tydiqa}), \(29\times\) (\texttt{triviaqa}/\texttt{nq\_open}), \(38\times\) (\texttt{less-mix}/\texttt{squad}). Spearman \(\rho\) of the dominant \texttt{down\_proj} layer with the global ranking: \(0.91\), \(0.94\), and \(0.90\) respectively, while Q and K projections remain near zero (\(\rho \lesssim 0.3\)) in all three settings.}
    \label{fig:case-study-extra}
\end{figure}

\subsection{Reinforcement Learning from Human Feedback}
We now provide the missing details of the RLHF experiments in \Cref{subsec:RLHF}, including details on datasets, libraries, training configurations, and data regularization.

\subsubsection{Libraries and Datasets}
We implement PPO-based RLHF using \texttt{TRL}\footnote{\url{https://huggingface.co/docs/trl/index}}~\citep{vonwerra2022trl} (Apache-2.0 license) and sample prompts from \texttt{real-toxicity-prompts}\footnote{\url{https://huggingface.co/datasets/allenai/real-toxicity-prompts}}~\citep{gehman2020realtoxicityprompts} (Apache-2.0 license). For rewards, we use the \texttt{LFTW R4 Target} toxicity detector\footnote{\url{https://huggingface.co/facebook/roberta-hate-speech-dynabench-r4-target}}~\citep{vidgen2021lftw}, and we evaluate toxicity using \texttt{da-electra-hatespeech-detection}\footnote{\url{https://huggingface.co/alexandrainst/da-hatespeech-detection-base}}. We note that this experimental setup largely follows the official example provided from \texttt{TRL},\footnote{\url{https://huggingface.co/docs/trl/v0.4.7/en/detoxifying_a_lm}} but with a newer \texttt{TRL} version.

\subsubsection{Training Configurations}
For each prompt, the policy generates a continuation of up to \(30\) tokens, scored by the reward model; PPO updates maximize reward while penalizing deviation from a frozen reference policy via an adaptive KL penalty. We use \(4\) PPO epochs with mini-batch size \(4\), rollout batch size \(256\), learning rate \(10^{-5}\) (policy) and \(5 \times 10^{-4}\) (value head), linear schedule with \(3\%\) warmup, and no weight decay. LoRA is applied with rank \(16\), \(\alpha = 32\), and dropout \(0.05\). The PPO surrogate uses clipping range \(0.2\) for both policy and value, value function coefficient \(0.1\), discount factor \(\gamma = 1.0\), and GAE parameter \(\lambda = 0.95\). The KL penalty is initialized at \(0.02\) and adapted via an AdaptiveKLController with horizon target \(70.0\), using the \(k_1\) estimator; early stopping is enabled when the per-step KL exceeds \(1.5 \times 0.3\). Evaluation is performed every PPO step by generating continuations for \(500\) held-out prompts (batch size \(256\)) and reporting the mean predicted toxicity.

\subsubsection{Data Regularization}
In this setting, we consider exact scoring with the per-token inner product introduced in \Cref{subsubsec:per-token-inner-product}. Two types of ``target set'' are considered:
\begin{enumerate*}[label=(\roman*)]
    \item the same rollout batch (self-referencing), or
    \item new rollouts generated from a held-out target set of \(1024\) prompts, processed in batches of \(256\).
\end{enumerate*}
Furthermore, we consider two target loss functions for the target signals:
\begin{enumerate*}[label=(\roman*)]
    \item reward-weighted log-probability, and
    \item the pre-clip PPO surrogate loss (i.e., the training objective before clipping).
\end{enumerate*}

Finally, we solve \(S_t\) (or \(S_{t,l}\) in the case of Layer-Wise Subset Update) via \emph{negative score filtering}, where at each PPO step, samples (i.e., rollouts) with negative scores are dropped from the update.

\subsection{Reinforcement Learning with Verifiable Rewards}
We now provide the missing details of the RLVR experiments in \Cref{subsec:RLVR}, including details on datasets, libraries, training configurations, and data regularization.

\subsubsection{Libraries and Datasets}
We use the \texttt{Verl} framework\footnote{\url{https://github.com/verl-project/verl}}~\citep{sheng2024hybridflow} (Apache-2.0 license) to train on the \texttt{MATH} dataset\footnote{\url{https://github.com/hendrycks/math}}~\citep{hendrycksmath2021} (MIT license). We note that this experimental setup largely follows the official example provided from \texttt{Verl},\footnote{\url{https://github.com/verl-project/verl/blob/65eb5a15058d73ef54064092222b7af67d658241/examples/grpo_trainer/run_qwen3-8b.sh}} with different models and datasets.

\subsubsection{Training Configurations}
Each training sample is a math problem; the model generates \(8\) candidate solutions and receives a verifiable reward from exact-match checking against the ground-truth answer. The policy is optimized using GRPO~\citep{shao2024deepseekmath} with training batch size \(128\), mini-batch size \(32\), micro-batch size \(2\) per GPU, KL coefficient \(0.001\) (low-variance estimator), learning rate \(10^{-6}\), maximum prompt length \(1024\), and maximum response length \(2048\). Gradient checkpointing is enabled. We train for \(3\) epochs and evaluate every \(3\) steps. All other settings follow the default \texttt{Verl} configuration.

\subsubsection{Data Regularization}
We instantiate the data regularization framework as in-run cross-validation with again the per-token inner product introduced in \Cref{subsubsec:per-token-inner-product}: at each step, a random subset drawn from a pool of held-aside training prompts serves as the target batch, and the reward-weighted log-probability on the target rollouts provides the target signal. Then, negative score filtering is applied to the training batch based on scores with this target, dropping all samples with negative alignment scores. The target pool contains \(512\) prompts sampled from the training set, with target batch size \(64\); target rollouts are regenerated at every step using the current policy.

\subsection{System Efficiency}\label{adxsubsec:system-efficiency}
We now provide additional benchmarking results that we omitted in \Cref{subsec:system-efficiency}, where specifically, we provide a comparison with the two-pass implementation of Global Subset Update (named Subset (2P)) as introduced in \Cref{adxsec:two-pass-implementation}. \Cref{adxtab:timing-grid,adxtab:timing-grid-ckpt} provide per-component breakdowns (ms) for the three benchmark models on a single \texttt{A40} GPU across two batch configurations, without and with activation checkpointing, respectively. The format mirrors \Cref{tab:system-overhead}; for Subset (2P), the scoring and gradient passes are shown separately. \Cref{adxtab:peak-memory} consolidates the corresponding peak GPU memory.

\begin{table}[htpb]
    \centering
    \caption{Per-step component breakdown (ms) on a single \texttt{A40} GPU across three models and two batch configurations (\(m=1\), \(k=n/2\), compressed scoring).}
    \label{adxtab:timing-grid}
    \begin{subtable}[t]{0.49\textwidth}
        \centering
        \subcaption{\textsc{SmolLM2-360M}, \(n=8\), \(T=512\).}
        \begin{adjustbox}{max width=\linewidth}
            \begin{tabular}{l ccc cc}
                \toprule
                                        & \textbf{Full-Training} & \textbf{Layer-Wise Subset}        & \textbf{Global Subset (1P)}       & \multicolumn{2}{c}{\textbf{Global Subset (2P)}}              \\
                \cmidrule(lr){5-6}
                \textbf{Component}      &                        &                                   &                                   & Score                                                 & Grad \\
                \midrule
                \textbf{Forward}        & 78.4                   & 88.6                              & 88.9                              & 88.7                                                  & 46.5 \\
                \textbf{Backward}       & 155.7                  & 234.0                             & 193.0                             & 161.5                                                 & 81.4 \\
                \rowcolor{black!6}
                \quad \texttt{a.grad}   & 32.5                   & 34.2                              & 33.3                              & 33.4                                                  & 16.1 \\
                \rowcolor{black!6}
                \quad \texttt{scoring}  & ---                    & 43.8                              & 22.5                              & 22.5                                                  & ---  \\
                \rowcolor{black!6}
                \quad \texttt{w.grad}   & 27.5                   & 43.5                              & 30.7                              & ---                                                   & 15.5 \\
                \rowcolor{black!6}
                \quad \texttt{autograd} & 95.6                   & 112.7                             & 106.5                             & 105.6                                                 & 49.8 \\
                \textbf{Optimizer}      & 26.6                   & 26.6                              & 26.6                              & ---                                                   & 27.8 \\
                \midrule
                \textbf{Total}          & \textbf{261}           & \textbf{349} \scriptsize(+34.0\%) & \textbf{309} \scriptsize(+18.4\%) & \multicolumn{2}{c}{\textbf{405} \scriptsize(+55.2\%)}        \\
                \bottomrule
            \end{tabular}
        \end{adjustbox}
    \end{subtable}
    \hfill
    \begin{subtable}[t]{0.49\textwidth}
        \centering
        \subcaption{\textsc{SmolLM2-360M}, \(n=2\), \(T=1024\).}
        \begin{adjustbox}{max width=\linewidth}
            \begin{tabular}{l ccc cc}
                \toprule
                                        & \textbf{Full-Training} & \textbf{Layer-Wise Subset}         & \textbf{Global Subset (1P)}       & \multicolumn{2}{c}{\textbf{Global Subset (2P)}}                \\
                \cmidrule(lr){5-6}
                \textbf{Component}      &                        &                                    &                                   & Score                                                  & Grad  \\
                \midrule
                \textbf{Forward}        & 46.9                   & 61.5                               & 61.4                              & 61.3                                                   & 47.0  \\
                \textbf{Backward}       & 83.1                   & 229.0                              & 173.8                             & 139.9                                                  & 72.2  \\
                \rowcolor{black!6}
                \quad \texttt{a.grad}   & 16.1                   & 27.0                               & 26.6                              & 26.7                                                   & 14.2  \\
                \rowcolor{black!6}
                \quad \texttt{scoring}  & ---                    & 62.7                               & 36.8                              & 33.6                                                   & ---   \\
                \rowcolor{black!6}
                \quad \texttt{w.grad}   & 15.5                   & 48.1                               & 24.0                              & ---                                                    & 12.5  \\
                \rowcolor{black!6}
                \quad \texttt{autograd} & 51.6                   & 91.2                               & 86.5                              & 79.6                                                   & 45.5  \\
                \textbf{Optimizer}      & 26.6                   & 26.6                               & 26.6                              & ---                                                    & 27.8  \\
                \midrule
                \textbf{Total}          & \textbf{157}           & \textbf{317} \scriptsize(+102.5\%) & \textbf{262} \scriptsize(+67.2\%) & \multicolumn{2}{c}{\textbf{348} \scriptsize(+122.4\%)}         \\
                \bottomrule
            \end{tabular}
        \end{adjustbox}
    \end{subtable}
    \\[1.5em]
    \begin{subtable}[t]{0.49\textwidth}
        \centering
        \subcaption{\textsc{TinyLlama-1.1B}, \(n=8\), \(T=512\).}
        \begin{adjustbox}{max width=\linewidth}
            \begin{tabular}{l ccc cc}
                \toprule
                                        & \textbf{Full-Training} & \textbf{Layer-Wise Subset}       & \textbf{Global Subset (1P)}      & \multicolumn{2}{c}{\textbf{Global Subset (2P)}}               \\
                \cmidrule(lr){5-6}
                \textbf{Component}      &                        &                                  &                                  & Score                                                 & Grad  \\
                \midrule
                \textbf{Forward}        & 155.5                  & 168.4                            & 167.9                            & 168.2                                                 & 82.3  \\
                \textbf{Backward}       & 312.5                  & 352.8                            & 344.2                            & 270.0                                                 & 163.5 \\
                \rowcolor{black!6}
                \quad \texttt{a.grad}   & 93.3                   & 96.6                             & 96.6                             & 96.6                                                  & 48.8  \\
                \rowcolor{black!6}
                \quad \texttt{scoring}  & ---                    & 35.6                             & 31.3                             & 31.3                                                  & ---   \\
                \rowcolor{black!6}
                \quad \texttt{w.grad}   & 91.7                   & 77.2                             & 73.4                             & ---                                                   & 47.6  \\
                \rowcolor{black!6}
                \quad \texttt{autograd} & 127.6                  & 143.4                            & 142.9                            & 142.1                                                 & 67.1  \\
                \textbf{Optimizer}      & 81.2                   & 81.3                             & 81.2                             & ---                                                   & 81.2  \\
                \midrule
                \textbf{Total}          & \textbf{549}           & \textbf{602} \scriptsize(+9.7\%) & \textbf{593} \scriptsize(+8.0\%) & \multicolumn{2}{c}{\textbf{765} \scriptsize(+39.3\%)}         \\
                \bottomrule
            \end{tabular}
        \end{adjustbox}
    \end{subtable}
    \hfill
    \begin{subtable}[t]{0.49\textwidth}
        \centering
        \subcaption{\textsc{TinyLlama-1.1B}, \(n=2\), \(T=1024\).}
        \begin{adjustbox}{max width=\linewidth}
            \begin{tabular}{l ccc cc}
                \toprule
                                        & \textbf{Full-Training} & \textbf{Layer-Wise Subset}        & \textbf{Global Subset (1P)}       & \multicolumn{2}{c}{\textbf{Global Subset (2P)}}               \\
                \cmidrule(lr){5-6}
                \textbf{Component}      &                        &                                   &                                   & Score                                                 & Grad  \\
                \midrule
                \textbf{Forward}        & 83.8                   & 121.7                             & 121.7                             & 121.6                                                 & 48.8  \\
                \textbf{Backward}       & 166.5                  & 241.3                             & 233.5                             & 188.9                                                 & 91.8  \\
                \rowcolor{black!6}
                \quad \texttt{a.grad}   & 48.9                   & 67.8                              & 67.7                              & 67.7                                                  & 27.7  \\
                \rowcolor{black!6}
                \quad \texttt{scoring}  & ---                    & 23.4                              & 20.1                              & 20.1                                                  & ---   \\
                \rowcolor{black!6}
                \quad \texttt{w.grad}   & 47.5                   & 47.5                              & 43.7                              & ---                                                   & 26.1  \\
                \rowcolor{black!6}
                \quad \texttt{autograd} & 70.1                   & 102.6                             & 101.9                             & 101.1                                                 & 38.0  \\
                \textbf{Optimizer}      & 81.2                   & 81.2                              & 81.2                              & ---                                                   & 81.2  \\
                \midrule
                \textbf{Total}          & \textbf{332}           & \textbf{444} \scriptsize(+34.0\%) & \textbf{436} \scriptsize(+31.6\%) & \multicolumn{2}{c}{\textbf{532} \scriptsize(+60.6\%)}         \\
                \bottomrule
            \end{tabular}
        \end{adjustbox}
    \end{subtable}
    \\[1.5em]
    \begin{subtable}[t]{0.49\textwidth}
        \centering
        \subcaption{\textsc{Llama-3.2-3B}, \(n=8\), \(T=512\).}
        \begin{adjustbox}{max width=\linewidth}
            \begin{tabular}{l ccc cc}
                \toprule
                                        & \textbf{Full-Training} & \textbf{Layer-Wise Subset}        & \textbf{Global Subset (1P)}       & \multicolumn{2}{c}{\textbf{Global Subset (2P)}}                \\
                \cmidrule(lr){5-6}
                \textbf{Component}      &                        &                                   &                                   & Score                                                  & Grad  \\
                \midrule
                \textbf{Forward}        & 343.3                  & 387.5                             & 390.0                             & 388.9                                                  & 183.1 \\
                \textbf{Backward}       & 828.2                  & 825.6                             & 820.2                             & 583.5                                                  & 427.3 \\
                \rowcolor{black!6}
                \quad \texttt{a.grad}   & 228.9                  & 255.8                             & 256.8                             & 256.9                                                  & 118.1 \\
                \rowcolor{black!6}
                \quad \texttt{scoring}  & ---                    & 49.7                              & 45.4                              & 45.3                                                   & ---   \\
                \rowcolor{black!6}
                \quad \texttt{w.grad}   & 342.2                  & 233.1                             & 235.4                             & ---                                                    & 174.6 \\
                \rowcolor{black!6}
                \quad \texttt{autograd} & 257.1                  & 287.1                             & 282.6                             & 281.4                                                  & 134.6 \\
                \textbf{Optimizer}      & 235.4                  & 235.4                             & 235.5                             & ---                                                    & 235.5 \\
                \midrule
                \textbf{Total}          & \textbf{1407}          & \textbf{1449} \scriptsize(+3.0\%) & \textbf{1446} \scriptsize(+2.8\%) & \multicolumn{2}{c}{\textbf{1818} \scriptsize(+29.2\%)}         \\
                \bottomrule
            \end{tabular}
        \end{adjustbox}
    \end{subtable}
    \hfill
    \begin{subtable}[t]{0.49\textwidth}
        \centering
        \subcaption{\textsc{Llama-3.2-3B}, \(n=2\), \(T=1024\).}
        \begin{adjustbox}{max width=\linewidth}
            \begin{tabular}{l ccc cc}
                \toprule
                                        & \textbf{Full-Training} & \textbf{Layer-Wise Subset}         & \textbf{Global Subset (1P)}        & \multicolumn{2}{c}{\textbf{Global Subset (2P)}}                \\
                \cmidrule(lr){5-6}
                \textbf{Component}      &                        &                                    &                                    & Score                                                  & Grad  \\
                \midrule
                \textbf{Forward}        & 184.9                  & 264.0                              & 265.3                              & 264.8                                                  & 101.0 \\
                \textbf{Backward}       & 431.8                  & 509.7                              & 503.5                              & 393.0                                                  & 202.5 \\
                \rowcolor{black!6}
                \quad \texttt{a.grad}   & 118.0                  & 165.4                              & 166.3                              & 166.0                                                  & 63.7  \\
                \rowcolor{black!6}
                \quad \texttt{scoring}  & ---                    & 34.6                               & 31.4                               & 31.4                                                   & ---   \\
                \rowcolor{black!6}
                \quad \texttt{w.grad}   & 174.4                  & 108.2                              & 109.1                              & ---                                                    & 63.2  \\
                \rowcolor{black!6}
                \quad \texttt{autograd} & 139.4                  & 201.5                              & 196.6                              & 195.6                                                  & 75.6  \\
                \textbf{Optimizer}      & 235.6                  & 235.4                              & 235.5                              & ---                                                    & 235.5 \\
                \midrule
                \textbf{Total}          & \textbf{852}           & \textbf{1009} \scriptsize(+18.4\%) & \textbf{1004} \scriptsize(+17.8\%) & \multicolumn{2}{c}{\textbf{1197} \scriptsize(+40.4\%)}         \\
                \bottomrule
            \end{tabular}
        \end{adjustbox}
    \end{subtable}
\end{table}

\begin{table}[htpb]
    \centering
    \caption{Per-component breakdown (ms) with activation checkpointing on a single \texttt{A40} GPU across three models and two batch configurations (\(m=1\), \(k=n/2\), compressed scoring). Format mirrors \Cref{adxtab:timing-grid}.}
    \label{adxtab:timing-grid-ckpt}
    \begin{subtable}[t]{0.49\textwidth}
        \centering
        \subcaption{\textsc{SmolLM2-360M}, \(n=8\), \(T=512\).}
        \begin{adjustbox}{max width=\linewidth}
            \begin{tabular}{l ccc cc}
                \toprule
                                        & \textbf{Full-Training} & \textbf{Layer-Wise Subset}        & \textbf{Global Subset (1P)}       & \multicolumn{2}{c}{\textbf{Global Subset (2P)}}               \\
                \cmidrule(lr){5-6}
                \textbf{Component}      &                        &                                   &                                   & Score                                                 & Grad  \\
                \midrule
                \textbf{Forward}        & 88.2                   & 109.7                             & 107.7                             & 98.9                                                  & 73.4  \\
                \textbf{Backward}       & 249.0                  & 311.4                             & 297.1                             & 258.6                                                 & 134.7 \\
                \rowcolor{black!6}
                \quad \texttt{a.grad}   & 40.7                   & 41.0                              & 40.8                              & 40.9                                                  & 20.9  \\
                \rowcolor{black!6}
                \quad \texttt{scoring}  & ---                    & 33.4                              & 26.6                              & 26.5                                                  & ---   \\
                \rowcolor{black!6}
                \quad \texttt{w.grad}   & 36.2                   & 43.3                              & 37.5                              & ---                                                   & 20.0  \\
                \rowcolor{black!6}
                \quad \texttt{autograd} & 172.0                  & 193.8                             & 192.1                             & 191.1                                                 & 93.8  \\
                \textbf{Optimizer}      & 26.8                   & 26.8                              & 26.8                              & ---                                                   & 27.8  \\
                \midrule
                \textbf{Total}          & \textbf{364}           & \textbf{448} \scriptsize(+23.0\%) & \textbf{432} \scriptsize(+18.6\%) & \multicolumn{2}{c}{\textbf{593} \scriptsize(+63.0\%)}         \\
                \bottomrule
            \end{tabular}
        \end{adjustbox}
    \end{subtable}
    \hfill
    \begin{subtable}[t]{0.49\textwidth}
        \centering
        \subcaption{\textsc{SmolLM2-360M}, \(n=2\), \(T=1024\).}
        \begin{adjustbox}{max width=\linewidth}
            \begin{tabular}{l ccc cc}
                \toprule
                                        & \textbf{Full-Training} & \textbf{Layer-Wise Subset}        & \textbf{Global Subset (1P)}       & \multicolumn{2}{c}{\textbf{Global Subset (2P)}}                \\
                \cmidrule(lr){5-6}
                \textbf{Component}      &                        &                                   &                                   & Score                                                  & Grad  \\
                \midrule
                \textbf{Forward}        & 63.9                   & 77.0                              & 77.1                              & 69.6                                                   & 72.3  \\
                \textbf{Backward}       & 135.4                  & 305.6                             & 252.8                             & 218.8                                                  & 148.0 \\
                \rowcolor{black!6}
                \quad \texttt{a.grad}   & 20.8                   & 34.5                              & 34.1                              & 34.1                                                   & 18.5  \\
                \rowcolor{black!6}
                \quad \texttt{scoring}  & ---                    & 61.9                              & 39.6                              & 39.0                                                   & ---   \\
                \rowcolor{black!6}
                \quad \texttt{w.grad}   & 20.0                   & 52.4                              & 25.7                              & ---                                                    & 15.3  \\
                \rowcolor{black!6}
                \quad \texttt{autograd} & 94.6                   & 156.8                             & 153.3                             & 145.7                                                  & 114.2 \\
                \textbf{Optimizer}      & 26.9                   & 26.8                              & 26.8                              & ---                                                    & 28.0  \\
                \midrule
                \textbf{Total}          & \textbf{226}           & \textbf{409} \scriptsize(+81.0\%) & \textbf{357} \scriptsize(+57.8\%) & \multicolumn{2}{c}{\textbf{537} \scriptsize(+137.3\%)}         \\
                \bottomrule
            \end{tabular}
        \end{adjustbox}
    \end{subtable}
    \\[1.5em]
    \begin{subtable}[t]{0.49\textwidth}
        \centering
        \subcaption{\textsc{TinyLlama-1.1B}, \(n=8\), \(T=512\).}
        \begin{adjustbox}{max width=\linewidth}
            \begin{tabular}{l ccc cc}
                \toprule
                                        & \textbf{Full-Training} & \textbf{Layer-Wise Subset}       & \textbf{Global Subset (1P)}      & \multicolumn{2}{c}{\textbf{Global Subset (2P)}}                \\
                \cmidrule(lr){5-6}
                \textbf{Component}      &                        &                                  &                                  & Score                                                  & Grad  \\
                \midrule
                \textbf{Forward}        & 154.9                  & 167.4                            & 167.1                            & 174.2                                                  & 82.2  \\
                \textbf{Backward}       & 453.6                  & 504.3                            & 495.9                            & 421.8                                                  & 238.3 \\
                \rowcolor{black!6}
                \quad \texttt{a.grad}   & 93.3                   & 96.8                             & 96.7                             & 96.8                                                   & 48.9  \\
                \rowcolor{black!6}
                \quad \texttt{scoring}  & ---                    & 35.5                             & 31.3                             & 31.3                                                   & ---   \\
                \rowcolor{black!6}
                \quad \texttt{w.grad}   & 91.8                   & 77.1                             & 73.4                             & ---                                                    & 47.6  \\
                \rowcolor{black!6}
                \quad \texttt{autograd} & 268.5                  & 294.9                            & 294.5                            & 293.6                                                  & 141.7 \\
                \textbf{Optimizer}      & 81.2                   & 81.3                             & 81.3                             & ---                                                    & 81.2  \\
                \midrule
                \textbf{Total}          & \textbf{690}           & \textbf{753} \scriptsize(+9.2\%) & \textbf{744} \scriptsize(+7.9\%) & \multicolumn{2}{c}{\textbf{998} \scriptsize(+44.7\%)}          \\
                \bottomrule
            \end{tabular}
        \end{adjustbox}
    \end{subtable}
    \hfill
    \begin{subtable}[t]{0.49\textwidth}
        \centering
        \subcaption{\textsc{TinyLlama-1.1B}, \(n=2\), \(T=1024\).}
        \begin{adjustbox}{max width=\linewidth}
            \begin{tabular}{l ccc cc}
                \toprule
                                        & \textbf{Full-Training} & \textbf{Layer-Wise Subset}        & \textbf{Global Subset (1P)}       & \multicolumn{2}{c}{\textbf{Global Subset (2P)}}                \\
                \cmidrule(lr){5-6}
                \textbf{Component}      &                        &                                   &                                   & Score                                                  & Grad  \\
                \midrule
                \textbf{Forward}        & 84.0                   & 121.1                             & 121.0                             & 128.9                                                  & 50.2  \\
                \textbf{Backward}       & 242.8                  & 351.5                             & 343.8                             & 299.3                                                  & 136.9 \\
                \rowcolor{black!6}
                \quad \texttt{a.grad}   & 49.0                   & 67.8                              & 67.8                              & 67.8                                                   & 27.7  \\
                \rowcolor{black!6}
                \quad \texttt{scoring}  & ---                    & 23.3                              & 20.2                              & 20.0                                                   & ---   \\
                \rowcolor{black!6}
                \quad \texttt{w.grad}   & 47.6                   & 47.4                              & 43.6                              & ---                                                    & 26.1  \\
                \rowcolor{black!6}
                \quad \texttt{autograd} & 146.3                  & 213.0                             & 212.2                             & 211.4                                                  & 83.1  \\
                \textbf{Optimizer}      & 81.3                   & 81.2                              & 81.2                              & ---                                                    & 81.2  \\
                \midrule
                \textbf{Total}          & \textbf{408}           & \textbf{554} \scriptsize(+35.7\%) & \textbf{546} \scriptsize(+33.8\%) & \multicolumn{2}{c}{\textbf{697} \scriptsize(+70.7\%)}          \\
                \bottomrule
            \end{tabular}
        \end{adjustbox}
    \end{subtable}
    \\[1.5em]
    \begin{subtable}[t]{0.49\textwidth}
        \centering
        \subcaption{\textsc{Llama-3.2-3B}, \(n=8\), \(T=512\).}
        \begin{adjustbox}{max width=\linewidth}
            \begin{tabular}{l ccc cc}
                \toprule
                                        & \textbf{Full-Training} & \textbf{Layer-Wise Subset}        & \textbf{Global Subset (1P)}       & \multicolumn{2}{c}{\textbf{Global Subset (2P)}}                \\
                \cmidrule(lr){5-6}
                \textbf{Component}      &                        &                                   &                                   & Score                                                  & Grad  \\
                \midrule
                \textbf{Forward}        & 389.2                  & 437.4                             & 438.4                             & 437.9                                                  & 216.5 \\
                \textbf{Backward}       & 1233.5                 & 1279.1                            & 1267.8                            & 1012.7                                                 & 645.3 \\
                \rowcolor{black!6}
                \quad \texttt{a.grad}   & 269.4                  & 299.4                             & 299.3                             & 299.3                                                  & 136.0 \\
                \rowcolor{black!6}
                \quad \texttt{scoring}  & ---                    & 60.2                              & 54.7                              & 54.8                                                   & ---   \\
                \rowcolor{black!6}
                \quad \texttt{w.grad}   & 371.6                  & 254.9                             & 254.0                             & ---                                                    & 189.5 \\
                \rowcolor{black!6}
                \quad \texttt{autograd} & 592.5                  & 664.6                             & 659.7                             & 658.6                                                  & 319.8 \\
                \textbf{Optimizer}      & 236.8                  & 236.8                             & 236.8                             & ---                                                    & 236.8 \\
                \midrule
                \textbf{Total}          & \textbf{1860}          & \textbf{1953} \scriptsize(+5.0\%) & \textbf{1943} \scriptsize(+4.5\%) & \multicolumn{2}{c}{\textbf{2549} \scriptsize(+37.1\%)}         \\
                \bottomrule
            \end{tabular}
        \end{adjustbox}
    \end{subtable}
    \hfill
    \begin{subtable}[t]{0.49\textwidth}
        \centering
        \subcaption{\textsc{Llama-3.2-3B}, \(n=2\), \(T=1024\).}
        \begin{adjustbox}{max width=\linewidth}
            \begin{tabular}{l ccc cc}
                \toprule
                                        & \textbf{Full-Training} & \textbf{Layer-Wise Subset}         & \textbf{Global Subset (1P)}        & \multicolumn{2}{c}{\textbf{Global Subset (2P)}}                \\
                \cmidrule(lr){5-6}
                \textbf{Component}      &                        &                                    &                                    & Score                                                  & Grad  \\
                \midrule
                \textbf{Forward}        & 214.3                  & 303.2                              & 302.4                              & 302.5                                                  & 125.9 \\
                \textbf{Backward}       & 653.6                  & 829.3                              & 819.4                              & 693.2                                                  & 331.1 \\
                \rowcolor{black!6}
                \quad \texttt{a.grad}   & 136.0                  & 198.4                              & 198.3                              & 198.4                                                  & 75.0  \\
                \rowcolor{black!6}
                \quad \texttt{scoring}  & ---                    & 38.0                               & 33.8                               & 33.7                                                   & ---   \\
                \rowcolor{black!6}
                \quad \texttt{w.grad}   & 189.5                  & 125.6                              & 124.9                              & ---                                                    & 76.2  \\
                \rowcolor{black!6}
                \quad \texttt{autograd} & 328.1                  & 467.4                              & 462.3                              & 461.1                                                  & 179.8 \\
                \textbf{Optimizer}      & 236.8                  & 236.8                              & 236.8                              & ---                                                    & 236.8 \\
                \midrule
                \textbf{Total}          & \textbf{1105}          & \textbf{1369} \scriptsize(+24.0\%) & \textbf{1359} \scriptsize(+23.0\%) & \multicolumn{2}{c}{\textbf{1689} \scriptsize(+52.9\%)}         \\
                \bottomrule
            \end{tabular}
        \end{adjustbox}
    \end{subtable}
\end{table}

\begin{table}[htpb]
    \centering
    \caption{Peak GPU memory (GB) across three models and two batch configurations (\(m=1\), \(k=n/2\), compressed scoring) on a single \texttt{A40} GPU, with and without activation checkpointing (ckpt).}
    \label{adxtab:peak-memory}
    \begin{adjustbox}{max width=\linewidth}
        \begin{tabular}{ll cccc cccc}
            \toprule
                                   &                  & \multicolumn{4}{c}{\(n=8,\; T=512\)} & \multicolumn{4}{c}{\(n=2,\; T=1024\)}                                                                                                                         \\
            \cmidrule(lr){3-6}\cmidrule(lr){7-10}
            \textbf{Model}         & \textbf{Setting} & Full-Training                        & Layer-Wise Subset                     & Global Subset (1P) & Global Subset (2P) & Full-Training & Layer-Wise Subset & Global Subset (1P) & Global Subset (2P) \\
            \midrule
            \textsc{SmolLM2-360M}  & No ckpt          & 9.4                                  & 10.3                                  & 10.4               & 10.4               & 5.7           & 7.7               & 7.8                & 7.7                \\
                                   & With ckpt        & 4.6                                  & 4.9                                   & 7.5                & 4.9                & 3.4           & 4.0               & 6.0                & 4.0                \\
            \midrule
            \textsc{TinyLlama-1.1B} & No ckpt         & 15.0                                 & 16.2                                  & 16.2               & 16.2               & 10.6          & 13.0              & 13.0               & 13.0               \\
                                   & With ckpt        & 10.3                                 & 10.4                                  & 14.5               & 10.4               & 10.3          & 10.4              & 12.5               & 10.4               \\
            \midrule
            \textsc{Llama-3.2-3B}  & No ckpt          & 37.9                                 & 40.7                                  & 40.7               & 40.7               & 29.9          & 33.2              & 33.9               & 33.2               \\
                                   & With ckpt        & 29.9                                 & 30.1                                  & 38.1               & 30.1               & 29.9          & 30.1              & 33.9               & 30.1               \\
            \bottomrule
        \end{tabular}
    \end{adjustbox}
\end{table}

\end{document}